\documentclass[twoside,11pt]{article}

\usepackage{jmlr2e}

\usepackage{amsmath}
\usepackage{amssymb}
\usepackage{mathtools}
\usepackage{booktabs}
\usepackage{makecell}
\usepackage{tikz}
\usepackage{enumitem}
\usepackage{algorithm}
\usepackage{algorithmic}
\usepackage{etoc}
\usepackage{subfig}

\newtheorem{assumption}[theorem]{Assumption}

\newcommand{\R}{\mathbb{R}}

\newcommand{\dist}{\mathrm{dist}}

\newcommand\prox[1]{\mathrm{prox}_{#1}}

\newcommand\Fnor[1]{\mathrm{F}^{\gamma}_{\mathrm{nor}}(#1)}
\newcommand\bFnor[1]{\mathbf{F}^{\gamma}_{\mathrm{nor}}(#1)}
\newcommand{\Fnori}[2]{{\mathrm{F}}^{\gamma}_{#1,\mathrm{nor}}(#2)}

\newcommand\Fnat[1]{\mathrm{F}^{\gamma}_{\mathrm{nat}}(#1)}

\newcommand\dom[1]{\mathrm{dom}(#1)}

\newcommand{\order}[1]{\mathcal{O}\left(#1\right)}
\newcommand{\torder}[1]{\tilde{\mathcal{O}}\left(#1\right)}
\newcommand{\torderi}[1]{\tilde{\mathcal{O}}(#1)}
\newcommand{\prt}[1]{\left(#1\right)}
\newcommand{\brk}[1]{\left[#1\right]}
\newcommand{\crk}[1]{\left\{#1\right\}}

\newcommand{\orderi}[1]{\mathcal{O}(#1)}
\newcommand{\prti}[1]{(#1)}
\newcommand{\brki}[1]{[#1]}
\newcommand{\crki}[1]{\{#1\}}

\newcommand{\inpro}[1]{\left\langle #1 \right\rangle}
\newcommand{\inproi}[1]{\langle #1 \rangle}
\newcommand{\condE}[2]{\E\brk{#1\middle|#2}}
\newcommand{\condEi}[2]{\E[#1|#2]}
\newcommand{\norm}[1]{\left\Vert #1 \right\Vert}
\newcommand{\normi}[1]{\Vert #1 \Vert}
\newcommand{\bs}[1]{\boldsymbol{#1}}

\newcommand{\E}{\mathbb{E}}

\newcommand{\x}{\mathbf{x}}
\newcommand{\z}{\mathbf{z}}
\newcommand{\g}{\mathbf{g}}
\newcommand{\1}{\mathbf{1}}

\newcommand{\T}{\intercal}
\newcommand{\sumn}{\sum_{i=1}^n}

\newcommand{\y}{\mathbf{y}}

\newcommand{\cF}{\mathcal{F}}

\newcommand{\cL}{\mathcal{L}}

\newcommand{\cC}{\mathcal{C}}
\newcommand{\cH}{\mathcal{H}}

\newcommand{\bc}{\mathbf{c}}

\newcommand{\teta}{\hat{\eta}}

\newcommand{\ceil}[1]{\left\lceil#1\right\rceil}
\newcommand{\ceili}[1]{\lceil#1\rceil}

\newcommand{\tc}{\mathtt{C}}
\newcommand{\normfl}{FedNMap}
\newcommand{\Ln}{L_{\mathrm{F}}}
\newcommand{\tk}{\boldsymbol{t}}

\newcommand\proxp[1]{\mathrm{prox}_{\gamma\varphi}\left(#1\right)}
\newcommand\proxpi[1]{\mathrm{prox}_{\gamma\varphi}(#1)}

\definecolor{cuhkpl}{RGB}{152,24,147}

\usepackage{lastpage}
\jmlrheading{}{2026}{1-\pageref{LastPage}}{5/26}{}{}{Kun Huang, Shi Pu, and Karl Henrik Johansson}

\ShortHeadings{Achieving Linear Speedup for Composite FL}{Huang, Pu, and Johansson}
\firstpageno{1}

\begin{document}

\title{Achieving Linear Speedup for Composite Federated Learning}

\author{\name Kun Huang \email kunhuang@kth.se \\
       \addr Department of Decision and Control Systems\\
       School of Electrical Engineering and Computer Science\\
       Digital Futures\\
       KTH Royal Institute of Technology\\
       Stockholm, 10044, Sweden
       \AND
       \name Shi Pu \email pushi@cuhk.edu.cn \\
       \addr School of Data Science\\
       The Chinese University of Hong Kong, Shenzhen\\
       Guangdong, 518172, P.R. China\\
       \AND 
         \name Karl Henrik Johansson \email kallej@kth.se\\
         \addr 
         Department of Decision and Control Systems\\
         School of Electrical Engineering and Computer Science\\
         Digital Futures\\
        KTH Royal Institute of Technology\\
            Stockholm, 10044, Sweden
       }

\editor{}

\maketitle

\begin{abstract}
This paper proposes FedNMap, a normal map-based method for composite federated learning, where the objective consists of a smooth loss and a possibly nonsmooth regularizer. FedNMap leverages a normal map-based update scheme to handle the nonsmooth term and incorporates a local correction strategy to mitigate the impact of data heterogeneity across clients. Under standard assumptions, including smooth local losses, weak convexity of the regularizer, and bounded stochastic gradient variance, FedNMap achieves linear speedup with respect to both the number of clients and the number of local updates for nonconvex losses, both with and without the Polyak-{\L}ojasiewicz condition. To the best of our knowledge, this is the first algorithm establishing linear speedup for nonconvex composite federated learning. Numerical experiments corroborate our theoretical findings and demonstrate the linear speedup of FedNMap.
\end{abstract}

\begin{keywords}
  distributed optimization, nonsmooth optimization, stochastic optimization, federated learning, normal map
\end{keywords}

\section{Introduction}

Federated learning (FL) enables a set of clients to collaboratively train a global model, enhancing computational efficiency through parallel local training \citep{suresh2017distributed,li2020federated}. While the theory of FL algorithms for smooth objective functions is well developed, many applications involve a nonsmooth regularization term $\varphi:\R^p \rightarrow(-\infty, \infty]$. Typical examples include statistical learning with sparsity-inducing norms \citep{bao2022fast}, constrained optimization \citep{zhang2024nonconvex}, and model pruning \citep{kubler2025proximal}. 
These applications motivate the following distributed composite optimization problem:
\begin{equation}
    \label{eq:P}
    \min_{x\in \R^p} \psi(x):= f(x) + \varphi(x),\; f(x) := \frac{1}{n}\sumn f_i(x),
\end{equation}
where each client $i$ has access only to its local objective function $f_i:\R^p\rightarrow\R$ and the (possibly nonsmooth) regularizer $\varphi:\R^p\rightarrow (-\infty, \infty]$. 

In this work, we assume $\varphi$ is proper, lower semicontinuous, lower bounded, and $\rho$-weakly convex, and each $f_i$ is $L$-smooth on an open set containing $\crki{x\in\R^p: \varphi(x)<\infty}$. 
We further assume each client $i$ can query an unbiased stochastic gradient $g_i(x;\xi_i)$ ($\xi_i$ is a random variable) of $\nabla f_i(x)$ with bounded variance, and that the proximal operator 
\begin{align*}
  \proxp{x}:= \arg\min_{y\in\R^{p}} \varphi(y) + \frac{1}{2\gamma}\normi{y-x}^2,
\end{align*}
is computationally tractable for $\gamma>0$.

For smooth problems, e.g., $\varphi(x)\equiv0$, a line of work has established the \textit{linear speedup} property \citep{li2019convergence,huang2023distributed,khaled2020better}. Specifically, to reach a sufficiently small target accuracy, using $n$ clients and $Q$ local steps reduces the number of communication rounds by a factor of $\order{nQ}$. 
In contrast, achieving linear speedup for the composite FL problem \eqref{eq:P} remains largely underexplored. 
Although several recent works \citep{zhang2026non,zhang2024composite, bao2022fast,zhou2025fedcanon} propose proximal FL methods for solving \eqref{eq:P}, significant gaps remain. These methods either (i) focus only on convex or strongly convex regimes \citep{yuan2021federated,zhang2024composite,bao2022fast}, (ii) require additional restrictive conditions such as homogeneous objectives \citep{yuan2021federated} or bounded subgradients of $\varphi$ \citep{zhang2024composite,zhou2025fedcanon}, or (iii) fail to establish convergence results for general nonconvex composite objectives \citep{zhang2026non,zhou2025fedcanon}.
Consequently, it remains unclear whether linear speedup can be achieved for general composite FL under standard assumptions. This motivates the central question of our work:
\begin{center}
    \textit{Can we design an FL method that achieves linear speedup for solving the composite FL problem~\eqref{eq:P} under standard assumptions?}
\end{center}

In this paper, we give an affirmative answer to this question. A critical challenge in solving problem~\eqref{eq:P} arises from the nonlinearity of the proximal operator. In particular, the proximal stochastic gradient descent (Prox-SGD) update: $x_{t + 1} = \prox{\eta \varphi}(x_t - \eta g (x_t;\xi_t))$ \citep{ghadimi2016mini,davis2019stochastic} can introduce bias, since $\E\brk{x_{t + 1}|x_t} \neq \prox{\eta \varphi}(x_t - \eta \nabla f(x_t))$ even when the stochastic gradient is unbiased. This inconsistency becomes particularly pronounced for FL methods with $Q>1$ local steps. 
To address this challenge, we leverage a so-called normal map-based update scheme \citep{robinson1992normal,qiu2023normal} that preserves unbiasedness. 
The proposed method, denoted \normfl, integrates the normal map update scheme with a local correction term to mitigate the impact of data heterogeneity across clients. Under standard assumptions, we show that \normfl\ achieves linear speedup for solving problem~\eqref{eq:P} under nonconvex losses, both with and without the Polyak--{\L}ojasiewicz (PL) condition. Moreover, \normfl\ does not require any assumptions on data heterogeneity. 

\subsection{Related Works}

FL for minimizing smooth objective functions has been extensively studied \citep{mcmahan2017communication,li2019convergence,karimireddy2020scaffold,huang2023distributed,li2020fedprox,zhang2021fedpd}. FedAvg \citep{mcmahan2017communication,li2019convergence} pioneered this line of work, but relies on restrictive assumptions of limited data heterogeneity. To mitigate this issue, several improved algorithms have been proposed. For instance, FedProx \citep{li2020fedprox} introduces a proximal term into the local subproblem to stabilize training. FedPD \citep{zhang2021fedpd} employs a primal-dual approach to enhance convergence, and 
SCAFFOLD \citep{karimireddy2020scaffold} introduces two control variates to correct local updates.
More recent work \citep{huang2023distributed} establishes convergence of FedAvg without restrictive data heterogeneity assumptions and under a general variance condition.

Several recent works have studied composite FL.
FedDA \citep{yuan2021federated} employs dual averaging and establishes linear speedup for convex settings when the objective functions are homogeneous or quadratic.
More recently, EcontrolDA \citep{gao2025composite} achieves linear speedup for convex heterogeneous objective functions but does not incorporate multiple local steps.
For strongly convex objective functions, \citet{zhang2024composite} establish convergence without linear speedup but requires bounded subgradients of the nonsmooth term. 
For nonconvex objective functions, it remains unclear whether convergence can be achieved under standard assumptions \citep{zhang2026non,zhou2025fedcanon}. Table~\ref{tab:comp} presents a detailed comparison of related works and their conditions.

Normal map-based methods \citep{robinson1992normal} have been studied for solving composite problems, both in centralized settings \citep{qiu2025new,qiu2023normal,ouyang2025trust} and in distributed settings \citep{huang2024distributed}. Compared with classical proximal stochastic gradient descent (Prox-SGD) \citep{ghadimi2016mini,davis2019stochastic}, the normal map-based update scheme preserves unbiasedness when an unbiased stochastic gradient of the smooth component $f$ is available. Furthermore, the normal map serves as a favorable stationarity measure for composite problems and recovers several classical stationarity measures \citep{qiu2023normal}. 

\begin{table}[t]
\centering
\begin{tabular}{@{}ccccc@{}}
\toprule
Method                              & $f$ or $\psi$  & $\varphi$ & \makecell[c]{Additional \\ Assumptions}   & Convergence Rate                                                                           \\ \midrule
\makecell[c]{FedDA \\ \citep{yuan2021federated}} & $f$ CVX  & CVX       & $\zeta$, Quadratic & $\order{\sqrt{\frac{\sigma^2}{nQT}} + \frac{\zeta^{2/3} + \sigma^{2/3}/Q^{1/3}}{T^{2/3}}}$ \\
\makecell[c]{EControlDA \\ \citep{gao2025composite}}  & $f$ CVX  & CVX       & $Q=1$                   & $\order{\sqrt{\frac{\sigma^2}{nT}} + \frac{\sigma^{2/3}}{T^{2/3}} + \frac{1}{T}}$                 \\
\makecell[c]{Fast-FedDA \\ \citep{bao2022fast}}       & $f$ SCVX & CVX       & $\zeta$, BI                 & $\torder{\frac{\sigma^2}{T} + \frac{Q^2 (\zeta^2 + B^2)}{T^2} }$                                \\
\makecell[c]{Zhang \\ \citep{zhang2024composite}}     & $f$ SCVX & CVX       & $B_\psi$                & $\torder{\frac{\sigma^2 + B_\psi^2}{nT}}$                                                   \\
\makecell[c]{Zhang \\ \citep{zhang2026non}}           & $\psi$ PL & CVX       & $B_\psi$                & Not established                                                             \\
\makecell[c]{FedCanon \\ \citep{zhou2025fedcanon}}     & $\psi$ PL & WCVX    & $B_\psi$                & Not established                                                                \\ \midrule
{\makecell[c]{\textbf{\normfl} \\(This work)}}                           & \textbf{$f$ NCVX} & \textbf{WCVX}    & /                       & $\bs{\order{\sqrt{\frac{\sigma^2}{nQT}} + \frac{1}{T}}}$                                        \\
{\makecell[c]{\textbf{\normfl} \\ (This work)}}                           & \textbf{$\psi$ PL}   & \textbf{WCVX}    & /                       & $\bs{\torder{\frac{\sigma^2}{nQT} + \frac{\sigma^2}{T^2}}}$                                     \\ \bottomrule
\end{tabular}
\caption{Comparison of composite FL methods. Here, CVX, SCVX, NCVX, and WCVX denote convex, strongly convex, nonconvex, and weakly convex objectives, respectively. The heterogeneity measure $\zeta$ is defined by $\normi{\nabla f_i(x) - \nabla f(x)}^2 \leq \zeta^2$. ``Quadratic'' indicates quadratic objective functions. ``BI'' denotes bounded iterates assumption. The term $B_\psi$ indicates that the subgradient of $\psi$ is bounded by $B_\psi$. The notation $\torderi{\cdot}$ hides logarithmic factors.}
\label{tab:comp}
\end{table}

\subsection{Main Contribution}
In this paper, we make the following key contributions to composite federated learning:
\begin{itemize}[leftmargin=1.5em]
    \item \textit{Linear speedup for nonconvex composite FL.} We propose \normfl\ algorithm and show that it achieves linear speedup for nonconvex composite FL (Theorem~\ref{thm:ncvx}). Specifically, \normfl\ attains an $\varepsilon$-solution with communication complexity
    \begin{align*}
        \mathcal{O}&\prt{\frac{(L + \rho)\Delta_\psi\sigma^2}{nQ\varepsilon^4} + \frac{(L + \rho)\Delta_\psi}{\varepsilon^2}+ \frac{{\frac{1}{n}\sumn\normi{\nabla f_i(x_0)}^2 + L^2\normi{x_0-z_0}^2}}{\varepsilon^2}},
    \end{align*}
    where $\Delta_\psi := \psi(x_0) - \psi^*$, $x_0 = \proxpi{z_0}$, $\psi^* := \inf_{x\in\R^p} \psi(x)$, and $\sigma^2$ bounds the stochastic gradient variance.\footnote{Here, an $\varepsilon$-solution means that $\sum_{t=0}^{T-1}\E\brki{\normi{\gamma^{-1}\brk{x_t - \proxpi{x_t - \gamma\nabla f(x_t)}}}^2}/T\leq \varepsilon^2$, where $\gamma$ is the parameter of the proximal operator.} This result is comparable to previous results for smooth objectives with $\varphi(x)\equiv0$, including SCAFFOLD \citep{karimireddy2020scaffold}, and exhibits a $1/(nQ)$ dependence in the dominant term, demonstrating linear speedup with respect to both the number of clients $n$ and the number of local steps $Q$. By comparison, prior works have not established convergence results for the nonconvex composite setting (see Table \ref{tab:comp}). 
    \item \textit{Linear speedup under the PL condition.} 
    When $\psi$ further satisfies the PL condition with modulus $\mu$, \normfl\ attains an $\varepsilon$-solution, i.e., $\E\brk{\psi(x_T) - \psi^*}\leq \varepsilon$, (Theorem~\ref{thm:PL}) with communication complexity
    \begin{align*}
        \torder{ \frac{(L+\rho)\sigma^2}{n Q \varepsilon\mu^2} + \sqrt{\frac{L^2 \sigma^2}{\varepsilon \mu (L + \rho + \mu)^2}}}.
    \end{align*}
    The dominant term again scales as $1/(nQ)$, demonstrating linear speedup. In contrast, previous works assuming strongly convex objectives have not established such a result.
    \item \textit{Mild assumptions.} The convergence guarantees of \normfl\ only rely on the smoothness of $f_i$, weak convexity of $\varphi$, and bounded stochastic gradient variance. In contrast to several prior works, \normfl\ does not require limiting assumptions on data heterogeneity or bounded subgradient assumptions \citep{yuan2021federated,zhang2024composite,bao2022fast}. 
    \item \textit{Efficient communication.}
    Despite handling composite objectives, FedNMap maintains communication load and memory cost comparable to those of state-of-the-art FL methods by using single-variable uplink communication, where each client sends only one variable to the server, thereby reducing the communication cost by $50\%$ compared to SCAFFOLD.
\end{itemize}

\subsection{Notation and Assumptions}

All vectors are column vectors unless otherwise stated. Let $x_{i,t}^\ell \in\R^p$ denote the iterate of client $i$ at the $\ell$-th local update within the $t$-th communication round. 
We use $\normi{\cdot}$ to denote the Frobenius norm for matrices and the $\ell_2$ norm for vectors. The notation $\inproi{a, b}$ stands for the inner product of two vectors $a, b\in\R^{p}$. 
For a possibly nonsmooth function $\psi$, $\partial\psi(x)$ denotes the subdifferential set.

We next introduce the standing assumptions. Assumption~\ref{as:abc} requires each client $i$ to have access to an unbiased stochastic gradient of $f_i$ with bounded variance.

\begin{assumption}
    \label{as:abc}
    Each client $i$ has access to an unbiased stochastic gradient $g_i(x;\xi_i)$ of $\nabla f_i(x)$, i.e., $\condEi{g_i(x;\xi_{i})}{x} = \nabla f_i(x)$, and there exists $\sigma\geq 0$ such that for any $i\in[n]:=\crki{1,2,\ldots, n}$,
    \begin{align*}
        \condE{\norm{g_{i}(x;\xi_{i}) - \nabla f_i(x)}^2}{x}&\leq 
        \sigma^2.
    \end{align*} 
    In addition, the stochastic gradients are independent across clients at each $t\geq 0$ and $\ell = 0,1,\ldots, Q-1$. 
\end{assumption}
Assumption~\ref{as:abc} can be relaxed to the more general ABC condition \citep{khaled2020better,lei2019stochastic,huang2023distributed}. The results in this work can be extended to that setting by following procedures similar to those in \citep{huang2024distributed}.

Assumption \ref{as:smooth} is standard and requires each $f_i$ to be smooth and the objective function $\psi$ to be lower bounded.

\begin{assumption}
    \label{as:smooth}
    Each $f_i:\R^p\rightarrow\R$ is $L$-smooth on $\dom{\varphi}:=\crki{x\in\R^p: \varphi(x)<\infty}$, i.e., $\normi{\nabla f_i(x) - \nabla f_i(x')}\leq L\normi{x - x'}$, $\forall x,x'\in\dom{\varphi}.$
    In addition, 
    $\psi$ is bounded from below, i.e., $\psi(x)\geq \psi^*:= \inf_{x\in\R^p} \psi(x)>-\infty$ for any $x\in\R^p$.
\end{assumption}

Assumption~\ref{as:phi} requires the regularizer $\varphi$ to be weakly convex, which covers many commonly used regularizers such as the indicator function of a closed convex set and the $\ell_1$ norm \citep{davis2019stochastic}.
\begin{assumption}
    \label{as:phi}
    The function $\varphi: \R^{p} \rightarrow (-\infty, \infty]$ is $\rho$-weakly convex, lower semicontinuous, lower bounded, and proper. 
\end{assumption}

Assumption~\ref{as:PL} characterizes a generalized PL condition for the composite problem \eqref{eq:P} \citep{karimi2016linear}, which holds, for example, when $f$ satisfies the PL condition and $\varphi$ is convex. Condition \eqref{eq:PL} is also known as the proximal-PL inequality. Typical examples include Lasso regression \citep{karimi2016linear}.

\begin{assumption}
    \label{as:PL}
    There exists $\mu>0$ such that the function $\psi(x)$ satisfies
    \begin{align}
        \label{eq:PL}
        2\mu\prt{\psi(x)-\psi^*} \leq \brk{\dist\prt{0, \partial \psi(x)}}^2,
    \end{align}
    for all $x \in \R^p$, where $\psi^*= \inf_{x\in\R^p}\psi(x)$ and $\dist\prti{0, \partial\psi(x)} := \min_{v\in\partial\psi(x)}\normi{v}$.
\end{assumption}

\section{\normfl: Federated Learning with Normal Map-Based Updates}
\label{sec:pre_alg}

In this section, we introduce \normfl, a novel algorithm for solving the composite FL problem \eqref{eq:P}. 
The design of \normfl\ consists of two main components: (i) a normal map-based update that addresses the nonsmooth term $\varphi$, and (ii) a correction term that mitigates the drift induced by multiple local steps.

We start by defining the normal map
\begin{equation}
    \label{eq:norm_def}
    \Fnor{z}:= \nabla f(\proxp{z}) + \gamma^{-1}\prt{z - \proxp{z}}.
\end{equation}
Based on the second prox theorem \citep{beck2017first}, it holds that $\Fnor{z}\in\partial\psi(\proxp{z})$. 
By defining the auxiliary variable $x=\proxp{z}$ and letting $g(x;\xi)$ be an unbiased stochastic gradient of $\nabla f(x)$, we have 
\begin{align*}
  \E \crk{g(x;\xi) + \gamma^{-1}\prt{z - x}\middle| x,z} &= \nabla f(x) + \gamma^{-1}\prt{z - x}= \Fnor{z}.
\end{align*}
Therefore, the unbiasedness of the stochastic gradient is preserved, and we can leverage a corrected stochastic normal map update to perform local updates at each client.

\textit{Client update.}
At communication round $t\geq 1$, each client $i$ receives $z_t$ and the aggregated direction $\frac{1}{n}\sum_{j=1}^n y_{j,t-1}$ from the server. The client sets $x_{t} = \proxp{z_t}$ and initializes its local variable as $z_{i,t}^0 = z_t$. The correction term $c_{i,t}$ is updated according to
\begin{equation}
    c_{i, t } =\begin{cases}
        0,& t=0\\
         c_{i,t-1} - y_{i,t-1} + \frac{1}{n} \sum_{j=1}^ny_{j,t-1},& t\geq 1.\\
    \end{cases}\label{eq:cit}
\end{equation}
Client $i$ then performs $Q$ local updates using the corrected normal map for $\ell= 0,1,\ldots, Q-1$.
\begin{subequations}
    \label{eq:local}
    \begin{align}
    x_{i,t}^{\ell} &= \proxp{z_{i,t}^{\ell}} \label{eq:xitell},\\
        z_{i,t}^{\ell + 1} &= z_{i,t}^\ell - \eta_a\brk{g_i(x_{i,t}^\ell;\xi_{i,t}^\ell) + \gamma^{-1}\prt{z_t - x_t} +  c_{i,t}},\label{eq:zitell}
    \end{align}
\end{subequations}
where 
$\eta_a>0$ denotes the local learning rate. 

After completing the $Q$ local updates, client $i$ computes 
\begin{equation}
    y_{i,t} = \frac{1}{\eta_a Q}\prt{z_{t} - z_{i,t}^Q}\label{eq:yit},
\end{equation}
and sends it to the server.

\textit{Server update.}
The server aggregates the received messages $\{y_{i,t}\}_{i=1}^n$ and updates the global model as
\begin{equation}
    z_{t + 1} = z_t - \frac{Q\eta_s\eta_a}{n}\sumn y_{i,t},\; x_{t + 1} = \proxp{z_{t + 1}}. \label{eq:server}
\end{equation}

The complete procedure of \normfl\ is summarized in Algorithm~\ref{alg:normfl}. 

Notably, \normfl\ requires transmitting only a single variable $y_{i,t}$ from each client to the server per communication round, thereby reducing the uplink communication cost by $50\%$ compared to SCAFFOLD \citep{karimireddy2020scaffold}, which requires each client to transmit two variables to the server.

\begin{algorithm}[tb]
	\begin{algorithmic}[1]
		\STATE Initialize $z_0$, determine parameters $Q$, $\eta_a$, and $\eta_s$.
		\FOR{$t=0, 1, 2, \ldots, T-1$}
		\FOR{Client $i = 1, 2, \ldots, n$ in parallel}
        \IF{$t =0$}
        \STATE Receive $z_{t}$ from server. Set $c_{i,t}=0$.\label{line:ci0}
        \ELSE
        \STATE Receive $(z_{t}, \sumn y_{i,t-1}/n)$ from server. 
        \STATE Update $c_{i,t} = c_{i,t-1} - y_{i,t-1} + \sumn y_{i,t-1}/n$.\label{line:ct}
        \ENDIF
        \STATE Set $x_{t} = \proxpi{z_t}$, $z_{i,t}^0 = z_{t}$.
        \FOR{Local update $\ell = 0,1,\ldots,Q-1$}
        \STATE Calculate $x_{i,t}^{\ell} = \proxpi{z_{i,t}^{\ell}}$.
        \STATE Acquire $g_{i,t}^{\ell} = g_i(x_{i, t}^{\ell};\xi_{i, t}^{\ell})$.\label{line:normfl_g}
		\STATE Update $z_{i,t}^{\ell + 1} = z_{i,t}^\ell - \eta_a\brki{g_i(x_{i,t}^\ell;\xi_{i,t}^\ell) + \gamma^{-1}\prti{z_t - x_t} +  c_{i,t}}$. \label{line:normfl_local}
        \ENDFOR
        \STATE Update $y_{i,t} = (z_{t} - z_{i,t}^Q)/(\eta_aQ)$.\label{line:yt}
        \STATE Send $y_{i,t}$ to server.
        \ENDFOR
        \STATE \textbf{Server:} Update $z_{t + 1} = z_t - \frac{Q\eta_s\eta_a}{n}\sumn y_{i,t}$, $x_{t + 1} = \proxpi{z_{t + 1}}$.\label{line:normfl_server}
        \STATE Broadcast $(z_{t+1}, \sumn y_{i,t}/n)$ to all clients.
		\ENDFOR
        \STATE \textbf{Output:} $x_T$.
	\end{algorithmic}
	\caption{\normfl: Federated Learning with Normal Map-Based Update}
	\label{alg:normfl}
\end{algorithm}

\subsection{Comparison with Existing Algorithms}

We demonstrate the distinct mechanism of \normfl\ by comparing it with some existing FL methods. The key feature of \normfl\ is the use of the normal map-based update scheme together with a correction term that compensates for the drift induced by multiple local updates, thereby ensuring that the client update directions $y_{i,t}$ track the global stochastic
normal map. 

To see this, from \eqref{eq:yit} and the update rule of the correction term, we obtain that
\begin{equation}
    \label{eq:yitp1}
    \begin{aligned}
        y_{i,t + 1} 
        &= \frac{1}{Q}\sum_{\ell=0}^{Q-1}\brk{g_{i,t + 1}^\ell + \gamma^{-1}\prt{z_{t + 1} - x_{t + 1}}} + c_{i,t + 1}\\
        &= \frac{1}{Q}\sum_{\ell=0}^{Q-1}\brk{g_{i,t + 1}^\ell + \gamma^{-1}\prt{z_{t + 1} - x_{t + 1}}} + c_{i,t} - y_{i,t} + \frac{1}{n}\sumn y_{i,t}\\
        &= \frac{1}{n}\sumn y_{i,t} + \frac{1}{Q}\sum_{\ell=0}^{Q-1}\brk{g_{i,t + 1}^\ell + \gamma^{-1}\prt{z_{t + 1} - x_{t + 1}}} - \frac{1}{Q}\sum_{\ell=0}^{Q-1}\brk{g_{i,t}^\ell + \gamma^{-1}\prt{z_{t} - x_{t}}},
    \end{aligned}
\end{equation}
where $g_{i,t}^\ell= g_i(x_{i,t}^\ell;\xi_{i,t}^\ell)$.
This recursion implies that each $y_{i,t}$ tracks the global stochastic normal map. Indeed, summing \eqref{eq:yitp1} over $i$ yields for any $t\geq 0$ that 
\begin{equation}
    \label{eq:tracking}
    \frac{1}{n}\sumn y_{i,t} = \frac{1}{Q} \sum_{\ell=0}^{Q-1}\brk{\frac{1}{n}\sumn g_{i,t}^\ell + \gamma^{-1}\prt{z_{t} - x_{t}}}.
\end{equation}

\textit{Comparison with \citep{zhang2024composite}.}
In contrast to \normfl, the method in \citep{zhang2024composite} employs an update direction $y_{i,t}^{\rm (Z)}$ that tracks the global stochastic gradient:
\begin{equation}
        \label{eq:zhang_tracking}
        \begin{aligned}
            y_{i,t + 1}^{\rm (Z)} &= \frac{1}{n}\sumn y_{i,t}^{\rm (Z)} + \frac{1}{Q}\sum_{\ell=0}^{Q-1}g_{i,t+1}^\ell - \frac{1}{Q}\sum_{\ell=0}^{Q-1}g_{i,t}^\ell.
        \end{aligned}
    \end{equation}
Consequently, the server update in \citep{zhang2024composite}, given by 
\begin{equation}
    \label{eq:zhang_normal_map}
    \begin{aligned}
        z_{t + 1} &= z_t - \frac{\teta}{Q} \sum_{\ell=0}^{Q-1} \brk{\frac{1}{n}\sumn g_i(x_{i,t}^\ell;\xi_{i,t}^\ell) + \frac{{z_t - x_t}}{\teta}},\\
        x_{t + 1} &= \proxp{z_{t + 1}},
    \end{aligned}
\end{equation}
essentially performs an approximate stochastic normal map update with $\gamma = \teta := \eta_a\eta_s Q$. 
In contrast, the server update in \normfl\ can be viewed as an approximate stochastic normal map step:
\begin{equation}
    \label{eq:normfl_server}
    \begin{aligned}
        z_{t + 1}&= z_t - \frac{\teta}{Q} \sum_{\ell=0}^{Q-1} \brk{\frac{1}{n}\sumn g_i(x_{i,t}^\ell;\xi_{i,t}^\ell) + \frac{{z_t - x_t}}{\gamma}}.
    \end{aligned}
\end{equation}
Unlike \eqref{eq:zhang_normal_map}, update \eqref{eq:normfl_server} decouples the proximal parameter $\gamma$ from the stepsize, allowing for a more flexible choice of parameters.

\textit{Comparison with SCAFFOLD \citep{karimireddy2020scaffold}.}
The update direction $y_{i,t}^{\rm (S)}$ in SCAFFOLD \citep{karimireddy2020scaffold} follows 
\begin{equation}
    \label{eq:dit_scaffold}
    \begin{aligned}
        y_{i,t + 1}^{\rm (S)} &= \frac{1}{n}\sumn y_{i,t}^{\rm (S)} + \frac{1}{Q}\sum_{\ell=0}^{Q-1}g_{i, t + 1}^{\ell} - \frac{1}{Q}\sum_{\ell=0}^{Q-1}g_{i, t}^\ell,
    \end{aligned}
\end{equation}
which coincides with \normfl\ when $\varphi\equiv0$.
Hence \normfl\ can be viewed as a generalization of SCAFFOLD to composite FL, achieved by incorporating the normal map based update scheme, while simultaneously reducing uplink communication costs.

\section{Convergence Results}

In this section, we establish convergence guarantees for \normfl. 
We analyze the algorithm under two settings: general nonconvex composite objective functions and the PL condition. In both cases, we show that \normfl\ achieves linear speedup with respect to both the number of clients $n$ and the number of local steps $Q$. 

Theorem~\ref{thm:ncvx} establishes the convergence of \normfl\ for general nonconvex composite objective functions. 

\begin{theorem}
    \label{thm:ncvx}
    Let Assumptions \ref{as:abc}--\ref{as:phi} hold. Denote $\Delta_\psi:= \psi(x_0) - \psi^*$. Set 
    \begin{align*}
    \gamma\leq \frac{1}{5(\rho + L)},\; \teta\leq \frac{1-\gamma\rho}{100m\sqrt{L^2 +1/\gamma^2}},\; \eta_a \leq \frac{1-\gamma\rho}{100Q \sqrt{m(L^2 + 1/\gamma^2)}}.
    \end{align*}
    Then, the iterates generated by \normfl\ satisfy
    \begin{equation}
        \label{eq:ncvx}
        \begin{aligned}
            &\frac{1}{T}\sum_{t=0}^{T-1}\E\brk{\norm{\Fnor{z_t}}^2} \leq \frac{27\Delta_\psi}{\teta T} + \frac{27\gamma\cC_0\norm{\Fnor{z_0}}^2}{2\teta T} + \frac{330\sigma^2 }{mnQ } +\frac{110000\eta_a^2 Q L^2 \sigma^2}{(1-\gamma\rho)^2} \\
        &\quad  + \frac{2 L^2}{(1-\gamma\rho)^2T}\norm{\proxp{z_0} - z_0}^2 + \frac{2 }{n(1-\gamma\rho)^2T}\sumn\norm{\nabla f_i(\proxp{z_0})}^2,
        \end{aligned}
    \end{equation}

    In particular, if we set $\gamma = 1/[5(\rho + L)]$, $m = \ceili{\sqrt{\sigma^2 T/[9(\rho + L)\Delta_\psi n Q]}}$,
    \begin{equation}
        \label{eq:ncvx_teta}
        \begin{aligned}
            \teta &= \frac{1}{320 \sqrt{\frac{\sigma^2 T(\rho + L)}{nQ \Delta_\psi}}},\;
            \eta_a = \frac{1}{380\prt{\frac{\sigma^2TQ^3(L+\rho)^3}{n\Delta_\psi}}^{1/4} + 240\sqrt{\frac{ (L+\rho)TQ\sigma^2}{\Delta_\psi}}},
        \end{aligned}
    \end{equation}
    then 
    \begin{equation}
        \label{eq:ncvx_order}
        \begin{aligned}
            &\frac{1}{T}\sum_{t=0}^{T-1} \E\brk{\norm{\Fnor{z_t}}^2} = \order{ \sqrt{\frac{(L + \rho)\Delta_\psi \sigma^2}{nQT}} + \frac{(L + \rho){\Delta_\psi }}{T}\right.\\
            &\left.\quad + \frac{\frac{1}{n}\sumn\normi{\nabla f_i(\proxpi{z_0})}^2 + L^2\normi{\proxpi{z_0}-z_0}^2}{T}}.
        \end{aligned}
    \end{equation}
\end{theorem}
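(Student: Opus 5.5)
The proof will run a Lyapunov argument on the server recursion \eqref{eq:normfl_server}, treated as a perturbed stochastic normal-map step. Write
\[
z_{t+1} = z_t - \teta\bigl(\Fnor{z_t} + e_t + w_t\bigr),
\]
where $w_t := \frac{1}{nQ}\sum_{i,\ell}\bigl(g_{i,t}^\ell - \nabla f_i(x_{i,t}^\ell)\bigr)$ is a martingale-difference noise with $\E\normi{w_t}^2\le \sigma^2/(nQ)$. The drift term is $e_t := \frac{1}{nQ}\sum_{i,\ell}\bigl(\nabla f_i(x_{i,t}^\ell) - \nabla f_i(x_t)\bigr)$, and by $L$-smoothness and nonexpansiveness-type bounds on the prox (for $\rho$-weakly convex $\varphi$, $\proxp{\cdot}$ is $(1-\gamma\rho)^{-1}$-Lipschitz) it satisfies $\normi{e_t}^2\le \frac{L^2}{(1-\gamma\rho)^2}\frac{1}{nQ}\sum_{i,\ell}\normi{z_{i,t}^\ell - z_t}^2$.

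I would use the merit function of \citep{qiu2023normal}, $\mathcal{H}(z) := \psi(\proxp{z}) + \frac{\gamma}{2}\normi{\Fnor{z}}^2$ (the coefficient $\gamma\cC_0/2$ in \eqref{eq:ncvx} suggests exactly this form with some constant $\cC_0$). The first key step is a descent lemma for this merit function.
\begin{enumerate}
\item Use $\Fnor{z}\in\partial\psi(\proxp{z})$, weak convexity of $\varphi$, and $L$-smoothness of $f$.
\item Together with the Lipschitz bound on $z\mapsto\Fnor{z}$ (constant of order $\sqrt{L^2+1/\gamma^2}/(1-\gamma\rho)$), show that for $\gamma\le 1/[5(\rho+L)]$ and small $\teta$,
\[
\E\mathcal{H}(z_{t+1}) \le \E\mathcal{H}(z_t) - c\,\teta\,\E\normi{\Fnor{z_t}}^2 + C\teta\,\E\normi{e_t}^2 + C'\teta^2\bigl(L^2+\gamma^{-2}\bigr)\frac{\sigma^2}{nQ}.
\]
\end{enumerate}
Unbiasedness of the normal-map direction is what kills the cross term $\inproi{\Fnor{z_t},w_t}$. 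The parameter $m$ enters via the constraint $\teta\lesssim 1/m$, so that the noise term becomes $\teta\sigma^2/(mnQ)$ after dividing. I expect this step to be technically delicate, since $\psi\circ\mathrm{prox}$ is not smooth, but it mirrors the centralized normal-map analysis.

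The second step, which I expect to be the main obstacle, is bounding the client drift $\frac{1}{nQ}\sum_{i,\ell}\E\normi{z_{i,t}^\ell-z_t}^2$. Unrolling \eqref{eq:zitell}, $z_{i,t}^\ell - z_t = -\eta_a\sum_{k<\ell}\bigl[g_{i,t}^k + \gamma^{-1}(z_t-x_t) + c_{i,t}\bigr]$. By \eqref{eq:yitp1}, the quantity $\gamma^{-1}(z_t-x_t)+c_{i,t}$ equals $\frac1n\sum_j y_{j,t-1} - \frac1Q\sum_k g_{i,t-1}^k$ plus normal-map terms. The local direction is therefore close to $\Fnor{z_t}$ up to three contributions:
\begin{enumerate}
\item the noise;
\item the local drift $\normi{x_{i,t}^k - x_t}$;
\item a tracking error $\Xi_t := \frac1n\sum_i\normi{\nabla f_i(x_t) + c_{i,t} - \nabla f(x_t)}^2$.
\end{enumerate}
This gives
\[
\text{drift}_t \lesssim \eta_a^2Q^2\bigl(\normi{\Fnor{z_t}}^2 + \Xi_t\bigr) + \eta_a^2 Q\sigma^2,
\]
valid once $\eta_aQ\sqrt{L^2+\gamma^{-2}}/(1-\gamma\rho)$ is small. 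I would then derive a recursion of the form $\Xi_{t+1} \le \tfrac12\Xi_t + C\bigl(L^2\normi{x_{t+1}-x_t}^2 + \text{drift}_t\bigr) + \text{noise}$, in the style of the SCAFFOLD analysis. Its initial value $\Xi_0=\frac1n\sum_i\normi{\nabla f_i(x_0)-\nabla f(x_0)}^2$ (recall $c_{i,0}=0$), together with the $z_0-x_0$ term, produces the last two terms in \eqref{eq:ncvx}. The difficulty is closing this coupled system without heterogeneity assumptions; the condition $\eta_a\le (1-\gamma\rho)/[100Q\sqrt{m(L^2+\gamma^{-2})}]$ is designed for this, and the factor $\eta_a^2QL^2\sigma^2$ is what survives.

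Finally, I would add a multiple of $\sum_t\Xi_t$ to $\mathcal{H}$, telescope over $t=0,\dots,T-1$, and use $\mathcal{H}(z_0)-\psi^*\le\Delta_\psi+\frac\gamma2\normi{\Fnor{z_0}}^2$ and $\mathcal{H}\ge\psi^*$. Dividing by $c\teta T$ yields \eqref{eq:ncvx}. For \eqref{eq:ncvx_order}, I would substitute the stated $\gamma,m,\teta,\eta_a$ and check the following:
\begin{enumerate}
\item the step-size constraints hold, possibly after taking minima with the constant-order caps;
\item $\Delta_\psi/(\teta T)$ and $\sigma^2/(mnQ)$ both become $\mathcal{O}\bigl(\sqrt{(L+\rho)\Delta_\psi\sigma^2/(nQT)}\bigr)$;
\item the $\eta_a^2QL^2\sigma^2$ term is dominated by the same quantity or by $(L+\rho)\Delta_\psi/T$;
\item $\gamma\normi{\Fnor{z_0}}^2/(\teta T)$ is bounded by the initial-gradient terms using $\normi{\Fnor{z_0}}^2\lesssim\frac1n\sum_i\normi{\nabla f_i(x_0)}^2+\gamma^{-2}\normi{x_0-z_0}^2$.
\end{enumerate}
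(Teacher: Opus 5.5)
\paragraph{Verdict.} There is a genuine gap. Your first key step, the one-round descent lemma for $\mathcal{H}$ with only $O(\teta^2)\sigma^2$ noise, is false, and the rest of the argument rests on it.

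\paragraph{Why the one-round lemma fails.} Unbiasedness makes the $z$-increment unbiased, but the merit function sees the noise through $x_{t+1}=\proxp{z_t-\teta(\Fnor{z_t}+w_t)}$. The cross term that actually appears is $\inproi{\Fnor{z_t},x_{t+1}-x_t}$, not $\inproi{\Fnor{z_t},w_t}$, and it is nonlinear in $w_t$. Moreover, $\mathcal{H}$ has kinks wherever the prox does, so zero-mean noise produces a Jensen gap that is first order in $\teta$.

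\paragraph{Counterexample.} Take $n=Q=1$, $\varphi=\iota_{[0,\infty)}$, $f(x)=\frac{L}{2}x^2+bx$ with $b>0$, noise $w_t\sim\mathcal{N}(0,\sigma^2)$, and $z_t=0$. Let $\mathcal{H}=\psi(\proxp{\cdot})+\frac{\tau}{2}\normi{\Fnor{\cdot}}^2$ with any $\tau\in(0,\gamma]$; this covers both your $\gamma/2$ and the paper's $\gamma\cC_0/2$.
\begin{itemize}
\item Here $x_t=0$, $\Fnor{0}=b$, the drift $e_t$ vanishes, and $z_{t+1}=-\teta(b+w_t)$.
\item $\mathcal{H}$ is a convex quadratic on each side of $0$, with one-sided slopes $\tau b/\gamma$ (left) and $b(1+\tau L)$ (right).
\item Hence $\E\mathcal{H}(z_{t+1})-\mathcal{H}(0)\ge \teta b\bigl[(1+\tau L-\tau/\gamma)\,\E(b+w_t)^{-}-\tau b/\gamma\bigr]$, where $s^{-}:=\max\{-s,0\}$.
\item Since $1+\tau L-\tau/\gamma\ge\tau L$ and $\E(b+w_t)^{-}\ge\sigma/\sqrt{2\pi}-b$, this is a positive multiple of $\teta$ once $\sigma\gg b/(\gamma L)$.
\end{itemize}
Your bound instead gives $-c\teta b^2+O(\teta^2)$, which is negative for small $\teta$, a contradiction. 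What a one-round analysis can honestly give is a noise cost of order $\teta\sigma^2/(nQ)$ per round; this is Lemma~\ref{lem:cH_descent} with $m=1$. After telescoping, that leaves a non-vanishing floor of order $\sigma^2/(nQ)$: no convergence, let alone linear speedup.

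\paragraph{The missing idea: the multistep window analysis.} This is also the real role of $m$; it is not merely a stepsize cap. The paper proceeds as follows.
\begin{itemize}
\item It writes $z_{t_1+m}=z_{t_1}-\teta m\Fnor{z_{t_1}}+e_{t_1:t_2}$.
\item It controls the nonsmooth cross term by monotonicity of the prox (Lemma~\ref{lem:prox_wcvx}) plus Young's inequality, obtaining $\cH_{t_2}\le\cH_{t_1}-\frac{\cC_0\teta m}{2}\normi{\Fnor{z_{t_1}}}^2+\frac{1}{\teta m}\normi{e_{t_1:t_2}}^2$.
\item The noise in $e_{t_1:t_2}$ has second moment $O(\teta^2 m\sigma^2/(nQ))$. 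So the noise cost per window is $O(\teta\sigma^2/(nQ))$ against a descent of order $\teta m\normi{\Fnor{z_{t_1}}}^2$.
\end{itemize}
That is exactly the source of the $330\sigma^2/(mnQ)$ term, and the reason $m\asymp\sqrt{\sigma^2T/[(L+\rho)\Delta_\psi nQ]}$ is needed.

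\paragraph{Consequences for the drift control.} The window structure forces the client-drift bounds to be done over windows too. The paper does this in Lemmas~\ref{lem:sum_ub} and \ref{lem:zt2ell_zt2}, with $\sum_{i,\ell}\normi{z_{i,t}^\ell-z_t}^2$ placed inside $\cL_t$. A final step then passes from window starts back to $\frac{1}{T}\sum_t$ via Lipschitz continuity of $\Fnor{\cdot}$ (Lemma~\ref{lem:Fnor_Lip}). Your SCAFFOLD-style tracking recursion might be adaptable to this windowed setting, but it cannot repair the one-round descent.

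\paragraph{A secondary issue in your item 4.} With $\gamma=1/[5(\rho+L)]$ and $\teta\propto T^{-1/2}$, the term $\gamma\normi{\Fnor{z_0}}^2/(\teta T)$ decays only like $T^{-1/2}$. It therefore cannot be absorbed into the $1/T$ initial-gradient terms. The paper's own substitution step passes over this term as well.
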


\begin{proof}
    See Appendix~\ref{app:ncvx}.
\end{proof}

\begin{remark}
    \label{rem:ncvx_iter}
    It can be shown (e.g., \citealt{qiu2025new}) that
    \begin{equation}
    \label{eq:nat_nor}
    \begin{aligned}
        (1-\gamma\rho)\norm{\Fnat{\proxp{z}}}&\leq \dist(0, \partial \psi(\proxp{z})) \leq \norm{\Fnor{z}},\; \forall z\in\R^p
    \end{aligned}
    \end{equation}
    where $\Fnat{x}:= \gamma^{-1}\brk{x - \proxpi{x - \gamma \nabla f(x)}}$. Consequently, Theorem~\ref{thm:ncvx} implies that the communication complexity of \normfl\ to achieve an $\varepsilon$-stationary point 
    is given by
        \begin{align*}
          \mathcal{O}&\prt{\frac{(L + \rho)\Delta_\psi\sigma^2}{nQ\varepsilon^4} + \frac{(L + \rho)\Delta_\psi}{\varepsilon^2} + \frac{{\frac{1}{n}\sumn\normi{\nabla f_i(\proxpi{z_0})}^2 + L^2\normi{\proxpi{z_0}-z_0}^2}}{\varepsilon^2}}.
        \end{align*}
    When the desired accuracy is small enough, 
    the communication complexity of \normfl\ simplifies to
    \begin{align*}
        \order{\frac{(L+\rho)\Delta_\psi \sigma^2}{nQ\varepsilon^4}},
    \end{align*}
    which highlights the linear speedup property of \normfl\ for minimizing nonconvex composite objective functions.
\end{remark}

    \begin{remark}
        When $\varphi\equiv 0$, i.e., the smooth case, the communication complexity of \normfl\ reduces to 
    \begin{align*}
        \order{\frac{L\Delta_f\sigma^2}{nQ\varepsilon^4} + \frac{L\Delta_f}{\varepsilon^2} + \frac{\frac{1}{n}\sumn\normi{\nabla f_i(x_0)}^2}{\varepsilon^2}},\; \Delta_f := f(x_0) - f^*,
    \end{align*}
    which matches that of SCAFFOLD \citep{karimireddy2020scaffold} for smooth problems, but with half the communication cost. The above communication complexity is also better than that of FedAvg \citep{karimireddy2020scaffold} for minimizing smooth objective functions.
    \end{remark}


Theorem~\ref{thm:PL} establishes the convergence of \normfl\ when the overall function $\psi$ satisfies the PL condition.

\begin{theorem}
    \label{thm:PL} 
    Let Assumptions \ref{as:abc}--\ref{as:PL} hold. Set $\teta \leq 1/[120m(L + \rho + \mu)]$, $\eta_a\leq 1/[96Q(L + \rho + \mu)]$, and $\gamma\leq 1/[5(\rho + L + \mu)]$. Then, the iterates generated by \normfl\ satisfy
    \begin{equation}
        \label{eq:linear}
        \begin{aligned}
            &\E\brk{\psi(x_T)} - \psi^* 
            \leq \exp\prt{-\frac{2\teta\mu T }{9(1+\gamma\mu\cC_0)}}\E\brk{\cL_0}  +  \frac{36 \sigma^2}{mn\mu Q} + \frac{12100 \eta_a^2 Q L^2 \sigma^2}{ \mu (1-\gamma\rho)^2},
        \end{aligned}
    \end{equation}
    where $\E\brki{\cL_0}$ is a constant. 
    In particular, if we set $\gamma = 1/[5(\mu + L + \rho)]$, $\eta_s>0$, and 
    \begin{equation}
        \label{eq:PL_params}
        \begin{aligned}
            \eta_a &= \frac{\log(nQT)}{120 Q(L + \rho + \mu) T},\; m = \ceil{\frac{\mu T}{L + \rho}},\; \teta = \eta_a \eta_s Q,
        \end{aligned}
    \end{equation}
    then 
    \begin{equation}
        \label{eq:pl_order}
        \begin{aligned}
            \E\brk{\psi(x_T) - \psi^*} &= \torder{\frac{(L + \rho)\sigma^2}{nQT\mu^2} + \frac{L^2\sigma^2}{T^2\mu (L + \rho + \mu)^2}}.
        \end{aligned}
    \end{equation}
\end{theorem}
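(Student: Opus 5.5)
Our plan is to reuse the one-round descent inequality and the client-drift / tracking-error bounds that drive the proof of Theorem~\ref{thm:ncvx}, and to close them with the PL inequality \eqref{eq:PL} instead of telescoping. The unnamed constants $m$ and $\cC_0$ appear in both theorems, so we take them to play the same role here: $m$ should be the weight that couples the descent term with the tracking error $\frac{1}{n}\sumn\normi{y_{i,t}-\frac1n\sum_j y_{j,t}}^2$, and $\cC_0$ the coefficient of the initial normal map in the Lyapunov function. The Lyapunov function will be a merit function of normal-map type,
\begin{align*}
\cL_t := \psi(x_t) + \frac{\gamma}{2}\cC\norm{\Fnor{z_t}}^2 + \frac{\teta}{m}\cdot\frac{c'}{n}\sumn\norm{y_{i,t}-\tfrac1n\textstyle\sum_j y_{j,t}}^2 - \psi^*,
\end{align*}
with $x_t=\proxp{z_t}$. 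This choice is motivated by the weak convexity of $\varphi$ with $\gamma\rho<1$: $\psi(\proxp{z})+\frac{\gamma}{2}\normi{\Fnor{z}}^2$ is the standard merit function for normal map methods (cf.\ \citealt{qiu2023normal}), and it decreases along deterministic normal map steps.

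The first step is to obtain a per-round recursion, in the same form as in the proof of Theorem~\ref{thm:ncvx},
\begin{align*}
\E[\cL_{t+1}] \le \E[\cL_t] - \frac{\teta}{c_1}\E\norm{\Fnor{z_t}}^2 + \frac{c_2\teta^2\sigma^2}{mnQ}\cdot(\ldots) + c_3\teta\,\eta_a^2 Q L^2\sigma^2/(1-\gamma\rho)^2 .
\end{align*}
The ingredients for this recursion are the following.
\begin{enumerate}
\item The server step \eqref{eq:normfl_server}, combined with the tracking identity \eqref{eq:tracking}. This shows that $z_{t+1}-z_t$ equals $-\teta$ times $\Fnor{z_t}$, up to a martingale noise term and a local-drift error of size $L\frac{1}{nQ}\sum_{i,\ell}\normi{x_{i,t}^\ell-x_t}$.
\item A local-drift lemma. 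It bounds $\E\normi{z_{i,t}^\ell-z_t}^2$ by $\eta_a^2Q^2$ times the sum of $\normi{\Fnor{z_t}}^2$, the tracking error and $\sigma^2$. It uses the nonexpansiveness of the prox in the form $\normi{\proxp{a}-\proxp{b}}\le(1-\gamma\rho)^{-1}\normi{a-b}$, which explains the factors $(1-\gamma\rho)^{-2}$.
\item A contraction of the tracking error, obtained from \eqref{eq:yitp1}.
\end{enumerate}
The stepsize conditions $\teta\le 1/[120m(L+\rho+\mu)]$ and $\eta_a\le1/[96Q(L+\rho+\mu)]$ are chosen so that all the cross terms are absorbed.

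The second step is where PL enters, and it is the main difference from the nonconvex proof. By \eqref{eq:nat_nor} and Assumption~\ref{as:PL},
\begin{align*}
\norm{\Fnor{z_t}}^2 \ge \dist(0,\partial\psi(x_t))^2 \ge 2\mu(\psi(x_t)-\psi^*).
\end{align*}
We split the descent term as $\frac{\teta}{c_1}\normi{\Fnor{z_t}}^2 = \frac{\teta}{c_1}(\theta+(1-\theta))\normi{\Fnor{z_t}}^2$. We then use one part to dominate $\mu(\psi(x_t)-\psi^*)$, and the other part to dominate $\frac{\gamma\mu\cC}{2}\normi{\Fnor{z_t}}^2$ and the tracking term. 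The tracking term already has its own geometric contraction, and $\gamma\le1/[5(L+\rho+\mu)]$ makes these terms compatible. This balancing produces a contraction factor $1-\frac{2\teta\mu}{9(1+\gamma\mu\cC_0)}$ on $\E\cL_t$. Unrolling the recursion and bounding $(1-a)^T\le e^{-aT}$ gives the exponential term. Summing the geometric series of the noise terms divides them by $\teta\mu$, which yields $\frac{36\sigma^2}{mn\mu Q}$ and $\frac{12100\eta_a^2QL^2\sigma^2}{\mu(1-\gamma\rho)^2}$. Finally, $\psi(x_T)-\psi^*\le\cL_T$.

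The final step is parameter substitution. With $\eta_a = \log(nQT)/[120Q(L+\rho+\mu)T]$ and $m=\ceili{\mu T/(L+\rho)}$, we get $\teta\mu T$ of order $\log(nQT)$, so the exponential term becomes $\order{1/(nQT)}$. The term $\sigma^2/(mn\mu Q)$ becomes $(L+\rho)\sigma^2/(nQT\mu^2)$, and the $\eta_a^2$ term becomes $\tilde{\mathcal O}(L^2\sigma^2/(T^2\mu(L+\rho+\mu)^2))$ because $\gamma\rho\le1/5$. This gives \eqref{eq:pl_order}.

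We expect the main obstacle to be the first step, in particular showing that the term $\frac{\gamma}{2}\cC\normi{\Fnor{z}}^2$ in $\cL$ actually decreases. This requires a one-step expansion of $\normi{\Fnor{z_{t+1}}}^2-\normi{\Fnor{z_t}}^2$ using the $(L+\gamma^{-1}(1-\gamma\rho)^{-1})$-Lipschitz continuity of $\Fnor{\cdot}$ and the weak convexity of $\varphi$. It is also the only place where $\gamma\rho<1$ enters the argument essentially. We will try to import this expansion from the nonconvex analysis, inserting PL only at the end.
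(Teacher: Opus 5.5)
Your PL step ($\normi{\Fnor{z_t}}^2\ge 2\mu(\psi(x_t)-\psi^*)$, which produces the factor $1/(1+\gamma\mu\cC_0)$) and the geometric unrolling match the paper. There is, however, a genuine gap, and it comes from your reading of $m$.

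In the paper, $m$ is not a weight coupling the descent to a tracking error. It is the length of an analysis window. The Lyapunov function $\cL_t$ is compared between rounds $t_1$ and $t_2=t_1+m$; its third term is the local drift $\sum_{\ell,i}\normi{z_{i,t}^\ell-z_t}^2$, not $\normi{y_{i,t}-\bar y_t}^2$. The PL contraction $1-\frac{2\teta m\mu}{9(1+\gamma\mu\cC_0)}$ holds per block of $m$ rounds and is unrolled over $\tk_j=jm$ plus a remainder block.

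This window is the only source of the $1/m$ in $\frac{36\sigma^2}{mn\mu Q}$, and a per-round recursion cannot reproduce it. The increment $x_{t_2}-x_{t_1}=\proxp{z_{t_2}}-\proxp{z_{t_1}}$ depends nonlinearly on the gradient noise. Hence the cross term $\inproi{e_{t_1:t_2},x_{t_2}-x_{t_1}}$ in the expansion of $\cH_t=\psi(x_t)+\frac{\gamma\cC_0}{2}\normi{\Fnor{z_t}}^2$ does not vanish in expectation. It is handled by Young's inequality, which leaves $\frac{1}{\teta m}\normi{e_{t_1:t_2}}^2$ in \eqref{eq:cH}.

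Because $e_{t_1:t_2}$ accumulates martingale noise over $m$ rounds, $\E\normi{e_{t_1:t_2}}^2\lesssim\teta^2 m\sigma^2/(nQ)$. Each block therefore contributes noise $5\teta\sigma^2/(nQ)$ against a descent of $\frac{\teta m\cC_0}{2}\normi{\Fnor{z_{t_1}}}^2$. A one-round analysis is the case $m=1$ of this estimate: the noise is of order $\teta\sigma^2/(nQ)$ per round, the same order in $\teta$ as the descent. After invoking PL, the resulting floor is $O(\sigma^2/(\mu nQ))$, independent of both $T$ and $\teta$.

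Rescaling your tracking term by $\teta/m$ does not touch this contribution, since it comes from $\cH_t$ alone. So the per-round noise term scaling with $1/m$ that your recursion asserts cannot be derived, and the linear-speedup term has no source in your argument. The missing ingredient is the multistep Lyapunov argument. This means Lemmas \ref{lem:cH_descent}--\ref{lem:cL_descent} with $t_2=t_1+m$, together with the control of the drift accumulated over the window (Lemma \ref{lem:sum_ub}), after which $m$ is taken to grow with $T$.

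Your final substitution is also wrong as stated. With $m=\ceili{\mu T/(L+\rho)}$, the hypothesis $\teta\le 1/[120m(L+\rho+\mu)]$ gives $\teta\mu T\le (L+\rho)/[120(L+\rho+\mu)]<1/120$. The exponential factor therefore stays above $e^{-1/540}$, so $\E[\cL_0]$ is essentially not damped, and $\teta\mu T$ cannot be of order $\log(nQT)$. (The paper's proof stops at \eqref{eq:linear} and never carries out this step; the stated choice \eqref{eq:PL_params} has the same tension.)

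To make the exponential term $O(1/(nQT))$ you must shorten the window, for example $m\asymp\mu T/[(L+\rho+\mu)\log(nQT)]$ with $\teta\asymp 1/[m(L+\rho+\mu)]$. This costs an extra logarithmic factor in the $\sigma^2/(mn\mu Q)$ term and replaces $L+\rho$ there by $L+\rho+\mu$.
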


\begin{proof}
    See Appendix~\ref{app:PL}
\end{proof}

\begin{remark}
    \label{rem:pl}
    Theorem~\ref{thm:PL} indicates that to achieve an $\varepsilon$-solution, i.e., $\E\brki{\psi(x_T) - \psi^*}\leq \varepsilon$, the communication complexity of \normfl\ is
    \begin{align*}
        \torder{ \frac{(L+\rho)\sigma^2}{n Q \varepsilon\mu^2} + \sqrt{\frac{L^2 \sigma^2}{\varepsilon \mu (L + \rho + \mu)^2}}}.
    \end{align*}
    Therefore, 
    when the desired accuracy is small enough, the communication complexity of \normfl\ simplifies to
    \begin{align*}
        \torder{\frac{(L+\rho)\sigma^2}{n Q \mu^2 \varepsilon}},
    \end{align*}
    which highlights the linear speedup property of \normfl\ under the PL condition.
\end{remark}


\section{Convergence Analysis}
\label{sec:convergence_analysis}

This section presents the key analytical ingredients for proving Theorems~\ref{thm:ncvx} and \ref{thm:PL}. The analysis is based on a \textit{multistep Lyapunov approach}: instead of tracking one-round progress, we study the algorithm over intervals of $m\geq 1$ communication rounds. Specifically, we analyze the recursions between the iterates $\crki{(x_{i,t_1},z_{i,t_1})}_{i=1}^n$ and $\crki{(x_{i,t_2},z_{i,t_2})}_{i=1}^n$, where $0\leq t_1 < t_2:= t_1 + m$. This multistep viewpoint allows us to better control the accumulated error by choosing $m$ sufficiently large. Setting $m=1$ recovers the usual one-step analysis. The choice of $m$ affects only the intermediate bounds on the stepsize and does not worsen the final stepsize requirement in Theorems~\ref{thm:ncvx} and \ref{thm:PL}.

An essential component is to introduce the following Lyapunov function:
\begin{equation}
    \label{eq:lya_cL}
    \begin{aligned}
        \cL_t &:= \psi\prt{\proxp{z_t}} + \frac{\gamma\cC_0}{2}\norm{\Fnor{z_t}}^2 + \frac{25\teta L^2}{nQ(1-\gamma\rho)^2}\sum_{\ell=0}^{Q-1}\sumn\norm{z_{i,t}^\ell - z_t}^2,\;\forall t\geq 0,
    \end{aligned}
\end{equation}
and to establish its approximate descent between $t=t_1$ and $t=t_2= t_1 + m$. To achieve this, we proceed in two steps.

The first step, formalized in Lemma~\ref{lem:cH_descent}, is to analyze the following auxiliary Lyapunov function:
\begin{equation}
    \label{eq:cH_can}
    \cH_t:= \psi(\proxp{z_t})  + \frac{\gamma \cC_0}{2}\norm{\Fnor{z_t}}^2,\; \cC_0:= \frac{3-4\gamma\rho}{2\prt{3 - 4\gamma\rho+4\gamma^2 L^2}},
    \end{equation}
which has been commonly used in the analysis of normal map-based methods \citep{qiu2025new,huang2024distributed,ouyang2025trust,qiu2023normal}. The primary technical challenge in our setting lies in controlling the error arising from three sources: consensus error among clients, errors due to multiple local updates, and errors introduced by the multistep analysis. These errors are encapsulated in the following term:
\begin{equation}
    \label{eq:error_challenge}
    \sum_{t=t_1}^{t_2-1}\sum_{\ell=0}^{Q-1}\sumn\E\brki{\normi{\underbrace{z_{i,t}^\ell - \bar{z}_t^\ell}_{\text{Consensus}} + \underbrace{\bar{z}_t^\ell - z_t}_{\text{Local updates}} + \underbrace{z_t - z_{t_1}}_{\text{Multistep analysis}}}^2}.
\end{equation}

The second step is to bound the error term in \eqref{eq:error_challenge} (Lemma~\ref{lem:sum_ub}) and to establish a recursion for the term $\sum_{\ell=0}^{Q-1}\sumn\E\brki{\normi{z_{i,t}^\ell - z_t}^2}$ between $t=t_1$ and $t=t_2$ (Lemma~\ref{lem:zt2ell_zt2}). 

\begin{lemma}
    \label{lem:cH_descent}
    Let Assumptions \ref{as:abc}--\ref{as:phi} hold. Set $\teta\leq \min\crki{(1-\gamma\rho)\gamma/(10m), 1/(10mL)}$ and $\gamma\leq1/[5(\rho + L)]$.
    Then, for any $t_2= t_1 + m$ ($m\geq 1$),
    \begin{equation}
        \label{eq:cH_descent}
        \begin{aligned}
            \E\brk{\cH_{t_2}}&\leq \E\brk{\cH_{t_1}} - \frac{\teta m}{2}\prt{\cC_0 - \frac{32\teta^2 m^2}{\gamma^2(1-\gamma\rho)^2}}\E\brk{\norm{\Fnor{z_{t_1}}}^2} + \frac{5\teta \sigma^2}{nQ} \\
            &\quad + \frac{5\teta L^2}{nQ(1-\gamma\rho)^2}\sum_{t=t_1}^{t_2-1} \sum_{\ell=0}^{Q-1}\sum_{i=1}^n \E\brk{\norm{z_{t_1} - z_{i,t}^\ell}^2}.
        \end{aligned}
    \end{equation}
\end{lemma}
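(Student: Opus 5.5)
We plan to follow the standard normal-map merit-function analysis of \citet{qiu2023normal,qiu2025new}, but applied to the aggregated $m$-step displacement $z_{t_2}-z_{t_1}$ rather than a single step. By \eqref{eq:tracking} and \eqref{eq:server}, the server update reads $z_{t+1}=z_t-\teta\,(\Fnor{z_t}+e_t+w_t)$. Here the deterministic error is
\[
e_t:=\frac1{nQ}\sum_{\ell,i}\prt{\nabla f_i(x_{i,t}^\ell)-\nabla f_i(x_t)},
\]
and $w_t$ is the martingale noise $\frac1{nQ}\sum_{\ell,i}(g_{i,t}^\ell-\nabla f_i(x_{i,t}^\ell))$. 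Summing over $t=t_1,\dots,t_2-1$ gives
\[
z_{t_2}=z_{t_1}-\teta m\,\Fnor{z_{t_1}}-\teta\sum_t\prt{\Fnor{z_t}-\Fnor{z_{t_1}}}-\teta\sum_t (e_t+w_t).
\]
We will rewrite the middle error using $\nabla f(x_{t})-\nabla f(x_{t_1})$ together with the $\gamma^{-1}$ part. We then merge it with $e_t$, so that the total deterministic error is $E:=\frac1{nQ}\sum_{t,\ell,i}(\nabla f_i(x_{i,t}^\ell)-\nabla f_i(x_{t_1}))$ plus $\gamma^{-1}$-terms in $z_t-z_{t_1}$. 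Since $\mathrm{prox}_{\gamma\varphi}$ is $(1-\gamma\rho)^{-1}$-Lipschitz and $\nabla f_i$ is $L$-Lipschitz, we get $\|E\|^2\le \frac{m L^2}{nQ(1-\gamma\rho)^2}\sum_{t,\ell,i}\|z_{i,t}^\ell-z_{t_1}\|^2$, which is the last term of \eqref{eq:cH_descent}.

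Next we establish the one-shot descent inequality for $\cH$. For $x=\proxp{z}$ and $x^+=\proxp{z^+}$, weak convexity and the prox optimality $\gamma^{-1}(z-x)\in\partial\varphi(x)$ give
\[
\varphi(x^+)\le\varphi(x)+\gamma^{-1}\inproi{z-x,x^+-x}+\tfrac{\rho}{2}\|x^+-x\|^2 .
\]
Combining this with the $L$-descent lemma yields $\psi(x^+)\le\psi(x)+\inproi{\Fnor{z},x^+-x}+\frac{L+\rho}{2}\|x^+-x\|^2$. Expanding $\|\Fnor{z^+}\|^2$ uses the identity $\Fnor{z^+}-\Fnor{z}=\nabla f(x^+)-\nabla f(x)+\gamma^{-1}(z^+-z)-\gamma^{-1}(x^+-x)$. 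We also use the firm-nonexpansiveness-type inequality $\inproi{x^+-x,z^+-z}\ge(1-\gamma\rho)\|x^+-x\|^2$. The constant $\cC_0$ is chosen exactly so that the $\|x^+-x\|^2$ terms cancel. The result is a bound of the form
\[
\cH(z^+)\le\cH(z)-\tfrac{\cC_0}{\teta m}\cdot\ldots ,
\]
that is, with $z^+-z=-\teta m(\Fnor{z}+\Delta)$,
\[
\cH(z^+)\le \cH(z)-\tfrac{\teta m}{2}\cC_0\|\Fnor{z}\|^2+c\,\teta m\|\Delta\|^2+(\text{linear term in the noise}),
\]
valid under $\teta m\le\gamma(1-\gamma\rho)/10$ and $\teta m L\le 1/10$. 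We take $z=z_{t_1}$, $z^+=z_{t_2}$, and $\Delta=\frac1m\sum_t(\ldots)$. After taking expectations, the cross terms linear in $w_t$ vanish only where $w_t$ is paired with $\mathcal F_{t_1}$-measurable quantities. The quadratic noise term gives $\teta m\cdot\frac{1}{m^2}\E\|\sum_t w_t\|^2\le \teta\sigma^2/(nQ)$ by independence across $i$, $\ell$, and $t$.

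The main obstacle is the term $\sum_t\prt{\Fnor{z_t}-\Fnor{z_{t_1}}}$. Its $\gamma^{-1}(z_t-z_{t_1})$ part is not covered by the client-drift term. We bound $\|z_t-z_{t_1}\|\le\teta\sum_{s<t}\|\Fnor{z_s}+e_s+w_s\|$ and unroll recursively. Using $\teta m\le\gamma(1-\gamma\rho)/10$, this yields $\|z_t-z_{t_1}\|^2\lesssim \teta^2 m^2\|\Fnor{z_{t_1}}\|^2+\teta^2 m\sum(\|e_s\|^2+\|w_s\|^2)$. Multiplied by $\gamma^{-2}(1-\gamma\rho)^{-2}$, the first piece produces the $-\frac{32\teta^2m^2}{\gamma^2(1-\gamma\rho)^2}$ correction to $\cC_0$. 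The remaining pieces are absorbed into the $5\teta\sigma^2/(nQ)$ term and the drift sum, with the constants $5$ and $32$ tracked at the end.
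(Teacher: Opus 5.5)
Your overall plan is the paper's: write $z_{t_2}=z_{t_1}-\teta m\Fnor{z_{t_1}}+e$ with $e:=e_{t_1:t_2}=-\teta m\Delta$, run the merit function $\cH$ on the $m$-step displacement, and control $\E\|e\|^2$ through the drift sum and a bound on $\sum_t\|z_t-z_{t_1}\|^2$. However, the descent inequality for $\psi$ that you build on is false. Weak convexity with $v=\gamma^{-1}(z-x)\in\partial\varphi(x)$ gives only the \emph{lower} bound $\varphi(x^+)\ge\varphi(x)+\langle v,x^+-x\rangle-\frac{\rho}{2}\|x^+-x\|^2$; your upper bound would need $\varphi$ to be smooth. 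Counterexample: take $f\equiv 0$ (so $L=0$), $\varphi=|\cdot|$ on $\R$, $z=1+\gamma$, $z^+=-1-\gamma$. Then $x=1$, $x^+=-1$, $\Fnor{z}=1$, and your combined inequality reads $1\le -1$. The correct step uses the subgradient at the new point, $\gamma^{-1}(z^+-x^+)\in\partial\varphi(x^+)$, and gives
\[
\psi(x^+)\le\psi(x)+\langle \Fnor{z},x^+-x\rangle+\Big(\frac{L+\rho}{2}-\frac{1}{\gamma}\Big)\|x^+-x\|^2+\frac{1}{\gamma}\langle z^+-z,x^+-x\rangle .
\]
The extra inner product has no useful sign and must be carried through. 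The paper substitutes $z^+-z=-\teta m\Fnor{z}+e$. Together with the cross term from expanding $\|\Fnor{z^+}\|^2$, the total coefficient of $x^+-x$ becomes $(1-\cC_0)[(1-\teta m/\gamma)\Fnor{z}+\gamma^{-1}e]$. The paper then rewrites $\Fnor{z}=(e-(z^+-z))/(\teta m)$ and applies $\langle z^+-z,x^+-x\rangle\ge(1-\gamma\rho)\|x^+-x\|^2$. This yields a $-\Theta(1/(\teta m))\|x^+-x\|^2$ term that dominates every positive $\|x^+-x\|^2$ coefficient. Here $\cC_0$ is tuned so that the $1/(\teta m)$-order coefficients net to $-(3-4\gamma\rho)/(8\teta m)$, not to zero. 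Your ``$\cC_0$ cancels the $\|x^+-x\|^2$ terms'' bookkeeping is done on the wrong inequality and must be redone along these lines. You also do not need any zero-mean cancellation of linear noise terms. Bounding every inner product involving $e$ by Young's inequality gives the deterministic estimate $\cH_{t_2}\le\cH_{t_1}-\frac{\cC_0\teta m}{2}\|\Fnor{z_{t_1}}\|^2-\frac{1}{8\teta m}\|x_{t_2}-x_{t_1}\|^2+\frac{1}{\teta m}\|e\|^2$, and everything then reduces to bounding $\E\|e\|^2$.

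Second, unrolling $\|z_t-z_{t_1}\|\le\teta\sum_{s<t}\|\Fnor{z_s}+e_s+w_s\|$ with the triangle inequality destroys the martingale structure of the noise. It gives a noise contribution of $\teta^2(t-t_1)^2\sigma^2/(nQ)$ instead of $\teta^2(t-t_1)\sigma^2/(nQ)$. Propagated through the $\gamma^{-1}(z_t-z_{t_1})$ and $\gamma^{-1}(x_t-x_{t_1})$ parts of $e$, this produces a term of order $\teta^3m^3\sigma^2/(\gamma^2(1-\gamma\rho)^2nQ)$. Under $\teta m\le\gamma(1-\gamma\rho)/10$ that is only $O(\teta m\sigma^2/(nQ))$, so it cannot be absorbed into $5\teta\sigma^2/(nQ)$ uniformly in $m$. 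Uniformity in $m$ is essential, since Theorem~\ref{thm:ncvx} takes $m\propto\sqrt{T}$. Keep the noise as a sum and use $\E\|\sum_{s<t}w_s\|^2=\sum_{s<t}\E\|w_s\|^2$, as you already do for the top-level noise term. Then solve the resulting self-bounding inequality for $\sum_{t=t_1}^{t_2}\E\|z_t-z_{t_1}\|^2$ under $\teta m\le\gamma(1-\gamma\rho)/4$, as in \eqref{eq:sum_zt_zt1}--\eqref{eq:sum_zt_zt1_ub}. With these two repairs, your route coincides with the paper's.
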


\begin{proof}
    See Appendix~\ref{app:cH_descent}.
\end{proof}


\begin{lemma}
    \label{lem:sum_ub}
    Let Assumptions~\ref{as:abc}--\ref{as:phi} hold. Let $\teta\leq (1-\gamma\rho)/(20m\sqrt{L^2 + 1/\gamma^2})$ and $\eta_a\leq (1-\gamma\rho)/(20Q\sqrt{L^2 + 1/\gamma^2})$.
    We have for any $t_1< t_2 = t_1 + m$ that
    \begin{equation}
    \label{eq:ztell_zt1_ub}
        \begin{aligned}
            &\sum_{t=t_1 + 1}^{t_2}\sum_{\ell=0}^{Q-1}\sumn \E\brk{\norm{z_{i, t}^\ell - z_{t_1}}^2}\leq 11\brk{3\teta^2 m^2  + \eta_a^2 Q^2 }nmQ\E\brk{\norm{\Fnor{z_{t_1}}}^2}\\
        &\quad + 55\brk{\eta_a^2 Q^2m + \frac{2\teta^2 m^2}{n}}n\sigma^2 + \frac{110\teta^2 L^2m^2}{(1-\gamma\rho)^2}\sum_{\ell=0}^{Q-1}\sumn\E\brk{\norm{z_{i,t_1}^\ell - z_{t_1}}^2} \\
        &\quad + \frac{33\eta_a^2 Q^2 L^2 m}{(1-\gamma\rho)^2}\sum_{\ell=0}^{Q-1}\sumn\E\brk{\norm{z_{i, t_1}^\ell - z_{t_1}}^2}.
        \end{aligned}
    \end{equation}

    Moreover, we have 
    \begin{equation}
    \label{eq:z0ell_z0_ub}
    \begin{aligned}
        \sumn \sum_{\ell=0}^{Q-1} \E\brk{\norm{z_{i,0}^\ell - z_0}^2} &\leq 3\eta_a^2 Q^2 n\sigma^2 + \frac{2\eta_a^2Q^3 n}{\gamma^2}\norm{\proxp{z_0} - z_0}^2 \\
        &\quad + 2\eta_a^2 Q^3\sumn\norm{\nabla f_i(\proxp{z_0})}^2.
    \end{aligned}
\end{equation}
\end{lemma}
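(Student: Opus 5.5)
We split the deviation of each local iterate into a local-drift part and a server-drift part and bound each by unrolling the recursions, as follows. For $t\geq 1$ and $\ell\in\{0,\ldots,Q-1\}$, write
\[
z_{i,t}^\ell - z_{t_1} = (z_{i,t}^\ell - z_t) + (z_t - z_{t_1}).
\]
Then $\normi{z_{i,t}^\ell - z_{t_1}}^2 \le 2\normi{z_{i,t}^\ell - z_t}^2 + 2\normi{z_t - z_{t_1}}^2$, or a weighted Young split with constants chosen to match the 11/55/110/33 factors.

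\emph{Server drift.} Unroll \eqref{eq:server} together with \eqref{eq:tracking}:
\[
z_t - z_{t_1} = -\teta\sum_{s=t_1}^{t-1}\frac{1}{Q}\sum_{\ell}\Big[\tfrac1n\textstyle\sumn g_{i,s}^\ell + \gamma^{-1}(z_s-x_s)\Big].
\]
Decompose each summand into three pieces:
\begin{itemize}
\item a martingale noise term $\tfrac1n\sumn(g_{i,s}^\ell-\nabla f_i(x_{i,s}^\ell))$, whose second moment is $\sigma^2/n$ per step; summed over at most $m$ steps it gives $\teta^2 m\sigma^2/n$, and summing over $t$ gives the $\teta^2m^2\sigma^2/n$ term;
\item the normal map at $z_{t_1}$, namely $\Fnor{z_{t_1}}$;
\item a deviation
\[
\tfrac1n\textstyle\sumn\big[\nabla f_i(x_{i,s}^\ell)-\nabla f_i(x_{t_1})\big] + \gamma^{-1}\big[(z_s-x_s)-(z_{t_1}-x_{t_1})\big].
\]
\end{itemize}
Bound the deviation using $L$-smoothness and the Lipschitz continuity of $\proxp{\cdot}$ with constant $1/(1-\gamma\rho)$, which follows from $\rho$-weak convexity. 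This gives a bound of order $\sqrt{L^2+\gamma^{-2}}/(1-\gamma\rho)$ times $\normi{z_{i,s}^\ell - z_{t_1}}$ and $\normi{z_s - z_{t_1}}$. Squaring and applying Cauchy--Schwarz over $s$ produces $\teta^2m^2\normi{\Fnor{z_{t_1}}}^2$ plus $\teta^2m^2(L^2+\gamma^{-2})/(1-\gamma\rho)^2$ times the target sum itself.

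\emph{Local drift.} From \eqref{eq:zitell} together with \eqref{eq:yitp1}, we have
\[
z_{i,t}^\ell - z_t = -\eta_a\sum_{k<\ell}\big[g_{i,t}^k + \gamma^{-1}(z_t-x_t) + c_{i,t}\big].
\]
The key identity is that, for $t\ge1$, the correction satisfies
\[
c_{i,t} = \tfrac1n\textstyle\sum_j y_{j,t-1} - \tfrac1Q\sum_k\big[g_{i,t-1}^k+\gamma^{-1}(z_{t-1}-x_{t-1})\big].
\]
Hence the direction $g_{i,t}^k+\gamma^{-1}(z_t-x_t)+c_{i,t}$ equals:
\begin{itemize}
\item noise at round $t$, contributing $\eta_a^2Q^2\sigma^2$ per client after summing over $\ell$; this yields the $\eta_a^2Q^2 m n\sigma^2$ term;
\item plus $\Fnor{z_{t_1}}$;
\item plus differences of gradients and normal-map pieces evaluated at the points $x_{i,t}^k$, $x_{i,t-1}^k$, and $x_{j,t-1}^k$ relative to $z_{t_1}$;
\item plus noise from round $t-1$.
\end{itemize}
When $t-1=t_1$, these differences produce the terms $\normi{z_{i,t_1}^\ell - z_{t_1}}^2$ on the right-hand side with coefficients $\eta_a^2Q^2L^2/(1-\gamma\rho)^2$, and for the $\teta$ part via $z_{t_1+1}-z_{t_1}$, with coefficients $\teta^2 m^2L^2/(1-\gamma\rho)^2$. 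When $t-1>t_1$, they again feed into the target sum.

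\emph{Closing the recursion.} Let $S$ denote the left-hand side of \eqref{eq:ztell_zt1_ub}. Collecting terms gives
\[
S \le c\,\big[\teta^2m^2+\eta_a^2Q^2\big](L^2+\gamma^{-2})(1-\gamma\rho)^{-2}\, S + \text{(right-hand-side terms)}.
\]
The stepsize conditions $\teta\le (1-\gamma\rho)/(20m\sqrt{\cdot})$ and $\eta_a\le(1-\gamma\rho)/(20Q\sqrt{\cdot})$ make this coefficient at most about $1/10$, so it can be absorbed. This accounts for the factors $11\approx 10/0.9$ and similar constants.

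\emph{Main obstacle.} The main difficulty is the cross-round coupling through $c_{i,t}$. Each round's local drift depends on the previous round's local iterates. The stated bound keeps only the $t_1$-round quantities explicit, and everything with $t>t_1$ must be absorbed into $S$ without losing the $\sigma^2/n$ scaling of the server noise. To preserve independence, I would handle the noise carefully: take conditional expectations with respect to the filtration generated by all prior samples, so that cross terms with predictable quantities vanish.

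\emph{Initial bound \eqref{eq:z0ell_z0_ub}.} Here $c_{i,0}=0$, so
\[
z_{i,0}^\ell - z_0 = -\eta_a\sum_{k<\ell}\big[g_{i,0}^k+\gamma^{-1}(z_0-x_0)\big].
\]
Split off the noise and write $\nabla f_i(x_{i,0}^k) = \nabla f_i(x_0) + [\nabla f_i(x_{i,0}^k)-\nabla f_i(x_0)]$. Applying the same absorption argument, via a discrete Grönwall step over $\ell$ under $\eta_a QL/(1-\gamma\rho)\le 1/20$, then gives the three stated terms.
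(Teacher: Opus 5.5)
Your overall route is the paper's. You split $z_{i,t}^\ell-z_{t_1}$ into the server drift $z_t-z_{t_1}=-\teta(t-t_1)\Fnor{z_{t_1}}+e_{t_1:t}$ and the local drift, and use the correction term to cancel heterogeneity. You bound deviations with $L$-smoothness and the $1/(1-\gamma\rho)$-Lipschitz prox, treat the noise as martingale differences, and absorb under the stepsize conditions. Your $t=0$ argument coincides with \eqref{eq:z0_z0ell}--\eqref{eq:z0ell_z0_ub0}.

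Your identity $c_{i,t}=\frac1n\sum_j y_{j,t-1}-\frac1Q\sum_k[g_{i,t-1}^k+\gamma^{-1}(z_{t-1}-x_{t-1})]$ is equivalent to \eqref{eq:Picr_Fnor} combined with the telescoped \eqref{eq:Piyr_sum}. In your form the round-$t_1$ local iterates enter the local drift only at $t=t_1+1$. You would therefore even get the $\eta_a$-coefficient on $\sum_{\ell,i}\E\normi{z_{i,t_1}^\ell-z_{t_1}}^2$ without the factor $m$. The cross-round coupling you call the main obstacle is harmless: it is absorbed with coefficient of order $\eta_a^2Q^2L^2/(1-\gamma\rho)^2$.

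\textbf{The step that fails as written.} The problem is the absorption of the terms coming from the $\gamma^{-1}(z-\proxpi{z})$ part of the normal map. These are $\normi{z_s-z_{t_1}}^2$ in the server drift and $\eta_a^2\ell^2\gamma^{-2}(1-\gamma\rho)^{-2}\normi{z_t-z_{t_1}}^2$ in the local drift. You assert they produce $c[\teta^2m^2+\eta_a^2Q^2](L^2+\gamma^{-2})(1-\gamma\rho)^{-2}$ times $S$. But $\normi{z_s-z_{t_1}}^2$ sits inside $S$ only through the $\ell=0$ slice, $n\normi{z_s-z_{t_1}}^2=\sum_i\normi{z_{i,s}^0-z_{t_1}}^2$. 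Meanwhile the server drift enters $S$ with weight $nQ$, since each of the $nQ$ summands of round $t$ contains $z_t-z_{t_1}$. The local $\gamma^{-1}$ term, summed over $\ell$ and $i$, carries $\eta_a^2Q^3n$. Routing these through that slice loses a factor $Q$: you would need $Q\teta^2m^2/(\gamma^2(1-\gamma\rho)^2)$ and $\eta_a^2Q^3/(\gamma^2(1-\gamma\rho)^2)$ to be small, and the hypotheses do not give this.

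\textbf{The missing step.} You need a second, separate absorption, as the paper does.
\begin{enumerate}
\item Treat $B:=\sum_{t=t_1}^{t_2}\E\normi{z_t-z_{t_1}}^2$ as its own unknown and close its recursion first, as in \eqref{eq:sum_zt_zt1}--\eqref{eq:sum_zt_zt1_ub}. This needs only $\teta\le(1-\gamma\rho)\gamma/(10m)$, which your assumption implies since $\sqrt{L^2+1/\gamma^2}\ge1/\gamma$.
\item This yields \eqref{eq:et1t_ub}. There $\E\normi{e_{t_1:t}}^2$ is bounded only by $\normi{\Fnor{z_{t_1}}}^2$, $\sigma^2/(nQ)$, and $L^2/(nQ(1-\gamma\rho)^2)$ times local-deviation sums.
\item Substitute this into the local-drift estimate, as in \eqref{eq:zrp1_zt1_norm}--\eqref{eq:zr_zrell_norm_s3}. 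Only $L^2$-weighted local deviations remain, and those can be absorbed into $S$.
\end{enumerate}
Alternatively, you could first compare $nQ\normi{z_s-z_{t_1}}^2$ with the whole round-$s$ slice of $S$, via $\normi{z_s-z_{t_1}}^2\le2\normi{z_{i,s}^\ell-z_{t_1}}^2+2\normi{z_{i,s}^\ell-z_s}^2$ and your local-drift bound. Either way, this extra step is needed.

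\textbf{Noise bookkeeping.} The same weight-$nQ$ issue affects the noise term. The server noise must be computed as $\teta^2(t-t_1)\sigma^2/(nQ)$ per window, using independence across the $Q$ local samples as well as across clients. With the $\teta^2(t-t_1)\sigma^2/n$ you wrote, the $\teta^2m^2\sigma^2$ term would come out a factor $Q$ too large.
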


\begin{proof}
    See Appendix~\ref{app:sum_ub}.
\end{proof}

Lemma~\ref{lem:sum_ub} demonstrates that the error term in \eqref{eq:error_challenge} can be decomposed into three components: (i) a term proportional to $\E\brki{\normi{\Fnor{z_{t_1}}}^2}$, which can be absorbed into the corresponding term in Lemma~\ref{lem:cH_descent} when $\teta$ and $\eta_a$ are sufficiently small; (ii) a noise term proportional to $\sigma^2$; and (iii) a term due to the local updates during round $t_1$. The last one motivates the construction of $\cL_t$ in \eqref{eq:lya_cL}. The following lemma establishes the recursion for the quantity $\sumn\sum_{\ell=0}^{Q-1}\E\brki{\normi{z_{i,t}^\ell - z_t}^2}$ between $t = t_1$ and $t = t_2$.

\begin{lemma}
    \label{lem:zt2ell_zt2}
    Let Assumptions~\ref{as:abc}--\ref{as:phi} hold. Let $\teta\leq (1-\gamma\rho)/[20m\sqrt{L^2 + 1/\gamma^2}]$ and $\eta_a\leq (1-\gamma\rho)/(6QL)$. We have for any $t_1< t_2 = t_1 + m$ that
    \begin{equation}
    \label{eq:zt2ell_zt2_ub}
        \begin{aligned}
            &\sum_{\ell=0}^{Q-1}\sumn\E\brk{\norm{z_{i,t_2}^\ell - z_{t_2}}^2}
            \leq \frac{14\eta_a^2Q^2L^2}{(1-\gamma\rho)^2}\sum_{\ell=0}^{Q-1}\sumn\E\brk{\norm{z_{i,t_1}^\ell - z_{t_1}}^2} +  7\eta_a^2Q^3 n \E\brk{\norm{\Fnor{z_{t_1}}}^2}  \\
            &\quad + \frac{11\eta_a^2 Q^2 L^2}{(1-\gamma\rho)^2}\sum_{t=t_{1} + 1}^{t_2 }\sum_{\ell=0}^{Q-1}\sumn\E\brk{\norm{z_{i,t}^\ell -  z_{t_1}}^2}  + 25\eta_a^2Q^2 n\sigma^2.
        \end{aligned}
\end{equation}
\end{lemma}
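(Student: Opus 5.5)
Fix $i$ and $\ell\geq 1$ and unroll the local recursion \eqref{eq:zitell} at round $t_2$. Using \eqref{eq:yitp1}, the correction term satisfies $c_{i,t_2}=\frac{1}{n}\sum_j y_{j,t_2-1}-\frac{1}{Q}\sum_{k}\brki{g_{i,t_2-1}^k+\gamma^{-1}(z_{t_2-1}-x_{t_2-1})}$, so the drift direction in round $t_2$ is
\begin{align*}
&g_{i,t_2}^k+\gamma^{-1}(z_{t_2}-x_{t_2})+c_{i,t_2}\\
&\quad=\frac{1}{n}\sum_{j=1}^n y_{j,t_2-1}+\bigl(g_{i,t_2}^k+\gamma^{-1}(z_{t_2}-x_{t_2})\bigr)-\frac{1}{Q}\sum_{k'=0}^{Q-1}\bigl(g_{i,t_2-1}^{k'}+\gamma^{-1}(z_{t_2-1}-x_{t_2-1})\bigr).
\end{align*}
Hence $z_{i,t_2}^\ell-z_{t_2}=-\eta_a\sum_{k<\ell}[\cdots]$. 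I will split each stochastic gradient into its mean plus noise, and split the deterministic part around the anchor $\Fnor{z_{t_1}}$. Concretely, I rewrite $\nabla f_i(x_{i,t_2}^k)-\frac{1}{Q}\sum_{k'}\nabla f_i(x_{i,t_2-1}^{k'})$ as a difference of $\nabla f_i$ at points that both lie near $\proxp{z_{t_1}}$. The normal-map part $\gamma^{-1}(z_{t_2}-x_{t_2})-\gamma^{-1}(z_{t_2-1}-x_{t_2-1})$ is handled the same way. The average $\frac1n\sum_j y_{j,t_2-1}$ is written, via \eqref{eq:tracking}, as $\Fnor{z_{t_1}}$ plus an error involving $\nabla f_j(x_{j,t_2-1}^{k})-\nabla f_j(\proxp{z_{t_1}})$ and $\gamma^{-1}$-terms in $z_{t_2-1}-z_{t_1}$, plus averaged noise.

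Next I square, sum over $\ell$ and $i$, and take expectations. The noise sums are martingale differences, so their second moments add. The noise from round $t_2$ contributes about $\eta_a^2 Q^2 n\sigma^2$. The noise from round $t_2-1$, entering through $c_{i,t_2}$ and the averaged $y$, contributes similarly after a Young split; this gives the $25\eta_a^2Q^2n\sigma^2$ term. The deterministic anchor gives $\eta_a^2\sum_\ell \ell^2\, n\normi{\Fnor{z_{t_1}}}^2\lesssim \eta_a^2Q^3 n\normi{\Fnor{z_{t_1}}}^2$. For the remaining errors I use $L$-smoothness of $f_i$ and the Lipschitz property of the prox under weak convexity, $\normi{\proxp{a}-\proxp{b}}\leq \normi{a-b}/(1-\gamma\rho)$. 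This converts gradient differences into $\frac{L}{1-\gamma\rho}\normi{z_{i,t}^k-z_{t_1}}$ for $t\in\{t_2-1,t_2\}$. The $\gamma^{-1}(z-\proxp{z})$ differences become $\frac{2}{\gamma(1-\gamma\rho)}\normi{z_t-z_{t_1}}$. Each such term carries a factor $\eta_a^2 Q\cdot Q$ from Cauchy--Schwarz over $k\le\ell$ and the sum over $\ell$.

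To close the argument I sort the resulting terms by round. If $t_2-1=t_1$, the round-$(t_2-1)$ terms produce $\sum_{\ell,i}\normi{z_{i,t_1}^\ell-z_{t_1}}^2$ with coefficient $\order{\eta_a^2Q^2L^2/(1-\gamma\rho)^2}$. Otherwise they are bounded by the full sum $\sum_{t=t_1+1}^{t_2}\sum_{\ell,i}\normi{z_{i,t}^\ell-z_{t_1}}^2$; the round-$t_2$ terms are bounded by the same sum. The terms $\normi{z_{t}-z_{t_1}}^2$ carry the factor $1/\gamma^2$. I handle them by writing $z_t-z_{t_1}$ as an average of the local deviations $z_{i,t}^0-z_{t_1}$ (indeed $z_{i,t}^0=z_t$), which puts them into the same sum. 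This uses the stepsize conditions, since $\teta^2m^2/\gamma^2$ and $\eta_a^2Q^2L^2/(1-\gamma\rho)^2$ are bounded by small constants.

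The main obstacle is the factor $1/\gamma^2$: it must not appear in the final coefficients, which involve only $L^2$. I would first check whether the $\gamma^{-1}(z-x)$ terms cancel exactly. They do: within round $t_2$ the direction contains $\gamma^{-1}(z_{t_2}-x_{t_2})$ minus the same expression at $t_2-1$, and this difference is itself bounded by $\Fnor{z_{t_1}}$ plus the gradient difference. So I would rewrite it as $\Fnor{z_{t_2}}-\Fnor{z_{t_2-1}}-[\nabla f(x_{t_2})-\nabla f(x_{t_2-1})]$ and let it telescope into the anchor. If this regrouping fails, I would fall back on the condition $\teta\le(1-\gamma\rho)/(20m\sqrt{L^2+1/\gamma^2})$ to absorb $1/\gamma^2$ into $\order{\Fnor{z_{t_1}}}$ via the bound on $\normi{z_t-z_{t_1}}$.
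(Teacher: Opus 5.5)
Your decomposition of the round-$t_2$ direction is correct. In fact $c_{i,t_2}=\frac1n\sum_j y_{j,t_2-1}-\frac1Q\sum_k[g^k_{i,t_2-1}+\gamma^{-1}(z_{t_2-1}-x_{t_2-1})]$ gives $\Pi\bc_{t_2}=-\frac1Q\sum_k\Pi\g^k_{t_2-1}$ at once, so for $m\geq2$ the round-$t_1$ pieces that the paper carries through \eqref{eq:Picr_Fnor} and \eqref{eq:Piyr_sum} cancel. Anchoring the round-$t_2$ gradients at $\proxp{z_{t_1}}$, instead of running the paper's within-round recursion, is also fine.

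The gap is in the step you yourself call the main obstacle. The $\gamma^{-1}(z-x)$ terms do not cancel. The round-$(t_2-1)$ copies cancel between $c_{i,t_2}$ and $\frac1n\sum_j y_{j,t_2-1}$, but $\gamma^{-1}(z_{t_2}-x_{t_2})$ survives. So, relative to your anchor, every local step carries the client-independent term $\gamma^{-1}[(z_{t_2}-x_{t_2})-(z_{t_1}-x_{t_1})]$, of size up to $2\normi{z_{t_2}-z_{t_1}}/(\gamma(1-\gamma\rho))$.

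This term is intrinsic. The client average of $z^\ell_{i,t_2}-z_{t_2}$ drifts like $-\eta_a\ell\Fnor{z_{t_2}}$, and $\Fnor{\cdot}$ is only $\Ln$-Lipschitz with $\Ln\sim1/\gamma$. Rewriting via $\Fnor{z_{t_2}}-\Fnor{z_{t_2-1}}$ and telescoping just reproduces $\Fnor{z_{t_2}}$. Summed over $\ell$, the term contributes about $\eta_a^2Q^3n\normi{z_{t_2}-z_{t_1}}^2/(\gamma^2(1-\gamma\rho)^2)$.

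Your main-line fix writes $z_{t_2}-z_{t_1}$ as the average of $z^0_{i,t_2}-z_{t_1}$ and absorbs it into $\sum_{t=t_1+1}^{t_2}\sum_{\ell,i}\normi{z^\ell_{i,t}-z_{t_1}}^2$. That leaves the sum with a coefficient of order $\eta_a^2Q^3/(\gamma^2(1-\gamma\rho)^2)$, which cannot be bounded by $11\eta_a^2Q^2L^2/(1-\gamma\rho)^2$. The two coefficients differ by a factor of order $Q/(\gamma L)^2$, which is at least $25Q$ whenever $\gamma\leq1/[5(\rho+L)]$; for $L=0$ the target coefficient is zero while yours is not. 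The smallness of $\teta^2m^2/\gamma^2$ that you invoke does not help on this route, because a bare $\normi{z_{t_2}-z_{t_1}}^2$ carries no factor of $\teta$.

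The missing step is your ``fallback'', which is exactly the paper's argument and has to be the main one. What removes the $1/\gamma$ is the factor $\teta$ coming from the server update, not a cancellation. By \eqref{eq:zt1t2}, $\E\normi{z_{t_2}-z_{t_1}}^2\leq2\teta^2m^2\E\normi{\Fnor{z_{t_1}}}^2+2\E\normi{e_{t_1:t_2}}^2$. Then \eqref{eq:et1t_ub} bounds $\E\normi{e_{t_1:t_2}}^2$ by constant multiples of three terms:
\begin{itemize}
\item $\teta^2m\sigma^2/(nQ)$;
\item $\frac{\teta^2L^2m}{nQ(1-\gamma\rho)^2}\sum_{r=t_1}^{t_2-1}\sum_{\ell,i}\E\normi{z^\ell_{i,r}-z_{t_1}}^2$;
\item $\frac{\teta^4m^4}{\gamma^2(1-\gamma\rho)^2}\E\normi{\Fnor{z_{t_1}}}^2$.
\end{itemize}
Every $1/\gamma^2$ is now paired with $\teta^2m$ or $\teta^2m^2$. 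The hypothesis on $\teta$ gives $\teta m\leq\gamma(1-\gamma\rho)/20$, so these pieces land, with small constants, in $\eta_a^2Q^3n\E\normi{\Fnor{z_{t_1}}}^2$, $\eta_a^2Q^2\sigma^2$, and $\frac{\eta_a^2Q^2L^2}{(1-\gamma\rho)^2}\sum_{r=t_1}^{t_2-1}\sum_{\ell,i}\E\normi{z^\ell_{i,r}-z_{t_1}}^2$.

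The last sum includes $r=t_1$, so splitting it as in \eqref{eq:sum_align_t1t2} produces a round-$t_1$ local-deviation term even when $m\geq2$. Both local-deviation sums in the statement are therefore needed in general, not only in your case $t_2-1=t_1$.
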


\begin{proof}
    See Appendix~\ref{app:zt2ell_zt2}.
\end{proof}

Lemma~\ref{lem:cL_descent} combines the results in Lemmas~\ref{lem:cH_descent}--\ref{lem:zt2ell_zt2} to establish the approximate descent of the Lyapunov function $\cL_t$ between $t=t_1$ and $t=t_2$.

\begin{lemma}
    \label{lem:cL_descent}
    Let Assumptions~\ref{as:abc}--\ref{as:phi} hold. Let 
    \begin{align*}
        \teta\leq \frac{1-\gamma\rho}{70m\sqrt{L^2 + 1/\gamma^2}},\; \eta_a\leq \frac{1-\gamma\rho}{70 Q \sqrt{m(L^2 + 1/\gamma^2)}},\; \gamma\leq \frac{1}{5(\rho + L)}.
    \end{align*}
    We have for any $t_1< t_2 = t_1 + m$ that
    \begin{equation}
    \label{eq:cL_descent}
        \begin{aligned}
            &\E\brk{\cL_{t_2}}\leq \E\brk{\cL_{t_1}}  - \frac{\teta m}{9}\E\brk{\norm{\Fnor{z_{t_1}}}^2}  + \frac{6\teta\sigma^2}{nQ} + \frac{625\teta \eta_a^2 Q L^2\sigma^2}{(1-\gamma\rho)^2} + \frac{1375\teta \eta_a^2 L^2 Q m \sigma^2}{(1-\gamma\rho)^2}.
        \end{aligned}
\end{equation}
\end{lemma}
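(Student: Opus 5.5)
We combine Lemmas~\ref{lem:cH_descent}, \ref{lem:sum_ub} and \ref{lem:zt2ell_zt2}. Abbreviate
\[
D_t:=\sum_{\ell=0}^{Q-1}\sumn\E\brk{\norm{z_{i,t}^\ell - z_t}^2},\qquad
S:=\sum_{t=t_1+1}^{t_2}\sum_{\ell=0}^{Q-1}\sumn\E\brk{\norm{z_{i,t}^\ell - z_{t_1}}^2},
\]
and set $\kappa:=25\teta L^2/[nQ(1-\gamma\rho)^2]$. Then $\E\brk{\cL_t}=\E\brk{\cH_t}+\kappa D_t$. Since $\kappa D_{t_1}$ already appears in $\E\brk{\cL_{t_1}}$, the goal is to show
\[
\E\brk{\cH_{t_2}}+\kappa D_{t_2}\le \E\brk{\cH_{t_1}}+\kappa D_{t_1}-\tfrac{\teta m}{9}\E\brk{\norm{\Fnor{z_{t_1}}}^2}+\text{noise}.
\]

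First I rewrite the error sum in Lemma~\ref{lem:cH_descent}, which runs over $t=t_1,\dots,t_2-1$. The $t=t_1$ term is exactly $D_{t_1}$. The remaining terms are bounded by $S$, since enlarging the index range to $t_1+1,\dots,t_2$ only adds nonnegative terms. Hence the error term in \eqref{eq:cH_descent} is at most $\frac{\kappa}{5}(D_{t_1}+S)$.

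Next, Lemma~\ref{lem:zt2ell_zt2} gives $\kappa D_{t_2}\le\kappa\bigl[\frac{14\eta_a^2Q^2L^2}{(1-\gamma\rho)^2}D_{t_1}+7\eta_a^2Q^3n\,\E\norm{\Fnor{z_{t_1}}}^2+\frac{11\eta_a^2Q^2L^2}{(1-\gamma\rho)^2}S+25\eta_a^2Q^2n\sigma^2\bigr]$. The stepsize condition gives $\eta_a^2Q^2L^2/(1-\gamma\rho)^2\le 1/(4900m)$, which is tiny. So the total coefficient of $D_{t_1}$ is at most $\kappa(\frac15+\frac{14}{4900})<\kappa$. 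This step is exactly why the weight 25 in $\cL_t$ was chosen. The total coefficient of $S$ is at most $\kappa(\frac15+\frac{11}{4900})\le\frac{\kappa}{4}$.

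Then I substitute Lemma~\ref{lem:sum_ub} for $S$. This produces three kinds of terms.
\begin{itemize}
\item A gradient term $\frac{\kappa}{4}\cdot11(3\teta^2m^2+\eta_a^2Q^2)nmQ\,\E\norm{\Fnor{z_{t_1}}}^2$. Since $\kappa nQ\propto \teta L^2/(1-\gamma\rho)^2$ and $\teta^2m^2,\eta_a^2Q^2m\lesssim (1-\gamma\rho)^2/[4900(L^2+\gamma^{-2})]$, it is at most a small multiple of $\teta m\,\E\norm{\Fnor{z_{t_1}}}^2$, roughly $\teta m/60$.
\item Terms $D_{t_1}$ with coefficient $\frac{\kappa}{4}\bigl[\frac{110\teta^2L^2m^2}{(1-\gamma\rho)^2}+\frac{33\eta_a^2Q^2L^2m}{(1-\gamma\rho)^2}\bigr]$, which is negligible relative to $\kappa$.
\item Noise terms $\frac{\kappa}{4}\cdot55(\eta_a^2Q^2m+2\teta^2m^2/n)n\sigma^2$. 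The first piece gives $\propto\teta\eta_a^2L^2Qm\sigma^2/(1-\gamma\rho)^2$, the $1375$ term. The second gives $\teta\sigma^2/(nQ)$ times a small constant, which is added to the $5\teta\sigma^2/(nQ)$ from Lemma~\ref{lem:cH_descent} to yield $6\teta\sigma^2/(nQ)$.
\end{itemize}
The term $\kappa\cdot25\eta_a^2Q^2n\sigma^2$ gives the $625\teta\eta_a^2QL^2\sigma^2/(1-\gamma\rho)^2$ term. The gradient term $\kappa\cdot7\eta_a^2Q^3n\,\E\norm{\Fnor{z_{t_1}}}^2\le 175\teta\eta_a^2Q^2L^2/(1-\gamma\rho)^2\,\E\norm{\Fnor{z_{t_1}}}^2$ is also at most about $\teta/28\le \teta m/28$ times $\E\norm{\Fnor{z_{t_1}}}^2$.

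Finally, the main descent coefficient from Lemma~\ref{lem:cH_descent} is $\frac{\teta m}{2}\bigl(\cC_0-\frac{32\teta^2m^2}{\gamma^2(1-\gamma\rho)^2}\bigr)$. With $\gamma\le1/[5(\rho+L)]$ we have $\gamma\rho,\gamma L\le 1/5$, hence $\cC_0\ge$ about $0.47$. The subtracted term is at most $32/4900$. This leaves about $0.23\,\teta m$. After subtracting the absorbed gradient terms, at least $\teta m/9$ remains. The stepsize hypotheses also imply those of the three lemmas, including $\eta_a\le(1-\gamma\rho)/(6QL)$ and $\teta\le\min\{(1-\gamma\rho)\gamma/(10m),1/(10mL)\}$.

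The main obstacle is the constant bookkeeping. It requires making sure the $D_{t_1}$ coefficients sum to at most $\kappa$ and that all absorbed gradient terms stay below roughly $0.12\,\teta m$. Here I would check each bound using $L^2+\gamma^{-2}\ge L^2$ and $\ge\gamma^{-2}$ as needed.
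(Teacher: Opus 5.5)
Your proposal is correct and follows essentially the same route as the paper. You split off the $t=t_1$ term of the error sum in Lemma~\ref{lem:cH_descent}, dominate the remainder by the sum over $t_1+1,\dots,t_2$, combine with Lemma~\ref{lem:zt2ell_zt2} with the weight $25$ chosen so the coefficients stay below $\kappa$ and $\kappa/4$, then substitute Lemma~\ref{lem:sum_ub} and absorb. One small arithmetic slip: the absorbed gradient term from Lemma~\ref{lem:sum_ub} is at most about $\frac{275}{4900}\teta m\approx 0.056\,\teta m$, not $\teta m/60$. Together with the $\teta/28$ term this still stays below your $0.12\,\teta m$ budget and leaves roughly $0.14\,\teta m\ge \teta m/9$, so the conclusion holds.
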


\begin{proof}
    See Appendix~\ref{app:cL_descent}.
\end{proof}

Because the above analysis holds for any $t_1<t_2 = t_1 + m$, we define a subsequence $\crki{\tk_j}_{j=0}^R$ of the communication rounds $\crki{0,1,2,\ldots, T-1}$, where $\tk_j = jm$ for $j=0,1,\ldots, R$ and the total number of communication rounds satisfies $T = mR + S\ (0\leq S< m)$, as illustrated in Figure~\ref{fig:tkj}. By construction, this subsequence satisfies $\tk_{j + 1} - \tk_j = m$. 

\begin{figure}[tbp]
    \centering
    \resizebox{0.9\columnwidth}{!}{
    \begin{tikzpicture}
        \draw[thick,->] (0,0) -- (14,0) node[right]{$t$};
    
        \node at (0, 0) [below, yshift=-0.2cm] {$0$};
        \node at (1, 0) [below, yshift=-0.2cm] {$1$};
        \node at (2, 0) [below, yshift=-0.2cm] {$2$};
        \node at (4, 0) [below, yshift=-0.2cm] {$m$};
        \node at (5, 0) [below, yshift=-0.12cm] {$m+1$};
        \node at (7, 0) [below, yshift=-0.2cm] {$2m$};
        \node at (10, 0) [below, yshift=-0.2cm] {$Rm$};
        \node at (13, 0) [below, yshift=-0.2cm] {$T-1$};
    
        \node at (0, -1) {$\tk_0$};
        \node at (4, -1) {$\tk_1$};
        \node at (7, -1) {$\tk_2$};
        \node at (10, -1) {$\tk_R$};
        \node at (13, -1) {$\tk_R + S-1$};
        
        \node at (3, 0) [below, yshift=-0.2cm] {$\cdots$};
        \node at (6, 0) [below, yshift=-0.2cm] {$\cdots$};
        \node at (8, 0) [below, yshift=-0.2cm] {$\cdots$};
        \node at (12, 0) [below, yshift=-0.2cm] {$\cdots$};
    
        \draw[thick] (-0.3, -0.2) -- (-0.3, -1.2) -- (0.3, -1.2) -- (0.3, -0.2) -- cycle;
        \draw[thick] (3.7, -0.2) -- (3.7, -1.2) -- (4.3, -1.2) -- (4.3, -0.2) -- cycle;
        \draw[thick] (6.7, -0.2) -- (6.7, -1.2) -- (7.3, -1.2) -- (7.3, -0.2) -- cycle;
        \draw[thick] (9.7, -0.2) -- (9.7, -1.2) -- (10.3, -1.2) -- (10.3, -0.2) -- cycle;
        \draw[thick] (12.1, -0.2) -- (12.1, -1.2) -- (13.8, -1.2) -- (13.8, -0.2) -- cycle;
    \end{tikzpicture}
    }
    \caption{An illustration of the subsequence $\crki{\tk_j}_{j=0}^R$.}
    \label{fig:tkj}
\end{figure}

To complete the proof of Theorem~\ref{thm:ncvx}, it remains to relate $\sum_{t=0}^{T-1}\E\brki{\normi{\Fnor{z_t}}^2}/T$ and $\sum_{j=0}^{R}\E\brki{\normi{\Fnor{z_{\tk_j}}}^2}$. This is achieved by noting that
\begin{align*}
    &\frac{1}{T}\sum_{t=0}^{T-1}\E\brk{\norm{\Fnor{z_t}}^2} \leq \frac{2m}{T}\sum_{j=0}^{R}\E\brk{\norm{\Fnor{z_{\tk_j}}}^2} + \frac{2\Ln^2}{T}\sum_{j=0}^{R}\sum_{s=0}^{m-1}\E\brk{\norm{z_{\tk_j + s} - z_{\tk_j}}^2},
\end{align*}
where the last term corresponds to the quantities bounded by Lemma~\ref{lem:sum_ub}. Combining the preceding lemmas yields Theorem~\ref{thm:ncvx}.

Regarding Theorem~\ref{thm:PL}, we derive the linear convergence result by noting the connection between the PL condition and the normal map:
\begin{equation}
        \label{eq:PL_Fnor_sketch}
        \begin{aligned}
          2\mu\prt{\psi(x_t) - \psi^*}&\leq \brk{\dist\prt{0,\partial\psi\prt{x_t}}}^2 \leq \norm{\Fnor{z_t}}^2,\;\forall t\geq 0.
        \end{aligned}
    \end{equation}

\section{Numerical Experiments}
    
    In this section, we evaluate the practical performance of \normfl\ on composite FL tasks under \textit{heterogeneous data} settings. All methods use full client participation, the same number of communication rounds, and identical model initialization. 
    
    \subsection{A Shallow Neural Network}
    
    We consider a multi-class classification task on the MNIST data set \citep{mnist} using a one-hidden-layer neural network with sigmoid activations. The objective includes an elastic net regularizer \citep{zou2005regularization} defined as \[\varphi(x) = \nu_1 \normi{x}_1 + \nu_2\normi{x}_2^2,\]
    with $\nu_1 = 0.001$ and $\nu_2 = 0.01$. This formulation aligns with the structure of \eqref{eq:P}. 

    To simulate data heterogeneity, we sort training samples by class labels and partition them among clients without overlap. We compare \normfl\ against the method in \citep{zhang2024composite} (denoted Zhang) and FedCanon \citep{zhou2025fedcanon}. All reported results are averaged over $10$ independent trials. 

    \begin{figure}[tbp]
        \centering
        \subfloat[$n=20$, $Q=10$.]{\includegraphics[width=0.33\columnwidth]{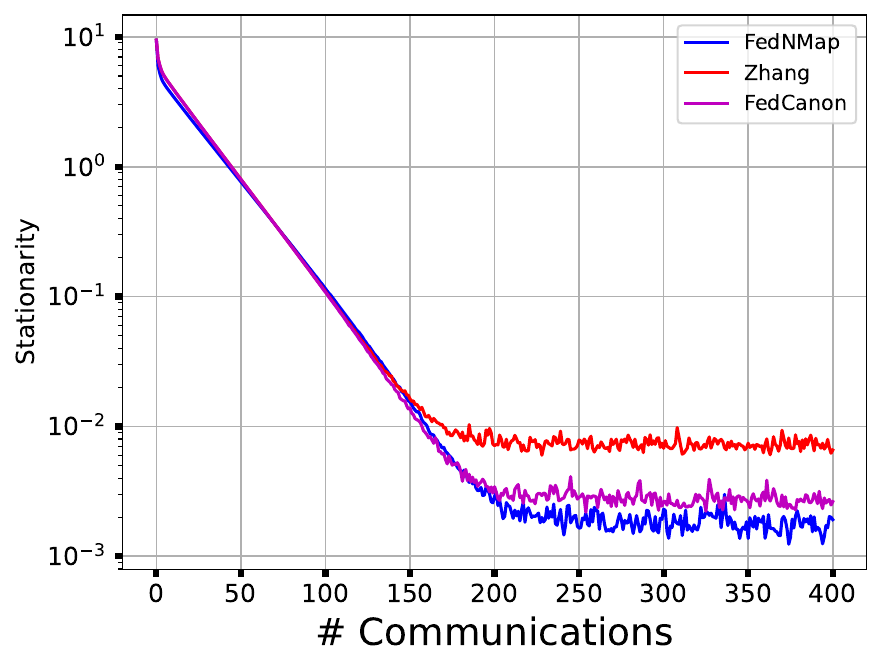}\label{fig:nn_n20Q10}}
        \subfloat[$n=50$, $Q=10$.]{\includegraphics[width=0.33\columnwidth]{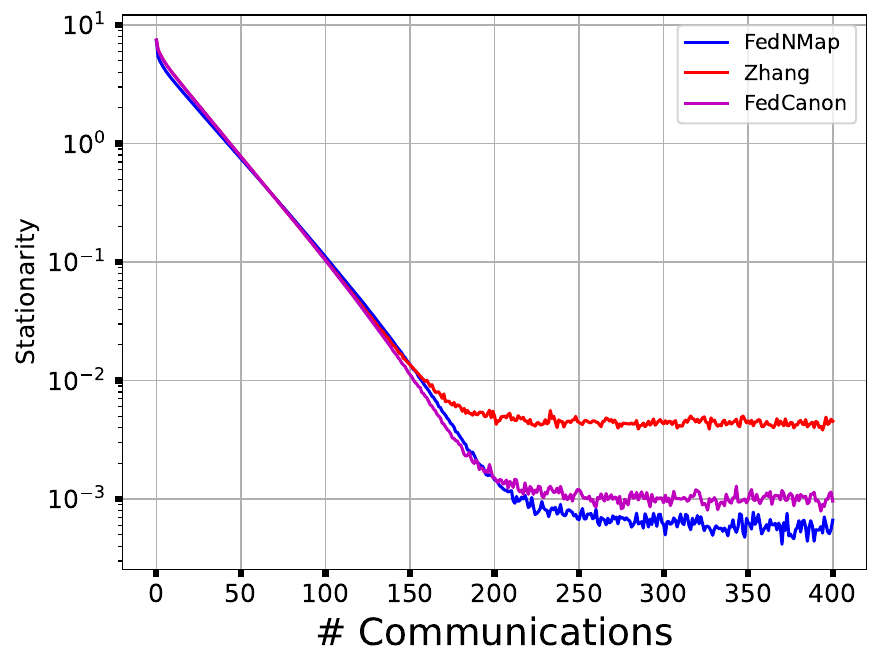}\label{fig:nn_n50Q10}}
        \subfloat[$n=100$, $Q = 10$.]{\includegraphics[width=0.33\columnwidth]{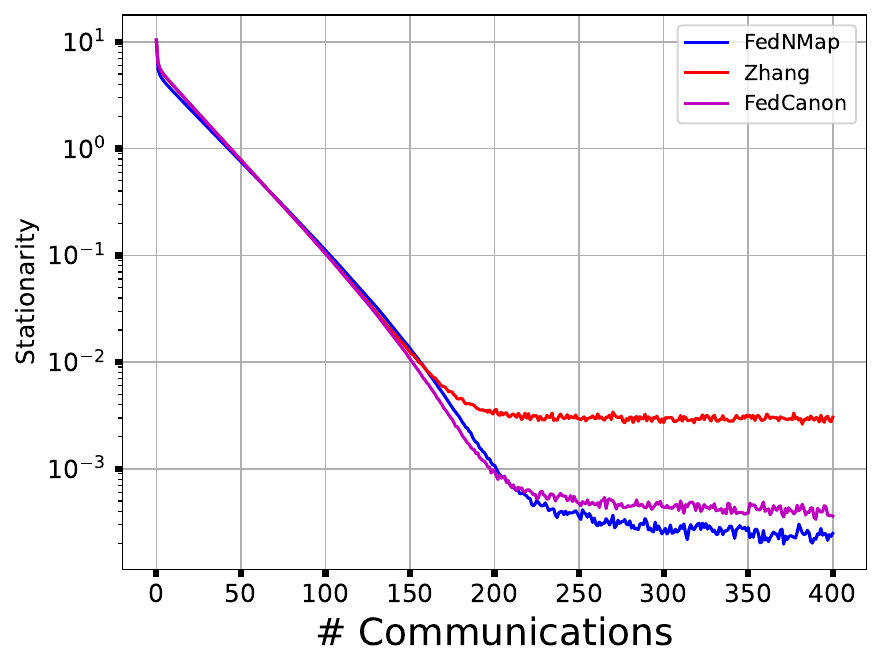}\label{fig:nn_n100Q10}}\\
        \subfloat[$n=20$, $Q=20$.]{\includegraphics[width=0.33\columnwidth]{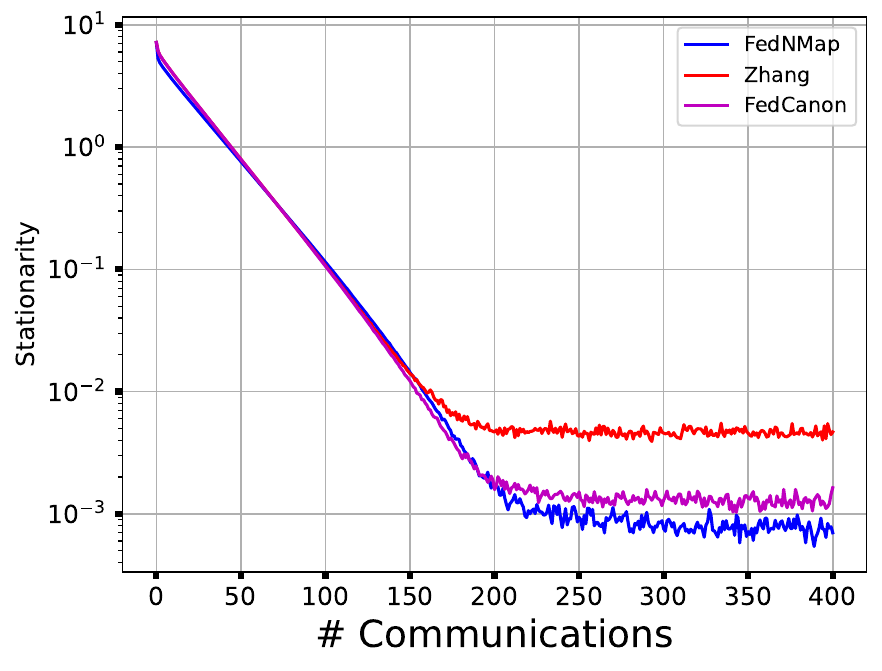}\label{fig:nn_n20Q20}}
        \subfloat[$n=50$, $Q=20$.]{\includegraphics[width=0.33\columnwidth]{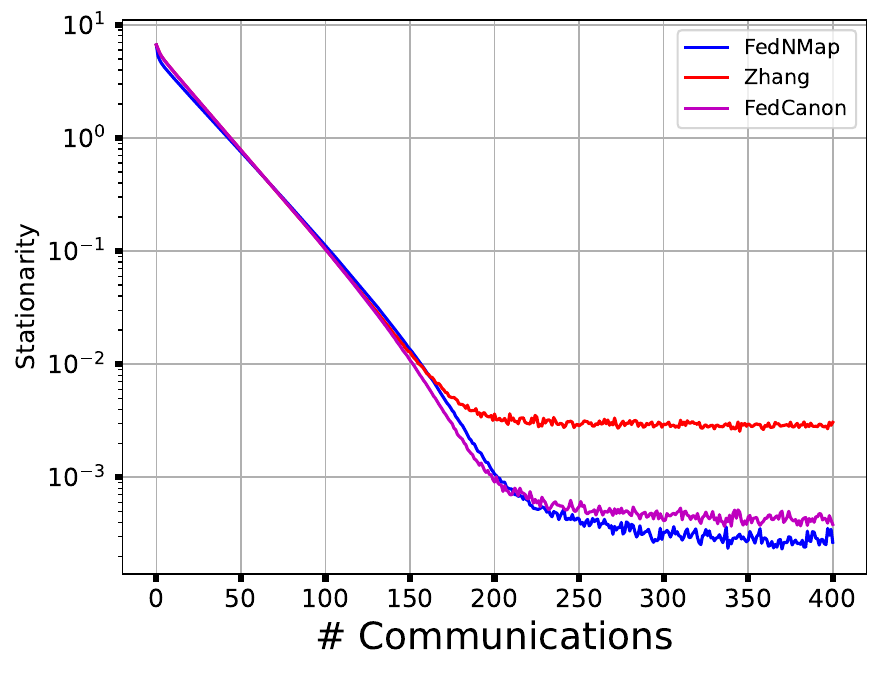}\label{fig:nn_n50Q20}}
        \subfloat[$n=100$, $Q=20$.]{\includegraphics[width=0.33\columnwidth]{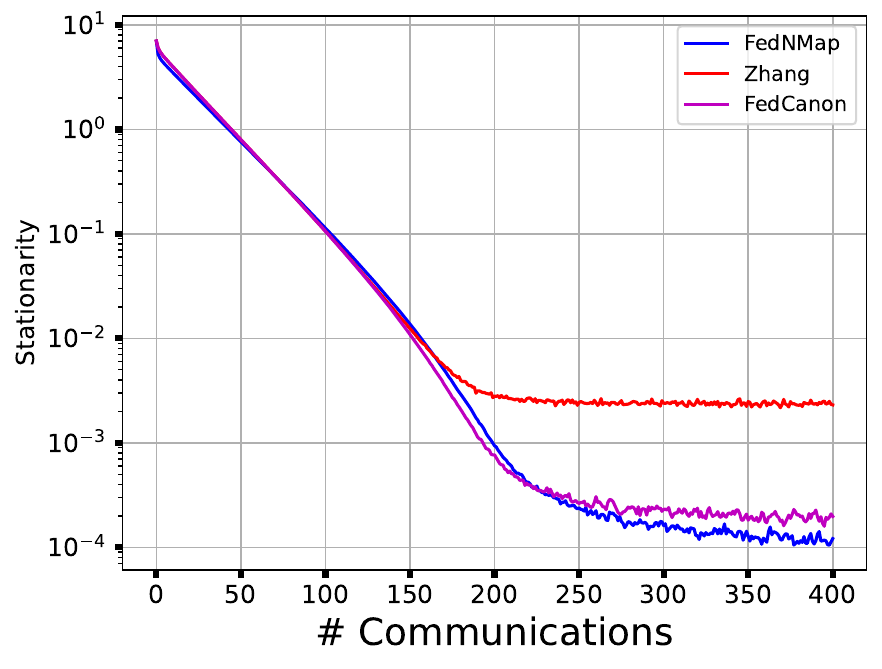}\label{fig:nn_n100Q20}}
        \caption{Comparison of \normfl, Zhang \citep{zhang2024composite}, and FedCanon~\citep{zhou2025fedcanon} for training a one-hidden-layer neural network with an elastic net regularizer on the MNIST data set. 
        The stepsizes are set to $\eta_a = 1/Q$ for local updates and $\eta_s = 1$ for the outer loop across all methods. The parameter $\gamma$ in \normfl\ is set to $4$.}
        \label{fig:nn}
    \end{figure}

    Figure~\ref{fig:nn} reports the stationarity measure $\normi{\Fnat{x_t}}^2$ against the number of communication rounds for varying numbers of clients $n$ and local updates $Q$. The stepsize rule $\eta_a = 1/Q$ ensures that the effective stepsize $\teta = \eta_a\eta_s Q$ remains unchanged for different values of $Q$. This makes the convergence speeds comparable, so lower stationarity values indicate better convergence. As shown in Figure~\ref{fig:nn}, \normfl\ consistently achieves lower stationarity values than the other two methods across all tested configurations. Moreover, increasing the number of local updates from $Q=10$ to $Q=20$ (comparing Figures~\ref{fig:nn_n20Q10} and \ref{fig:nn_n20Q20}, Figures~\ref{fig:nn_n50Q10} and \ref{fig:nn_n50Q20}, or Figures~\ref{fig:nn_n100Q10} and \ref{fig:nn_n100Q20}) improves convergence for a fixed $n$. Similarly, increasing the number of clients from $n=20$ to $n=100$ (comparing Figures~\ref{fig:nn_n20Q10}--\ref{fig:nn_n100Q10} or Figures~\ref{fig:nn_n20Q20}--\ref{fig:nn_n100Q20}) improves convergence for a fixed $Q$. These empirical observations are consistent with the dependence on $n$ and $Q$ predicted by Theorem~\ref{thm:ncvx} and further demonstrate the linear speedup for \normfl.

    To further examine the linear speedup of \normfl\ predicted by Theorem~\ref{thm:ncvx}, Figures~\ref{fig:speedup_n} and \ref{fig:speedup_Q} isolate the effects of the number of clients $n$ and the number of local updates $Q$, respectively.

    Figure~\ref{fig:speedup_n_error} plots the stationarity measure against the number of communication rounds for different values of $n$ with $Q = 10$ fixed and a common effective stepsize $\teta = \eta_a\eta_s Q$. The results show that larger $n$ leads to a smaller stationarity measure throughout the $T$ communication rounds, which is consistent with the $\orderi{1/\sqrt{nQT}}$ convergence rate established in Theorem~\ref{thm:ncvx}. Figure~\ref{fig:speedup_n_loglog} reports the final stationarity measure against $\log n$. The steeper empirical slope ($-1.436$) compared to the theoretical prediction $(-0.5)$ suggests that the practical speedup with respect to $n$ can be better in some problems.

    \begin{figure}[tbp]
        \centering
        \subfloat[Stationarity with different $n$.]{\includegraphics[width=0.35\columnwidth]{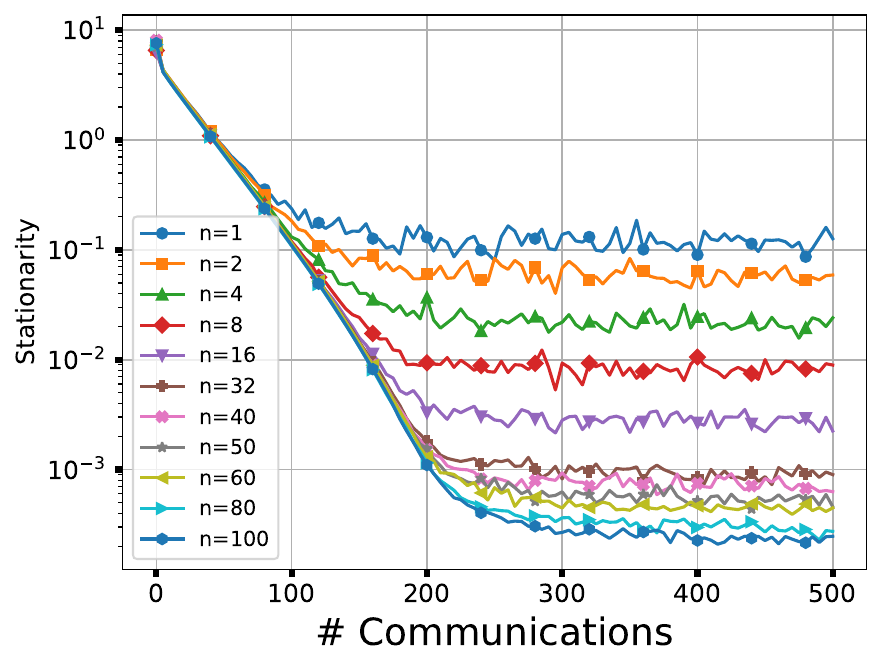}\label{fig:speedup_n_error}}
        \subfloat[Final stationarity against $\log n$.]{\includegraphics[width=0.35\columnwidth]{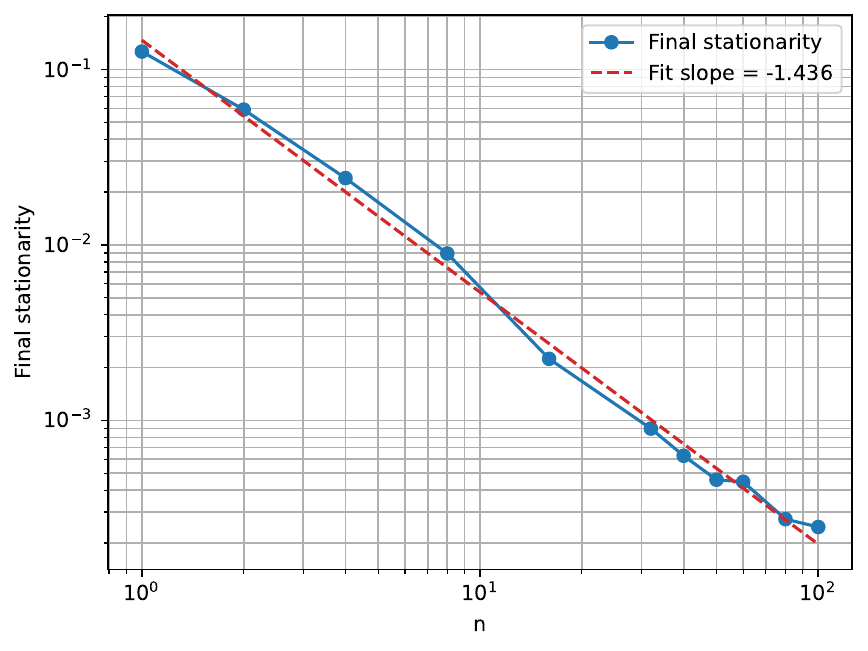}\label{fig:speedup_n_loglog}}
        \caption{Empirical evaluation of the linear speedup of \normfl\ with respect to the number of clients $n$ for fixed $Q = 10$. The stepsizes are set to $\eta_a = 0.1$ for local updates and $\eta_s = 1$ for the outer loop across all $n$. The parameter $\gamma$ in \normfl\ is set to $4$.}
        \label{fig:speedup_n}
    \end{figure}

    Figure~\ref{fig:speedup_Q_error} plots the stationarity measure against the number of communication rounds for different values of $Q$ with $n = 30$ fixed and a common effective stepsize $\teta = \eta_a\eta_s Q$. The results show that larger $Q$ leads to a smaller stationarity measure, which is consistent with the $\orderi{1/\sqrt{nQT}}$ convergence rate established in Theorem~\ref{thm:ncvx}. Figure~\ref{fig:speedup_Q_loglog} reports the final stationarity measure as a function of $\log Q$. The steeper empirical slope ($-1.181$) compared to the theoretical prediction $(-0.5)$ suggests that the practical speedup with respect to $Q$ can be better in some problems.
    \begin{figure}[t]
        \centering
        \subfloat[Stationarity with different $Q$.]{\includegraphics[width=0.35\columnwidth]{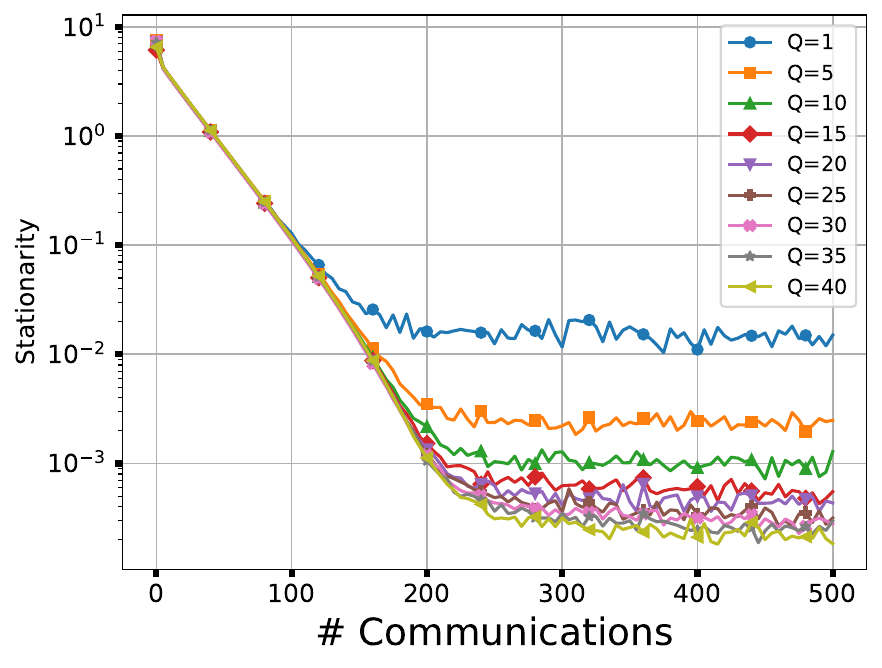}\label{fig:speedup_Q_error}}
        \subfloat[Final stationarity against $\log Q$.]{\includegraphics[width=0.35\columnwidth]{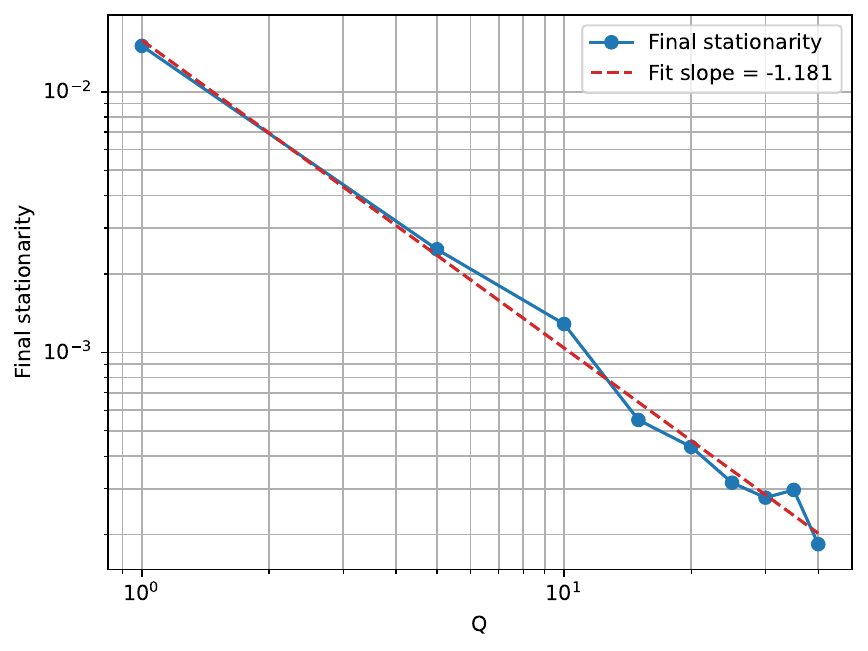}\label{fig:speedup_Q_loglog}}
        \caption{Empirical evaluation of the linear speedup property of \normfl\ with respect to the number of local updates $Q$ with a fixed $n = 30$. The stepsizes are set to $\eta_a = 1/Q$ for local updates and $\eta_s = 1$ for the outer loop across $Q$. The parameter $\gamma$ in \normfl\ is set to $4$.}
        \label{fig:speedup_Q}
    \end{figure}

    We also examine how the choice of $\eta_a$ affects convergence behavior as $Q$ varies. In Figure~\ref{fig:etaa}, we fix $n = 30$ and $\eta_s = 1$, vary $Q$, and consider three local stepsize rules: $\eta_a = 1/Q$, $\eta_a = 1/\sqrt{Q}$, and $\eta_a = 0.05$. In Figure~\ref{fig:etaa_Q} ($\eta_a = 1/Q$), all curves share the same effective stepsize $\teta = \eta_a \eta_s Q$, and larger $Q$ yields a smaller stationarity measure. In Figures~\ref{fig:etaa_sqrtQ} ($\eta_a = 1/\sqrt{Q}$) and \ref{fig:etaa_005} ($\eta_a = 0.05$), the effective stepsize $\teta$ increases with $Q$, which leads to faster convergence for larger $Q$ but a slightly higher yet comparable final stationarity measure. All the results demonstrate the linear speedup property of \normfl.

    \begin{figure}[tbp]
        \centering
        \subfloat[Stationarity with $\eta_a = 1/Q$.]{\includegraphics[width=0.33\columnwidth]{figs/speedup/normfl_all_q_n30nt10_False_prox.pdf}\label{fig:etaa_Q}}
        \subfloat[Stationarity with $\eta_a = 1/\sqrt{Q}$.]{\includegraphics[width=0.33\columnwidth]{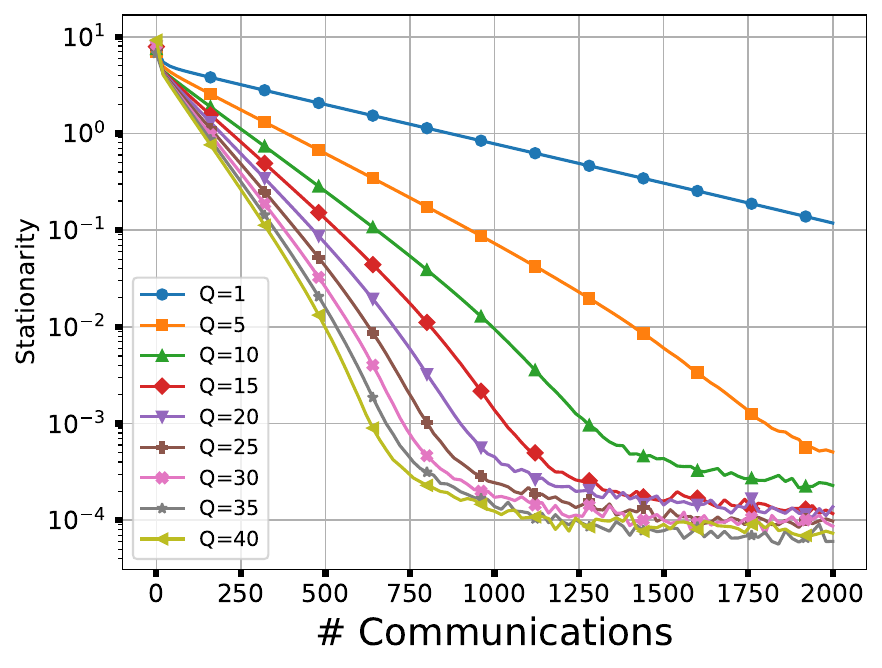}\label{fig:etaa_sqrtQ}}
        \subfloat[Stationarity with $\eta_a =0.05$.]{\includegraphics[width=0.33\columnwidth]{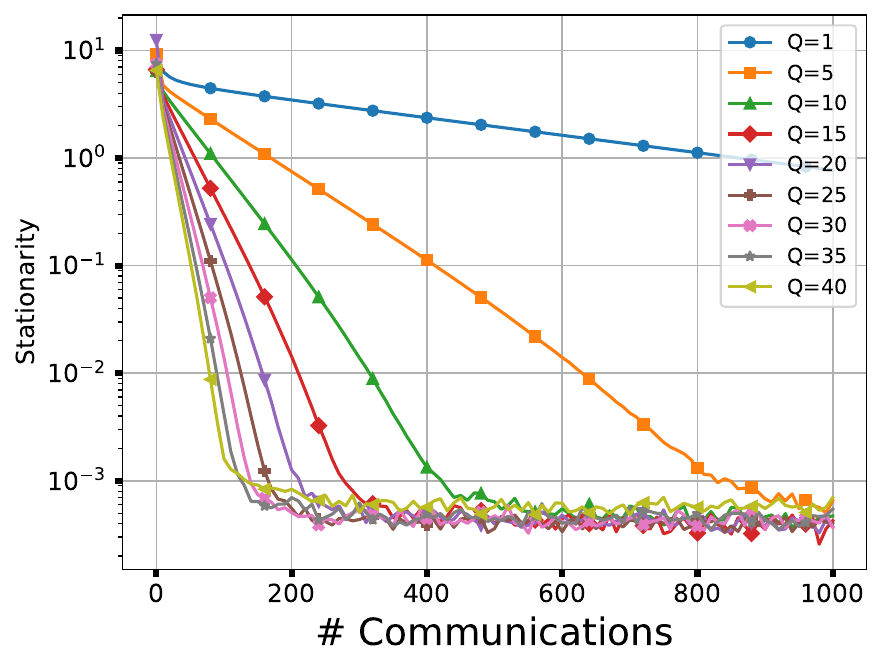}\label{fig:etaa_005}}
        \caption{Empirical evaluation of \normfl\ under different choices of the local stepsize $\eta_a$. We fix $n=30$, $\eta_s = 1$, and $\gamma = 4$.}
        \label{fig:etaa}
    \end{figure}

    \subsection{MobileNetV3-Small} 
    To assess the broader applicability of \normfl, we conduct experiments on MobileNetV3-Small \citep{howard2019searching}, a modern lightweight architecture with $1.5$M parameters designed for mobile devices. Since the model is intended for resource-constrained and low-latency scenarios, inducing sparsity is especially important. Accordingly, we add an $\ell_1$-norm regularizer to encourage sparse parameterization and use Hoyer's sparsity $s:= (\sqrt{p} - \normi{x}_1/\normi{x}_2)/(\sqrt{p}-1)$ for $x\in\R^p$ to quantify the resulting sparsity level.
    In Figure~\ref{fig:mobilenet}, \normfl\ consistently achieves higher sparsity and lower training loss than Zhang and FedCanon.
\begin{figure}[tb]
        \centering
        \subfloat[Hoyer's Sparsity, $n=20$, $Q=10$]{\includegraphics[width=0.33\columnwidth]{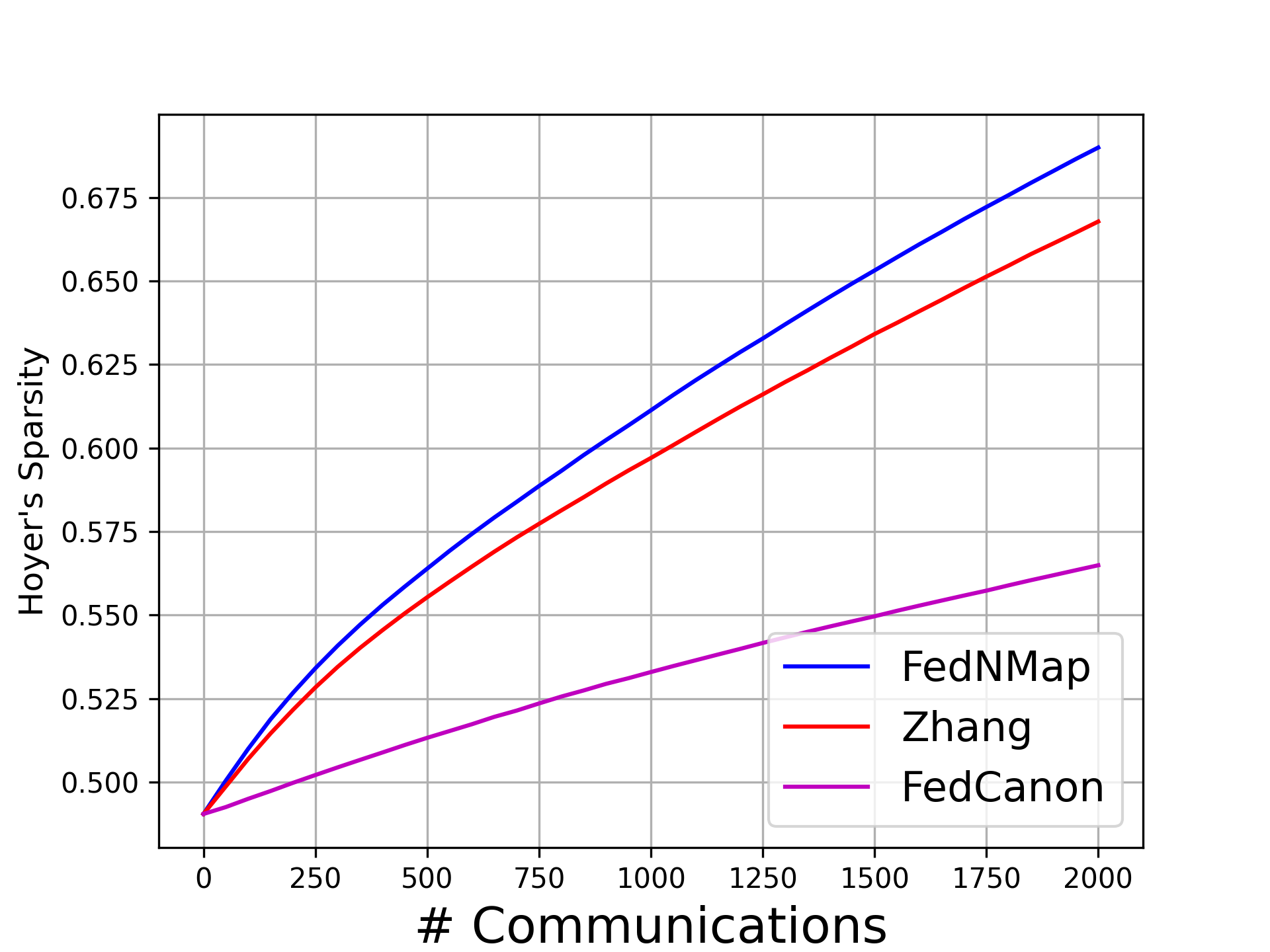}\label{fig:mb_hoyer_n20Q10}}
        \subfloat[Hoyer's Sparsity, $n=20$, $Q=20$]{\includegraphics[width=0.33\columnwidth]{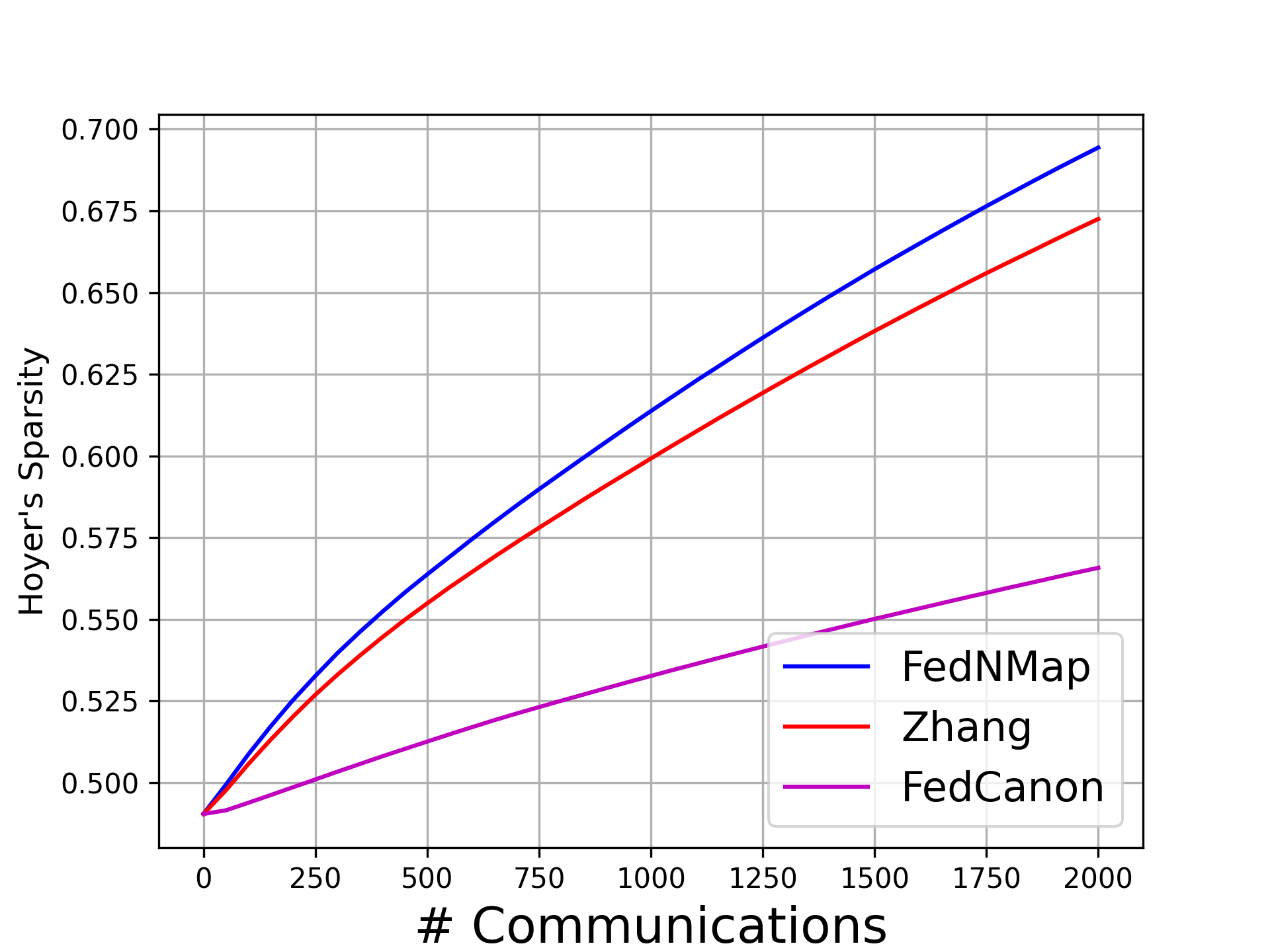}\label{fig:mb_hoyer_n20Q20}}
        \subfloat[Hoyer's Sparsity, $n=50$, $Q=10$]{\includegraphics[width=0.33\columnwidth]{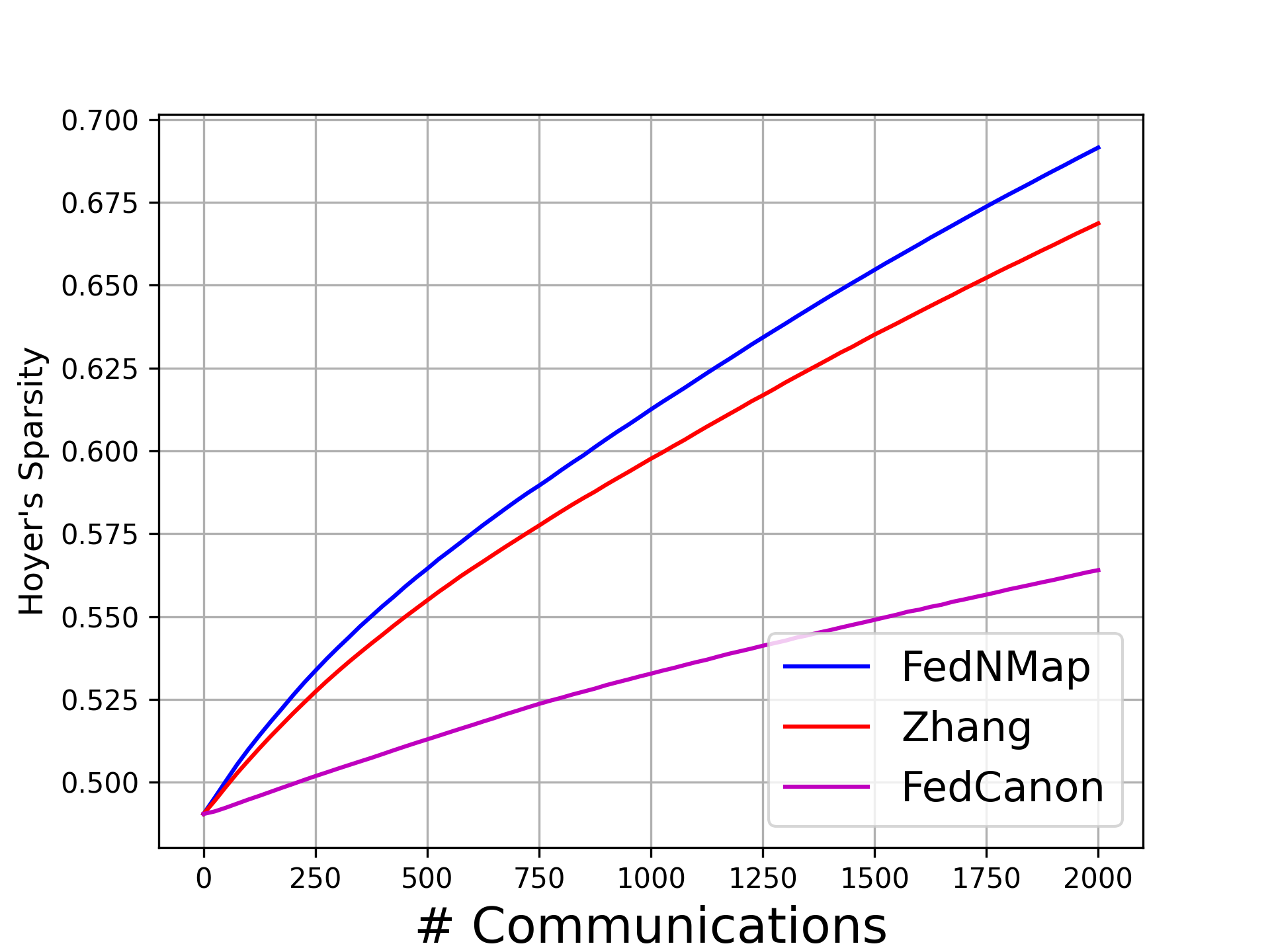}\label{fig:mb_hoyer_n50Q10}}\\
        \subfloat[Training Loss, $n=20$, $Q=10$]{\includegraphics[width=0.33\columnwidth]{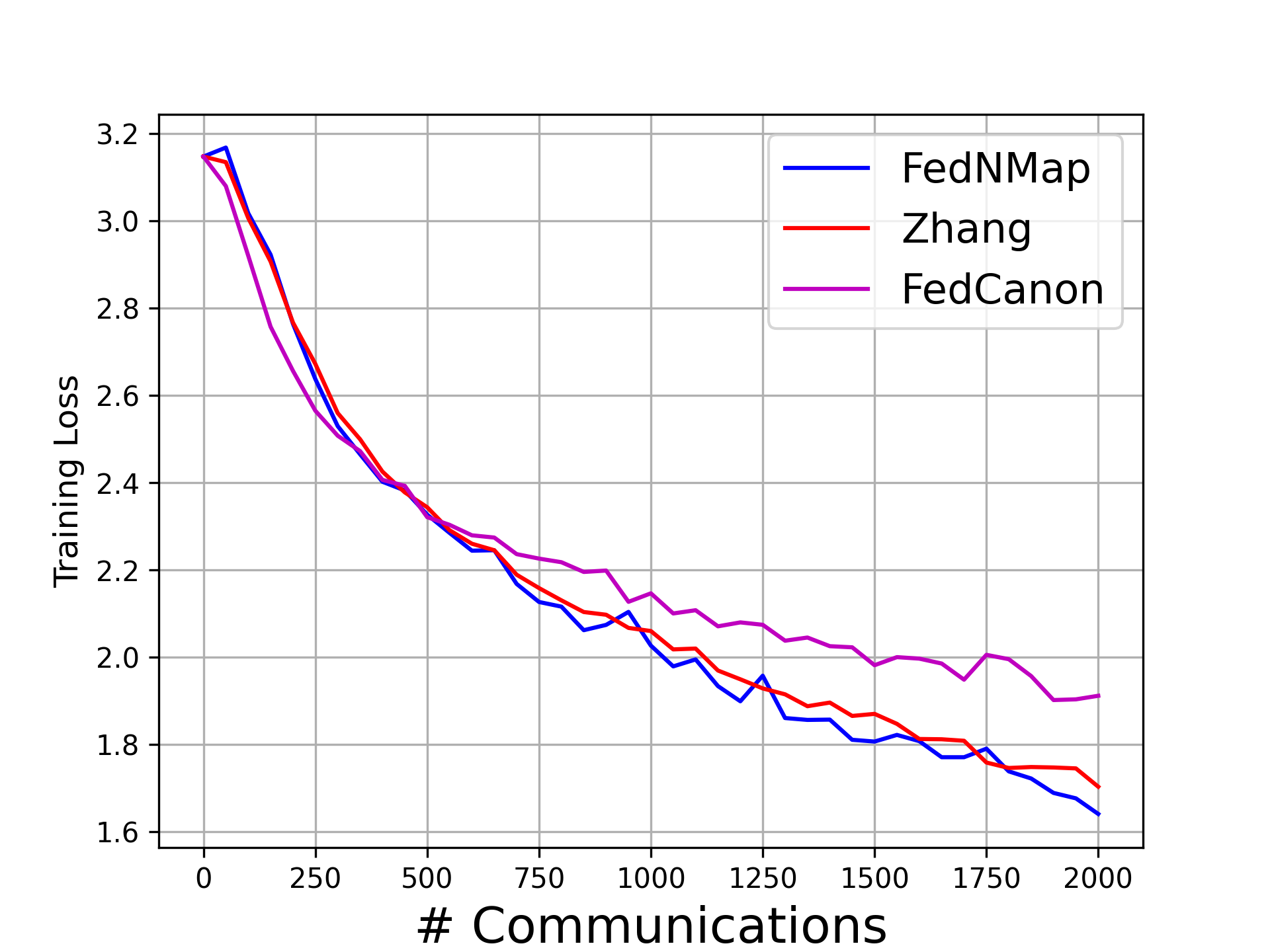}\label{fig:mb_loss_n20Q10}}
        \subfloat[Training Loss, $n=20$, $Q=20$]{\includegraphics[width=0.33\columnwidth]{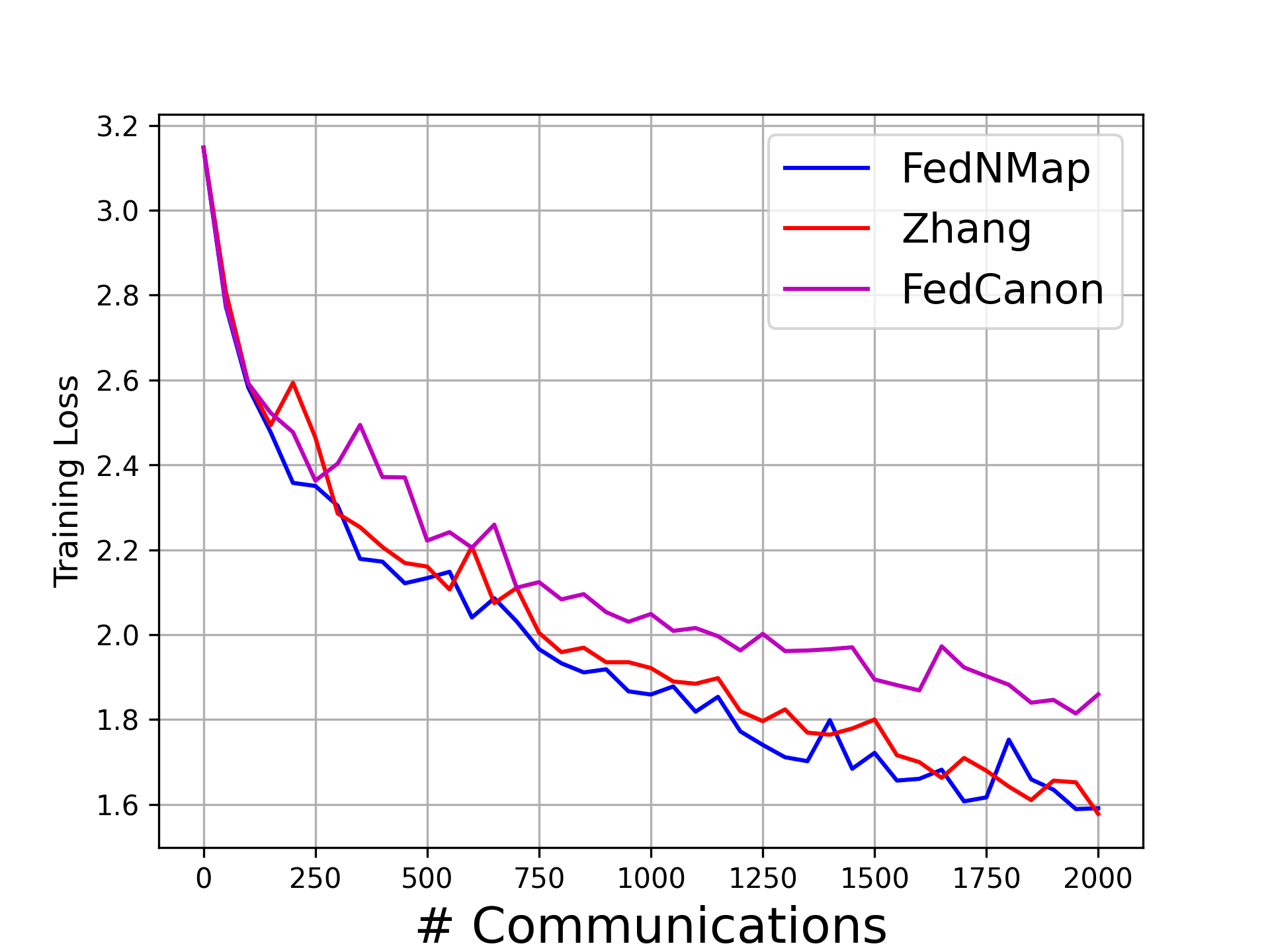}\label{fig:mb_loss_n50Q10}}
        \subfloat[Training Loss, $n=50$, $Q=10$]{\includegraphics[width=0.33\columnwidth]{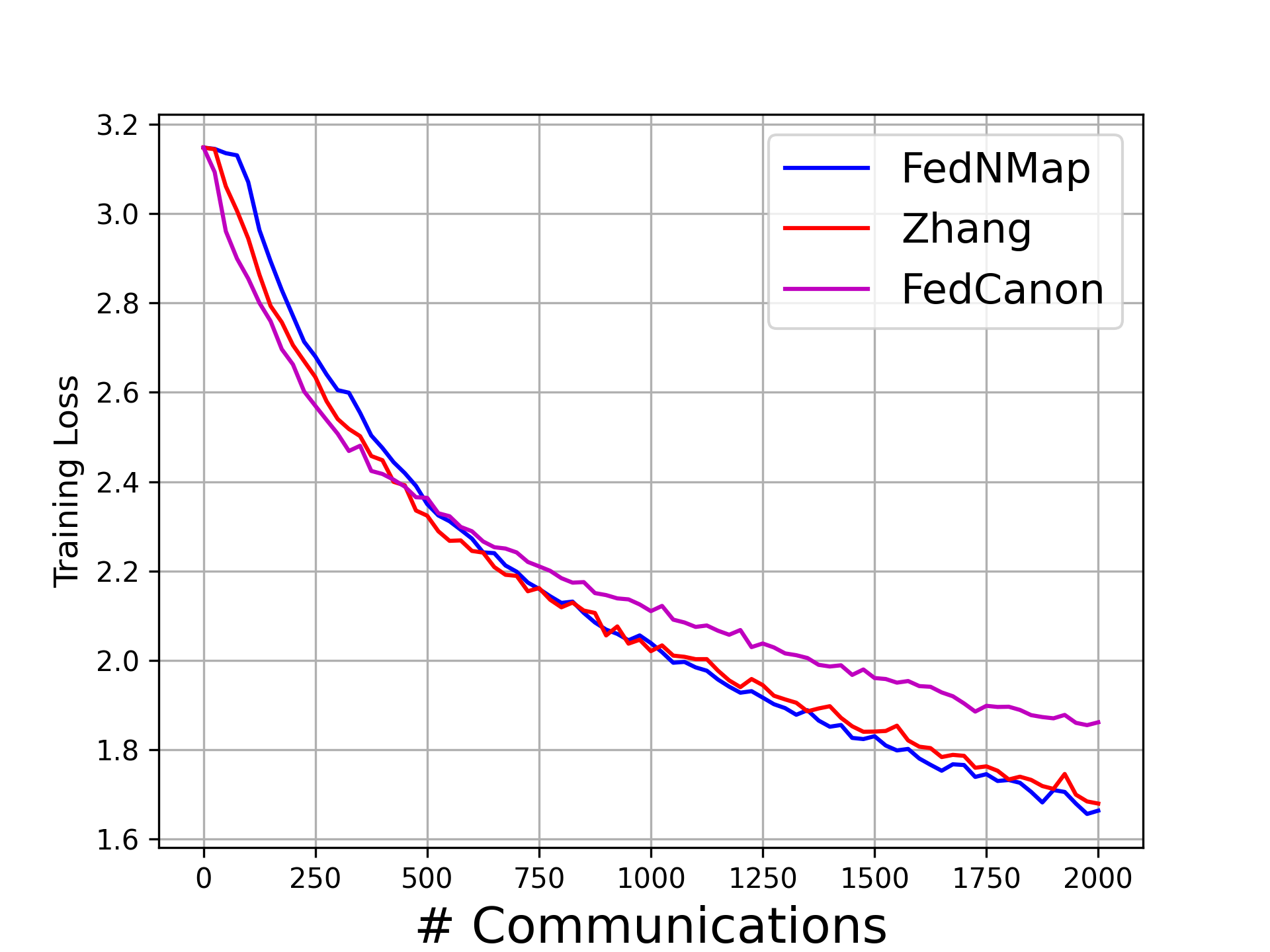}\label{fig:mb_loss_n50Q20}}
        \caption{Comparison of \normfl, Zhang, and FedCanon for training MobileNetV3-Small with an $\ell_1$-norm regularizer on the CIFAR-10. The stepsizes are set to $\eta_a = 1 / Q$ and $\eta_s=1$. The number of local updates is $Q\in\{10,20\}$, the number of clients is $n\in\{20,50\}$, and the parameter $\gamma$ in \normfl\ is set to $2.5$.}
        \label{fig:mobilenet}
    \end{figure}

\section{Conclusion}
    This paper introduced \normfl, a novel algorithm for solving composite optimization problems in federated learning. By integrating a normal map-based update with a local correction mechanism, \normfl\ effectively addresses the challenges posed by a nonsmooth regularized term and data heterogeneity. We showed that linear speedup is achievable for composite FL with respect to both the number of clients and the number of local updates, covering both general nonconvex objectives and those that fulfill the PL condition. 
   Numerical experiments validated our theoretical findings and demonstrated the effectiveness of the proposed algorithm.




\appendix
\section{Derivations for Related Methods}
\label{app:tracking}

We present the derivations for comparison with existing methods below.

\subsection{Method of Zhang et al.}
\label{app:zhang}
    We restate the update rules in \citep{zhang2024composite} and rewrite them in a form directly comparable to \normfl.
    The method in \citep{zhang2024composite} performs local updates as
    \begin{equation}
        \label{eq:local_zhang}
        \begin{aligned}
            z_{i,t}^{\ell + 1} &= z_{i,t}^\ell - \eta_a\prt{g_i(x_{i,t}^\ell;\xi_{i,t}^\ell) + c_{i,t}},\; x_{i,t}^{\ell + 1} = \prox{(\ell + 1)\eta_a\varphi}(z_{i,t}^{\ell + 1}),
        \end{aligned}
    \end{equation}
    where $z_{i,t}^0 = \prox{\teta}(z_t) = x_t$ and $x_{i,t}^0 = x_t$ are received from the server. The server update is
    \begin{equation}
        \label{eq:server_zhang}
        \begin{aligned}
            z_{t + 1} &= x_t + \eta_s\prt{\frac{1}{n}\sumn z_{i,t}^Q - x_t},\; x_{t + 1} = \prox{\teta\varphi}(z_{t + 1}).
        \end{aligned}
    \end{equation}

    The correction term $c_{i,t}$ is updated as
    \begin{equation}
        \label{eq:cit_zhang}
        \begin{aligned}
            c_{i, t + 1} &= \frac{1}{\teta}\prt{x_t - z_{t + 1}} - \frac{1}{Q}\sum_{\ell=0}^{Q-1}g_i(x_{i,t}^\ell;\xi_{i,t}^\ell)\\
            &= \frac{1}{nQ}\sum_{\ell=0}^{Q-1}\sumn g_i(x_{i,t}^\ell;\xi_{i,t}^\ell) + \frac{1}{n}\sumn c_{i,t} - \frac{1}{Q}\sum_{\ell=0}^{Q-1}g_i(x_{i,t}^\ell;\xi_{i,t}^\ell).
        \end{aligned}
    \end{equation}

    By introducing $y_{i,t}^{\rm (Z)} = \prti{x_{t} - z_{i,t}^Q}/(\eta_a Q) = \sum_{\ell=0}^{Q-1}g_i(x_{i,t}^\ell;\xi_{i,t}^\ell)/Q + c_{i,t}$, we can rewrite the correction update and the server update in terms of $y_{i,t}^{\rm (Z)}$ below. The correction update becomes
    \begin{equation}
        \label{eq:cit_yit_zhang}
        \begin{aligned}
            c_{i,t + 1} &= \frac{1}{n}\sumn y_{i,t}^{\rm (Z)} - y_{i,t}^{\rm (Z)} +c_{i,t},
        \end{aligned}
    \end{equation}
    and the server updates as
    \begin{equation}
        \label{eq:server_yit_zhang}
        \begin{aligned}
            z_{t + 1} &= x_t - \frac{Q\eta_s\eta_a}{n}\sumn y_{i,t}^{\rm (Z)},\; x_{t + 1} = \prox{\teta}(z_{t + 1}).
        \end{aligned}
    \end{equation}

    A derivation analogous to \eqref{eq:yitp1} demonstrates that $y_{i,t}^{\rm (Z)}$ tracks the stochastic gradient as in \eqref{eq:zhang_tracking}.

\subsection{SCAFFOLD}
\label{app:scaffold}
    For completeness, we also restate the update rules of SCAFFOLD \citep{karimireddy2020scaffold} and express them in a form directly comparable to \normfl.

    The local updates of SCAFFOLD are given by
    \begin{equation}
        \label{eq:local_scaffold}
        \begin{aligned}
            x_{i,t}^{\ell + 1} &= x_{i,t}^\ell - \eta_a\brk{g_i(x_{i,t}^\ell;\xi_{i,t}^\ell) - c_{i,t} + c_t},\; x_{i,t}^0 = x_t,\\
            c_{i,t+1} &= c_{i,t} - c_t + \frac{1}{\eta_a Q}\prt{x_t - x_{i,t}^Q} = \frac{1}{Q}\sum_{\ell=0}^{Q-1}g_i(x_{i,t}^\ell;\xi_{i,t}^\ell).
        \end{aligned}
    \end{equation}

    The server updates as
    \begin{equation}
        \label{eq:server_scaffold}
        \begin{aligned}
            x_{t + 1} = x_t + \frac{\eta_s}{n}\sumn\prt{x_{i,t}^Q - x_t},\;
            c_{t + 1} = c_t + \frac{1}{n}\sumn\prt{c_{i,t + 1} - c_{i,t}}.
        \end{aligned}
    \end{equation}

    By introducing $y_{i,t}^{\rm (S)} = (x_t - x_{i,t}^Q)/(\eta_a Q)$, we can rewrite \eqref{eq:server_scaffold} as
    \begin{equation}
        \label{eq:server_yit_scaffold}
        \begin{aligned}
            x_{t + 1} = x_t - \frac{Q\eta_s\eta_a}{n}\sumn y_{i,t}^{\rm (S)},\;
            c_{t + 1} = \frac{1}{n}\sumn y_{i,t}^{\rm (S)}.
        \end{aligned}
    \end{equation}

    Thus, $y_{i,t}^{\rm (S)}$ admits the tracking recursion in \eqref{eq:yit_scaffold}:
    \begin{equation}
        \label{eq:yit_scaffold}
        \begin{aligned}
            y_{i,t + 1}^{\rm (S)} &= \frac{1}{Q}\sum_{\ell=0}^{Q-1}g_{i}(x_{i,t+1}^\ell;\xi_{i,t+1}^\ell) - c_{i,t + 1} + c_{t + 1}\\
            &= \frac{1}{n}\sumn y_{i,t}^{\rm (S)} + \frac{1}{Q}\sum_{\ell=0}^{Q-1}g_{i}(x_{i,t+1}^\ell;\xi_{i,t+1}^\ell) - \frac{1}{Q}\sum_{\ell=0}^{Q-1}g_{i}(x_{i,t}^\ell;\xi_{i,t}^\ell).
        \end{aligned}
    \end{equation}

\section{Proofs of Preliminary Results}
\label{app:technical}


We begin with Lemma~\ref{lem:prox_wcvx}, which summarizes a nonexpansiveness property of the proximal mapping $\proxp{\cdot}$ under Assumption~\ref{as:phi}. A proof can be found, for example, in \citep{davis2019stochastic}.
   \begin{lemma}
        \label{lem:prox_wcvx}
        Let Assumption \ref{as:phi} hold. Set $\gamma\in (0, \rho^{-1})$. We have for all $w,v\in\R^p$ that 
        \begin{enumerate}
            \item $\inpro{w-v, \proxp{w} - \proxp{v}}\geq (1-\gamma\rho)\norm{\proxp{w} - \proxp{v}}^2$, and 
            \item $\norm{w-v}\geq (1-\gamma\rho)\norm{\proxp{w} - \proxp{v}}$.
        \end{enumerate}
    \end{lemma}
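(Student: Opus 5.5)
We prove Lemma~\ref{lem:prox_wcvx} from the optimality condition of the proximal subproblem combined with the hypomonotonicity of the subdifferential of a weakly convex function. Fix $\gamma\in(0,\rho^{-1})$ and set $u=\proxp{w}$, $u'=\proxp{v}$.

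\emph{Well-posedness and optimality conditions.} Since $\varphi$ is $\rho$-weakly convex, the function $y\mapsto\varphi(y)+\frac{1}{2\gamma}\normi{y-w}^2$ is $(\gamma^{-1}-\rho)$-strongly convex. Together with properness and lower semicontinuity, this gives a unique minimizer $u$, so the proximal mapping is single valued. Fermat's rule for this strongly convex function then gives
\begin{align*}
\gamma^{-1}(w-u)\in\partial\varphi(u),\qquad \gamma^{-1}(v-u')\in\partial\varphi(u').
\end{align*}
Here $\partial\varphi$ is the subdifferential of the convex function $\varphi+\frac{\rho}{2}\normi{\cdot}^2$ shifted by $-\rho(\cdot)$, so the sum rule applies without difficulty.

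\emph{Hypomonotonicity.} Since $\varphi+\frac{\rho}{2}\normi{\cdot}^2$ is convex, its subdifferential $\partial\varphi(\cdot)+\rho(\cdot)$ is monotone. Hence for any $a\in\partial\varphi(u)$ and $b\in\partial\varphi(u')$,
\begin{align*}
\inpro{a-b,u-u'}\geq-\rho\normi{u-u'}^2.
\end{align*}
Substitute $a=\gamma^{-1}(w-u)$ and $b=\gamma^{-1}(v-u')$ and multiply by $\gamma$ to get
\begin{align*}
\inpro{w-v,u-u'}-\normi{u-u'}^2\geq-\gamma\rho\normi{u-u'}^2.
\end{align*}
This is exactly item~1.

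\emph{Item 2.} Apply Cauchy--Schwarz to item~1:
\begin{align*}
(1-\gamma\rho)\normi{u-u'}^2\leq\normi{w-v}\,\normi{u-u'}.
\end{align*}
If $u=u'$ the claim is trivial. Otherwise, divide by $\normi{u-u'}$ to conclude.

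\emph{Main obstacle.} The only delicate point is justifying the subdifferential calculus for a nonconvex, extended-valued $\varphi$. This concerns the exact Fermat rule and the identification of $\partial\varphi$ with the convex subdifferential of $\varphi+\frac{\rho}{2}\normi{\cdot}^2$ minus $\rho\,\mathrm{id}$. We handle it by rewriting the prox objective as the convex function $\varphi+\frac{\rho}{2}\normi{\cdot}^2$ plus the smooth quadratic $\frac{1}{2\gamma}\normi{\cdot-w}^2-\frac{\rho}{2}\normi{\cdot}^2$, which is convex because $\gamma<\rho^{-1}$. Standard convex sum rules then apply.
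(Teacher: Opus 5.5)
Your proof is correct and follows essentially the route the paper defers to \citet{davis2019stochastic}: the prox optimality condition $\gamma^{-1}(w-u)\in\partial\varphi(u)$ combined with hypomonotonicity of the subdifferential of the $\rho$-weakly convex $\varphi$ gives item~1, and Cauchy--Schwarz gives item~2. Your use of the convex function $\varphi+\frac{\rho}{2}\normi{\cdot}^2$ for the sum rule is the right way to make the subdifferential calculus rigorous.
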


Lemma~\ref{lem:Fnor_Lip} then follows from Lemma~\ref{lem:prox_wcvx} together with the definition of $\Fnor{\cdot}$ in \eqref{eq:norm_def}. A proof can be found, for example, in \citep{huang2024distributed,qiu2023normal}.
\begin{lemma}
    \label{lem:Fnor_Lip}
    Let Assumptions \ref{as:smooth} and \ref{as:phi} hold. Set $\gamma < 1/\rho$. The normal maps $\Fnor{\cdot}:\R^p\rightarrow\R^p$ and $\Fnori{i}{\cdot}: \R^p\rightarrow\R^p$ are $\Ln$-Lipschitz continuous, where $\Ln:= \prti{L + 2/\gamma}/\prti{1 - \gamma\rho}$.
\end{lemma}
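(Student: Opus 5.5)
The plan is to bound the difference of normal maps directly from the definition \eqref{eq:norm_def}. The only ingredients are the $L$-smoothness in Assumption~\ref{as:smooth} and the Lipschitz property of the proximal map in Lemma~\ref{lem:prox_wcvx}(2). The argument is identical for $\Fnor{\cdot}$ and for the local maps $\Fnori{i}{z}:=\nabla f_i(\proxp{z})+\gamma^{-1}(z-\proxp{z})$, with $f$ replaced by $f_i$. So I will carry it out for a generic $L$-smooth function.

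First I record two preliminary facts.
\begin{itemize}
\item For $\gamma<1/\rho$, the function $y\mapsto\varphi(y)+\frac{1}{2\gamma}\normi{y-x}^2$ is $(1/\gamma-\rho)$-strongly convex, proper and lower semicontinuous. Hence $\proxp{\cdot}$ is single-valued and the normal map is well defined.
\item $\proxp{z}\in\dom{\varphi}$ for every $z$. Hence Assumption~\ref{as:smooth} applies at the points $x=\proxp{z}$ and $x'=\proxp{z'}$.
\end{itemize}

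Next, for arbitrary $z,z'\in\R^p$, I write
\begin{align*}
\Fnor{z}-\Fnor{z'} = \brk{\nabla f(x)-\nabla f(x')} + \gamma^{-1}(z-z') - \gamma^{-1}(x-x').
\end{align*}
By the triangle inequality and $L$-smoothness (of $f$, which is an average of $L$-smooth $f_i$, or of $f_i$ itself), the norm is at most
\begin{align*}
\prt{L+\gamma^{-1}}\normi{x-x'}+\gamma^{-1}\normi{z-z'}.
\end{align*}
Lemma~\ref{lem:prox_wcvx}(2) gives $\normi{x-x'}\le\normi{z-z'}/(1-\gamma\rho)$, so
\begin{align*}
\normi{\Fnor{z}-\Fnor{z'}}\le\prt{\frac{L+1/\gamma}{1-\gamma\rho}+\frac{1}{\gamma}}\normi{z-z'}.
\end{align*}

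Finally, since $0<1-\gamma\rho\le 1$, we have $1/\gamma\le 1/[\gamma(1-\gamma\rho)]$. The constant is therefore bounded by $(L+2/\gamma)/(1-\gamma\rho)=\Ln$, which proves the claim.

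There is no real obstacle here. The only point needing care is justifying that smoothness is used only at points of $\dom{\varphi}$, and that the prox is single-valued. Both are handled by the preliminary facts above.
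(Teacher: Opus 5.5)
Your proof is correct and takes the route the paper indicates. The paper gives no detailed proof: it says the lemma follows from Lemma~\ref{lem:prox_wcvx} and the definition \eqref{eq:norm_def}, and defers to the literature. Your argument carries this out via the triangle inequality, $L$-smoothness at $\proxp{z},\proxp{z'}\in\dom{\varphi}$, and the $1/(1-\gamma\rho)$ Lipschitz bound on the prox, then relaxes the slightly sharper constant $(L+1/\gamma)/(1-\gamma\rho)+1/\gamma$ to $\Ln$.
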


\subsection{Proof of Lemma \ref{lem:cH_descent}}
\label{app:cH_descent}

    \textit{Step I: Relating $z_{t_1}$ and $z_{t_2}$.}
    Based on \eqref{eq:zitell} and \eqref{eq:yit}, we have
    \begin{equation}
        \label{eq:yit_z}
        y_{i,t} = \frac{1}{Q}\sum_{\ell=0}^{Q-1}\brk{g_i(x_{i,t}^\ell;\xi_{i,t}^\ell) + \gamma^{-1}\prt{z_{t} - x_{t}}} + c_{i,t}.
    \end{equation}

    From \eqref{eq:cit} and the initialization $c_{i,0} = 0$, we have $\bar{c}_t = \bar{c}_0 = 0$
    Accordingly, the server update can be written as the approximate normal map step
    \begin{equation}
        \label{eq:ztp1}
        \begin{aligned}
            z_{t + 1} &= z_t - \frac{\teta}{nQ}\sum_{i=1}^n \sum_{\ell=0}^{Q-1}\brk{g_i(x_{i,t}^\ell;\xi_{i,t}^\ell) + \gamma^{-1}\prt{z_{t} - x_{t}}}.
        \end{aligned}
    \end{equation}

    Hence, we have for any $t_1<t_2=t_1 + m$ that 
    \begin{equation}
        \label{eq:zt1t2}
        \begin{aligned}
            z_{t_2} &= z_{t_1} - \teta m \Fnor{z_{t_1}} - \frac{\teta}{nQ}\sum_{t=t_1}^{t_2-1}\sum_{i=1}^n \sum_{\ell=0}^{Q-1}\brk{g_i(x_{i,t}^\ell;\xi_{i,t}^\ell) + \gamma^{-1}\prt{z_{t} - x_{t}} - \Fnor{z_{t_1}}}\\
            &= z_{t_1} - \teta m \Fnor{z_{t_1}}  + e_{t_1:t_2},
        \end{aligned}
    \end{equation}
    where 
    \begin{equation}
        \label{eq:et1t2}
        \begin{aligned}
            e_{t_1:t_2} &= - \frac{\teta}{nQ}\sum_{t=t_1}^{t_2-1}\sum_{i=1}^n \sum_{\ell=0}^{Q-1}\brk{g_i(x_{i,t}^\ell;\xi_{i,t}^\ell) + \gamma^{-1}\prt{z_{t} - x_{t}} - \Fnor{z_{t_1}}}\\
            &= - \frac{\teta}{nQ}\sum_{t=t_1}^{t_2-1}\sum_{i=1}^n \sum_{\ell=0}^{Q-1}\Delta_{i,t}^\ell - \frac{\teta}{nQ}\sum_{t=t_1}^{t_2-1}\sum_{i=1}^n \sum_{\ell=0}^{Q-1}\brk{\nabla f_i(x_{i,t}^\ell) - \nabla f_i(x_{t_1})}\\
            &\quad + \frac{\teta}{\gamma}\sum_{t=t_1}^{t_2-1} \prt{z_{t_1} - z_t} - \frac{\teta}{\gamma }\sum_{t=t_1}^{t_2-1} \prt{x_{t_1} - x_t},\\
            \Delta_{i,t}^\ell &= g_i(x_{i,t}^\ell;\xi_{i,t}^\ell) - \nabla f_i(x_{i,t}^\ell).
        \end{aligned}
    \end{equation}

    \textit{Step II: Relating $\psi(x_{t_1})$ and $\psi(x_{t_2})$.}
    Due to Assumption \ref{as:phi} that $\varphi$ is $\rho$-weakly convex, we have for any $x, x'\in\dom{\varphi}$ and any $v\in\partial\varphi(x')$ that
    \begin{equation}
        \label{eq:wcvx0}
        \varphi(x)\geq \varphi(x') + \inpro{v, x - x'} - \frac{\rho}{2}\norm{x - x'}^2.
    \end{equation} 

    Noting $x_{t_2} = \proxp{z_{t_2}}$, it follows from the second prox theorem \citep{beck2017first} that $\gamma^{-1}(z_{t_2} - x_{t_2})\in\partial \varphi(x_{t_2})$. Therefore, setting $x' = x_{t_2}$, $x = x_{t_1}$, and $v = \gamma^{-1}\prti{z_{t_2} - x_{t_2}}$ in \eqref{eq:wcvx0} leads to 
    \begin{equation}
        \label{eq:phi_t1t2}
        \begin{aligned}
            \varphi(x_{t_2}) &\leq \varphi(x_{t_1}) - \inpro{\gamma^{-1}(z_{t_2} - x_{t_2}), x_{t_1} - x_{t_2}} + \frac{\rho}{2}\norm{x_{t_2} - x_{t_1}}^2.
        \end{aligned}
    \end{equation}

    Applying the descent lemma to $f$ yields 
    \begin{equation}
        \label{eq:f_t1t2}
        \begin{aligned}
            f(x_{t_2}) &\leq f(x_{t_1}) + \inpro{\nabla f(x_{t_1}), x_{t_2} - x_{t_1}} + \frac{L}{2}\norm{x_{t_2} - x_{t_1}}^2\\
            &= f(x_{t_1}) + \inpro{\Fnor{z_{t_1}} - \gamma^{-1}(z_{t_1} - x_{t_1}), x_{t_2} - x_{t_1}} + \frac{L}{2}\norm{x_{t_2} - x_{t_1}}^2.
        \end{aligned}
    \end{equation}

    Combining \eqref{eq:phi_t1t2} and \eqref{eq:f_t1t2} leads to
    \begin{equation}
        \label{eq:psi_t1t2}
        \begin{aligned}
            \psi(x_{t_2}) &\leq \psi(x_{t_1}) + \inpro{\Fnor{z_{t_1}}, x_{t_2} - x_{t_1}} + \prt{\frac{L + \rho}{2} - \frac{1}{\gamma}}\norm{x_{t_2} - x_{t_1}}^2\\
            &\quad + \inpro{\gamma^{-1}(z_{t_2} - z_{t_1}), x_{t_2} - x_{t_1}}.
        \end{aligned}
    \end{equation}

    \textit{Step III: Relating $\normi{\Fnor{z_{t_1}}}^2$ and $\normi{\Fnor{z_{t_2}}}^2$.}
    According to \eqref{eq:zt1t2}, we have $\Fnor{z_{t_2}}= \prti{1 - {\teta m}/{\gamma}}\Fnor{z_{t_1}} + \nabla f(x_{t_2}) - \nabla f(x_{t_1}) - \gamma^{-1}(x_{t_2} - x_{t_1})  + \gamma^{-1}e_{t_1:t_2}.$
    Hence, 
    \begin{equation}
        \label{eq:Fnor_t1t2_norm}
        \begin{aligned}
            &\norm{\Fnor{z_{t_2}}}^2 = \prt{1 - \frac{\teta m}{\gamma}}^2\norm{\Fnor{z_{t_1}}}^2 + \norm{\nabla f(x_{t_2}) - \nabla f(x_{t_1})}^2 + \frac{1}{\gamma^2}\norm{x_{t_2} - x_{t_1}}^2 \\
            &\quad + \frac{1}{\gamma^2}\norm{e_{t_1:t_2}}^2 + 2\prt{1 - \frac{\teta m}{\gamma}}\inpro{\Fnor{z_{t_1}}, \nabla f(x_{t_2}) - \nabla f(x_{t_1}) - \gamma^{-1}(x_{t_2} - x_{t_1}) + \gamma^{-1}e_{t_1:t_2}}\\
            &\quad  + \frac{2}{\gamma}\inpro{\nabla f(x_{t_2}) - \nabla f(x_{t_1}) - \gamma^{-1}(x_{t_2} - x_{t_1}), e_{t_1:t_2}} - \frac{2}{\gamma}\inpro{\nabla f(x_{t_2}) - \nabla f(x_{t_1}), x_{t_2} - x_{t_1}}\\
            &\leq \prt{1 - \frac{\teta m}{\gamma}}^2\norm{\Fnor{z_{t_1}}}^2 + \prt{L + \frac{1}{\gamma}}^2 \norm{x_{t_2} - x_{t_1}}^2 + \frac{1}{\gamma^2}\norm{e_{t_1:t_2}}^2\\
            &\quad + 2\prt{1 - \frac{\teta m}{\gamma}}\inpro{\Fnor{z_{t_1}}, \nabla f(x_{t_2}) - \nabla f(x_{t_1}) - \gamma^{-1}(x_{t_2} - x_{t_1}) + \gamma^{-1}e_{t_1:t_2}}\\
            &\quad + \frac{2}{\gamma^2}\inpro{\gamma\brk{\nabla f(x_{t_2}) - \nabla f(x_{t_1})} - (x_{t_2} - x_{t_1}), e_{t_1:t_2}},
        \end{aligned}
    \end{equation}
    where we applied the Cauchy-Schwarz inequality and invoked the $L$-smoothness of $f$ in the last inequality. 

    \textit{Step IV: Relating $\cH_{t_1}$ and $\cH_{t_2}$.}
    Substituting \eqref{eq:Fnor_t1t2_norm} and \eqref{eq:psi_t1t2} into \eqref{eq:cH_can} and rearranging the terms yields 
    \begin{equation}
        \label{eq:cH_can_s1}
        \begin{aligned}
            \cH_{t_2} 
            &\leq \cH_{t_1} + \brk{\frac{L + \rho}{2} - \frac{1}{\gamma} + \frac{\gamma \cC_0}{2}\prt{L + \frac{1}{\gamma}}^2}\norm{x_{t_2} - x_{t_1}}^2 + \frac{\cC_0}{2\gamma}\norm{e_{t_1:t_2}}^2\\
            &\quad + \inpro{\Fnor{z_{t_1}} - \cC_0\prt{1-\frac{\teta m}{\gamma}}\Fnor{z_{t_1}} + \frac{z_{t_2}-z_{t_1}}{\gamma} - \frac{\cC_0 e_{t_1:t_2}}{\gamma}, x_{t_2} - x_{t_1}}\\
            &\quad +  \cC_0\prt{1 - \frac{\teta m}{\gamma}}\inpro{\Fnor{z_{t_1}}, \gamma\brk{\nabla f(x_{t_2}) - \nabla f(x_{t_1})} + e_{t_1:t_2}}\\
            &\quad + \frac{\cC_0}{\gamma}\inpro{\gamma\brk{\nabla f(x_{t_2}) - \nabla f(x_{t_1})}, e_{t_1:t_2}} - \frac{\cC_0 \teta m}{2}\prt{2- \frac{\teta m}{\gamma}}\norm{\Fnor{z_{t_1}}}^2.
        \end{aligned}
    \end{equation}

    According to \eqref{eq:zt1t2}, we have $e_{t_1:t_2} - \teta m \Fnor{z_{t_1}} = z_{t_2} - z_{t_1}$ and $\Fnor{z_{t_1}} = \prti{z_{t_1} - z_{t_2} + e_{t_1:t_2}}/(\teta m)$. Then, 
    \begin{equation}
        \label{eq:cH_inner1}
        \begin{aligned}
            &\inpro{\Fnor{z_{t_1}} - \cC_0\prt{1-\frac{\teta m}{\gamma}}\Fnor{z_{t_1}} + \frac{z_{t_2}-z_{t_1}}{\gamma} - \frac{\cC_0 e_{t_1:t_2}}{\gamma}, x_{t_2} - x_{t_1}}\\
            &= \inpro{(1-\cC_0)\prt{1-\frac{\teta m}{\gamma}}\Fnor{z_{t_1}}+\frac{1-\cC_0}{\gamma}e_{t_1:t_2}, x_{t_2} - x_{t_1}}\\
            &= \frac{1-\cC_0}{\teta m}\inpro{e_{t_1:t_2}, x_{t_2} - x_{t_1}} - \prt{1-\cC_0}\prt{\frac{1}{\teta m}-\frac{1}{\gamma}}\inpro{z_{t_2} - z_{t_1}, x_{t_2} - x_{t_1}}\\
            &\leq \brk{\frac{1-\cC_0}{4\teta m} - \prt{1-\cC_0}\prt{\frac{1}{\teta m} - \frac{1}{\gamma}}\prt{1-\gamma \rho}}\norm{x_{t_2} - x_{t_1}}^2 + \frac{1-\cC_0}{\teta m}\norm{e_{t_1:t_2}}^2,
        \end{aligned}
    \end{equation}
    where we applied Lemma \ref{lem:prox_wcvx} by noting $x_t = \proxpi{z_t}$ and setting $0<\cC_0<1$ and  $\teta m < \gamma$: 
    \begin{align*}
        - \prt{1-\cC_0}\prt{\frac{1}{\teta m}-\frac{1}{\gamma}}\inpro{z_{t_2} - z_{t_1}, x_{t_2} - x_{t_1}} &\leq (1 - \gamma \rho)\prt{1-\cC_0}\prt{\frac{1}{\teta m}-\frac{1}{\gamma}}\norm{x_{t_2} - x_{t_1}}^2,
    \end{align*}
    and Young's inequality:
    \begin{align*}
        \inpro{e_{t_1:t_2}, x_{t_2} - x_{t_1}} &\leq \frac{1}{4}\norm{x_{t_2} - x_{t_1}}^2 + \norm{e_{t_1:t_2}}^2.
    \end{align*}

    For the remaining inner products in \eqref{eq:cH_can_s1}, it follows from Young's inequality that 
    \begin{equation}
        \label{eq:cH_inners}
        \begin{aligned}
            \inpro{\Fnor{z_{t_1}}, \gamma\brk{\nabla f(x_{t_2}) - \nabla f(x_{t_1})} + e_{t_1:t_2}} &\leq \frac{\teta m}{2}\norm{\Fnor{z_{t_1}}}^2 + \frac{\gamma^2 L^2 \normi{x_{t_1} - x_{t_2}}^2 + \normi{e_{t_1:t_2}}^2}{\teta m} \\
            \frac{\cC_0}{\gamma}\inpro{\gamma\brk{\nabla f(x_{t_2}) - \nabla f(x_{t_1})}, e_{t_1:t_2}}&\leq \frac{\cC_0 L}{2}\norm{x_{t_1} - x_{t_2}}^2 + \frac{\cC_0 L}{2}\norm{e_{t_1:t_2}}^2.
        \end{aligned}
    \end{equation}

    Substituting \eqref{eq:cH_inner1} and \eqref{eq:cH_inners} into \eqref{eq:cH_can_s1} leads to 
    \begin{equation}
        \label{eq:cH_can_s2}
        \begin{aligned}
            &\cH_{t_2} \leq \cH_{t_1} - \frac{\cC_0\teta m}{2}\brk{2 - \frac{\teta m}{\gamma} - \prt{1 - \frac{\teta m}{\gamma}}}\norm{\Fnor{z_{t_1}}}^2 + \tc_1 \norm{x_{t_1} - x_{t_2}}^2\\
            &\quad + \brk{\frac{\cC_0}{2\gamma} + \frac{1-\cC_0}{\teta m}  + \frac{\cC_0(1-\teta m /\gamma)}{\teta m} + \frac{\cC_0 L}{2}}\norm{e_{t_1:t_2}}^2,
        \end{aligned}
    \end{equation}
    where 
    \begin{align*}
        \tc_1&:= \brk{\frac{L + \rho}{2} - \frac{1}{\gamma} + \frac{\gamma \cC_0}{2}\prt{L + \frac{1}{\gamma}}^2} + \brk{\frac{1-\cC_0}{4\teta m} - \prt{1-\cC_0}\prt{\frac{1}{\teta m} - \frac{1}{\gamma}}\prt{1-\gamma \rho}} \\
        &\quad + \frac{\cC_0\gamma^2 L^2}{\teta m}\prt{1 - \frac{\teta m}{\gamma}} + \frac{\cC_0 L}{2}\\
        &= -\frac{3-4\gamma\rho}{4\teta m } + \frac{\cC_0\prt{3 - 4\gamma\rho + 4\gamma^2 L^2}}{4\teta m} + \cC_0\prt{\rho - \frac{1}{2\gamma} + \frac{3L}{2} - \frac{\gamma L^2}{2}} + \frac{L-\rho}{2}.
    \end{align*}

    Based on the definition of $\cC_0$ in \eqref{eq:cH_can} and $\gamma\leq 1/[5(\rho + L)]$, we have $\cC_0 < 1/2$ and $\cC_0\brki{\rho - {1}/\prti{2\gamma} + 1.5L - 0.5\gamma L^2} + 0.5\prti{L-\rho} < 0.$
    Therefore, 
    \begin{equation}
        \label{eq:cH}
        \begin{aligned}
            \cH_{t_2} &\leq \cH_{t_1} - \frac{\cC_0 \teta m}{2}\norm{\Fnor{z_{t_1}}}^2 - \frac{1}{8\teta m}\norm{x_{t_1} - x_{t_2}}^2 + \frac{1}{\teta m}\norm{e_{t_1:t_2}}^2.
        \end{aligned}
    \end{equation}

    \textit{Step V: Upper bounding $\E\brki{\normi{e_{t_1:t_2}}^2|\cF_{t_1}}$.}
    We now bound $\E\brki{\normi{e_{t_1:t_2}}^2|\cF_{t_1}}$. We start with any $(t_1+1)\leq t\leq t_2$. 
    Let 
    \begin{equation}
        \label{eq:et1t}
        \begin{aligned}
            e_{t_1:t}&:= - \frac{\teta}{nQ}\sum_{r=t_1}^{t-1}\sum_{\ell=0}^{Q-1}\sumn\brk{g_i(x_{i,r}^\ell;\xi_{i,r}^\ell) + \gamma^{-1}\prt{z_r - x_r} - \Fnor{z_{t_1}}},\; t_1+1\leq t\leq t_2.
        \end{aligned}
    \end{equation}
    
    It follows from \eqref{eq:ztp1} that 
    \begin{equation}
    \label{eq:zt_zt1}
        \begin{aligned}
            z_t 
            &= z_{t_1} - \teta (t - t_1) \Fnor{z_{t_1}} + e_{t_1:t},\; t_1 + 1 \leq t \leq t_2.
        \end{aligned}
    \end{equation}

    Noting that $ \Delta_{i,t}^\ell = g_i(x_{i,t}^\ell;\xi_{i,t}^\ell) - \nabla f_i(x_{i,t}^\ell)$, we obtain the following relation for $e_{t_1:t}$:
    \begin{equation}
        \label{eq:et1t_split}
        \begin{aligned}
            e_{t_1:t} &= - \frac{\teta}{nQ}\sum_{r=t_1}^{t-1} \sum_{\ell=0}^{Q-1} \sum_{i=1}^n\Delta_{i,r}^\ell - \frac{\teta}{nQ}\sum_{r=t_1}^{t-1} \sum_{\ell=0}^{Q-1}\sum_{i=1}^n\brk{\nabla f_i(x_{i,r}^\ell) - \nabla f_i(x_{t_1})}\\
            &\quad + \frac{\teta}{\gamma }\sum_{r=t_1}^{t-1} \prt{z_{t_1} - z_{r}} - \frac{\teta}{\gamma }\sum_{r=t_1}^{t-1}  \prt{x_{t_1} - x_{r}},\;  t_1 + 1 \leq t \leq t_2.
        \end{aligned}
    \end{equation}

    Taking the squared norm and conditional expectation on both sides of \eqref{eq:et1t_split} and invoking Assumption~\ref{as:smooth} yields
    \begin{equation}
        \label{eq:et1t_norm}
        \begin{aligned}
            &\frac{1}{4}\condE{\norm{e_{t_1:t}}^2}{\cF_{t_1}}\leq \frac{\teta^2 L^2 (t-t_1)}{nQ}\sum_{r=t_1}^{t-1} \sum_{\ell=0}^{Q-1}\sum_{i=1}^n\condE{\norm{\proxp{z_{i,r}^\ell} - \proxp{z_{t_1}}}^2}{\cF_{t_1}} \\
            &\quad + \frac{\teta^2 (t-t_1)\sigma^2}{nQ} + \frac{\teta^2 (t-t_1)}{ \gamma^2} \sum_{r=t_1}^{t-1}  \condE{\norm{z_{t_1} - z_{r}}^2}{\cF_{t_1}}\\
            &\quad + \frac{\teta^2(t-t_1)}{\gamma^2}\sum_{r=t_1}^{t-1}\condE{\norm{\proxp{z_{t_1}} - \proxp{z_r}}^2}{\cF_{t_1}}\\
            &\leq \frac{\teta^2 (t-t_1)\sigma^2}{nQ} + \frac{\teta^2 L^2 (t-t_1)}{nQ(1-\gamma\rho)^2 } \sum_{r=t_1}^{t-1} \sum_{\ell=0}^{Q-1}\sum_{i=1}^n \condE{\norm{z_{t_1} - z_{i,r}^\ell}^2}{\cF_{t_1}}\\
            &\quad + \frac{2\teta^2 (t-t_1)}{\gamma^2(1-\gamma\rho)^2}\sum_{r=t_1}^{t-1}\condE{\norm{z_{t_1} - z_r}^2}{\cF_{t_1}},\;  t_1 + 1 \leq t \leq t_2,
        \end{aligned}
    \end{equation}
    where we invoked Lemma~\ref{lem:prox_wcvx} in the last inequality. We next bound the last summation in \eqref{eq:et1t_norm}.
    It follows from \eqref{eq:zt_zt1} that
    \begin{equation}
        \label{eq:zr_zt1}
        \begin{aligned}
            &\condE{\norm{z_t - z_{t_1}}^2}{\cF_{t_1}} \leq 2\teta^2 (t - t_1)^2\norm{\Fnor{z_{t_1}}}^2 + 2\condE{\norm{e_{t_1:t}}^2}{\cF_{t_1}}\\
            &\leq 2\teta^2 (t - t_1)^2\norm{\Fnor{z_{t_1}}}^2 + \frac{8\teta^2 (t-t_1)\sigma^2}{nQ} + \frac{16\teta^2 (t-t_1)}{\gamma^2(1-\gamma\rho)^2}\sum_{r=t_1}^{t-1}\condE{\norm{z_{t_1} - z_r}^2}{\cF_{t_1}} \\
            &\quad + \frac{8\teta^2 L^2 (t-t_1)}{nQ(1-\gamma\rho)^2 } \sum_{r=t_1}^{t-1} \sum_{\ell=0}^{Q-1}\sum_{i=1}^n \condE{\norm{z_{t_1} - z_{i,r}^\ell}^2}{\cF_{t_1}} ,\; \forall t_1 + 1 \leq t \leq t_2.
        \end{aligned}
    \end{equation}
    
    By convention, an empty sum is zero. Then \eqref{eq:zr_zt1} also holds for $t = t_1$.
    Summing $t = t_1$, $t_1 + 1$, $t_1 + 2$, $\ldots$, $t_2$ on both sides of \eqref{eq:zr_zt1} and noting $t_2 - t_1 = m$ yields
    \begin{equation}
        \label{eq:sum_zt_zt1}  
        \begin{aligned}
            &\brk{1 - \frac{8\teta^2 m^2}{\gamma^2(1-\gamma\rho)^2}}\sum_{t=t_1}^{t_2}\condE{\norm{z_t - z_{t_1}}^2}{\cF_{t_1}} \leq \teta^2 m^3 \norm{\Fnor{z_{t_1}}}^2 + \frac{4\teta^2 m^2\sigma^2}{nQ} \\
            &\quad + \frac{4\teta^2 L^2 m^2}{nQ(1-\gamma\rho)^2 } \sum_{t=t_1}^{t_2-1} \sum_{\ell=0}^{Q-1}\sum_{i=1}^n \condE{\norm{z_{t_1} - z_{i,t}^\ell}^2}{\cF_{t_1}}.
        \end{aligned}
    \end{equation}

    Letting $\teta\leq (1-\gamma\rho)\gamma/(4m)$ yields 
    \begin{equation}
        \label{eq:sum_zt_zt1_ub}  
        \begin{aligned}
            &\sum_{t=t_1}^{t_2-1}\condE{\norm{z_t - z_{t_1}}^2}{\cF_{t_1}} \leq \sum_{t=t_1}^{t_2}\condE{\norm{z_t - z_{t_1}}^2}{\cF_{t_1}} \leq 2\teta^2 m^3 \norm{\Fnor{z_{t_1}}}^2 + \frac{8\teta^2 m^2\sigma^2}{nQ} \\
            &\quad + \frac{8\teta^2 L^2 m^2}{nQ(1-\gamma\rho)^2 } \sum_{t=t_1}^{t_2-1} \sum_{\ell=0}^{Q-1}\sum_{i=1}^n \condE{\norm{z_{t_1} - z_{i,t}^\ell}^2}{\cF_{t_1}}.
        \end{aligned}
    \end{equation}

    Substituting \eqref{eq:sum_zt_zt1_ub} into \eqref{eq:et1t_norm} leads to 
    \begin{equation}
        \label{eq:et1t_ub}
        \begin{aligned}
            &\condE{\norm{e_{t_1:t}}^2}{\cF_{t_1}} \leq \frac{4\teta^2 (t-t_1)\sigma^2}{nQ} + \frac{4\teta^2 L^2 (t-t_1)}{nQ(1-\gamma\rho)^2 } \sum_{r=t_1}^{t_2-1} \sum_{\ell=0}^{Q-1}\sum_{i=1}^n \condE{\norm{z_{t_1} - z_{i,r}^\ell}^2}{\cF_{t_1}}\\
            &\quad \frac{8\teta^2(t-t_1)}{\gamma^2(1-\gamma\rho)^2}\crk{2\teta^2 m^3 \norm{\Fnor{z_{t_1}}}^2 + \frac{8\teta^2 m^2\sigma^2}{nQ} \right.\\
            &\left.\quad  + \frac{8\teta^2 L^2 m^2}{nQ(1-\gamma\rho)^2 } \sum_{r=t_1}^{t_2-1} \sum_{\ell=0}^{Q-1}\sum_{i=1}^n \condE{\norm{z_{t_1} - z_{i,r}^\ell}^2}{\cF_{t_1}} }\\
            &\leq \frac{5\teta^2 (t-t_1)\sigma^2}{nQ} + \frac{5\teta^2 L^2 (t-t_1)}{nQ(1-\gamma\rho)^2 } \sum_{r=t_1}^{t_2-1} \sum_{\ell=0}^{Q-1}\sum_{i=1}^n \condE{\norm{z_{t_1} - z_{i,r}^\ell}^2}{\cF_{t_1}}\\
            &\quad + \frac{16\teta^4m^3(t-t_1)}{\gamma^2(1-\gamma\rho)^2}\norm{\Fnor{z_{t_1}}}^2,
        \end{aligned}
    \end{equation}
    where we invoked $\teta\leq (1-\gamma\rho)\gamma/(10 m)$ in the last inequality.

    Substituting \eqref{eq:et1t_ub} (set $t = t_2$) into \eqref{eq:cH} yields the desired result \eqref{eq:cH_descent}.

\subsection{Proof of Lemma \ref{lem:sum_ub}}
\label{app:sum_ub}

    Denote $\Fnori{i}{z_t}:=  \nabla f_i(\proxp{z_t}) + \gamma^{-1}\prt{z - \proxp{z_t}},$
    and the stacked variables ($n$ by $p$ matrices):
    \begin{align*}
        \z_r^{\ell} &:= \prt{z_{1,r}^\ell, z_{2,r}^\ell, \ldots, z_{n,r}^{\ell}}^{\T},\;
        \y_r := \prt{y_{1,r}, y_{2,r}, \ldots, y_{n,r}}^{\T},\;
        \x_r^{\ell} := \prt{x_{1,r}^\ell, x_{2,r}^\ell, \ldots, x_{n,r}^{\ell}}^{\T},\\
        \bc_r &:= \prt{c_{1,r},c_{2,r},\ldots, c_{n,r}}^{\T},\;
        \bFnor{\1 z_{t}^{\T}} := \prt{\Fnori{1}{z_t}, \Fnori{2}{z_t}, \ldots, \Fnori{n}{z_t}}^{\T},\\
        \bs{\Delta}_r^\ell &:= \prt{\Delta_{1,r}^\ell, \Delta_{2,r}^\ell, \ldots, \Delta_{n,r}^\ell}^{\T},\;
        \nabla F(\x_r^\ell) := \prt{\nabla f_1(x_{1,r}^\ell), \nabla f_2(x_{2,r}^\ell), \ldots, \nabla f_n(x_{n,r}^\ell)}^{\T}.
    \end{align*}
    
    For any $(t_1+1)\leq r\leq t_2$, it follows from the local update scheme in \eqref{eq:zitell}, the update for $c_{i,t}$ in \eqref{eq:cit}, and the multistep relation~\eqref{eq:zt_zt1} that 
    \begin{equation}
        \label{eq:zt_ztell}
            \begin{aligned}
                \z_r^\ell &= \z_r^0 - \eta_a \sum_{s=0}^{\ell-1}\brk{\g_r^s + \gamma^{-1}\prt{\1 z_r^{\T} - \1 x_r^{\T}} + \bc_r}\\
                &= \1 z_{t_1}^{\T} - \teta(r-t_1)\1 \brk{\Fnor{z_{t_1}}}^{\T} + \1 e_{t_1:r}^{\T} - \eta_a \ell \brk{\bc_{r} + \bFnor{\1 z_{t_1}^{\T}}} + \eta_a\ell \bFnor{\1 z_{t_1}^{\T}} \\
                &\quad - \eta_a \sum_{s=0}^{\ell-1}\brk{\bs{\Delta}_r^s + \nabla F(\x_r^s) + \gamma^{-1}\prt{\1 z_r^{\T}  - \1 x_r^{\T}} } \\
                &= \1 z_{t_1}^{\T} - \teta(r-t_1)\1 \brk{\Fnor{z_{t_1}}}^{\T} + \1 e_{t_1:r}^{\T} - \eta_a \ell \Pi \brk{\bc_{r} + \bFnor{\1 z_{t_1}^{\T}}} - \eta_a \ell \1 \brk{\Fnor{z_{t_1}}}^{\T} \\
                &\quad - \eta_a \sum_{s=0}^{\ell-1}\brk{\bs{\Delta}_r^s + \nabla F(\x_r^s) - \nabla F(\1 x_{t_1}^{\T}) + \gamma^{-1}\prt{\1 z_r^{\T} - \1 z_{t_1}^{\T} + \1 x_{t_1}^{\T}  - \1 x_r^{\T}} },
            \end{aligned}
        \end{equation}
    where we invoked 
    \begin{equation}
        \label{eq:cr_Fnor}
        \begin{aligned}
            \bc_r + \bFnor{\1 z_{t_1}^{\T}} &= \Pi\brk{\bc_r + \bFnor{\1 z_{t_1}^{\T}}} + \1 \brk{\Fnor{z_{t_1}}}^{\T},\\
            \bFnor{\1 z_{t_1}^{\T}}&= \nabla F(\1 x_{t_1}^{\T}) + \gamma^{-1}\prt{\1 z_{t_1}^{\T} - \1 x_{t_1}^{\T}}.
        \end{aligned}
    \end{equation}
        
    We can rewrite $\Pi\brki{\bc_r + \bFnor{\1 z_{t_1}^{\T}}}$ for $ t_1 + 1 \leq r \leq t_2$ as follows:
    \begin{equation}
        \label{eq:Picr_Fnor}
        \begin{aligned}
            \Pi\brk{\bc_{r} + \bFnor{\1 z_{t_1}^{\T}}} &= -\frac{1}{Q} \sum_{\ell=0}^{Q-1}\Pi \brk{\bs{\Delta}_{t_1}^\ell + \nabla F(\x_{t_1}^\ell) - \nabla F(\1 x_{t_1}^{\T})} - \sum_{p=t_1 + 1}^{r - 1}\Pi \y_p.
        \end{aligned}
    \end{equation}
    To derive \eqref{eq:Picr_Fnor}, we proceed as follows. According to \eqref{eq:zitell} and \eqref{eq:yit}, we have
    \begin{equation}
        \label{eq:yr}
        \begin{aligned}
            \y_r = \bc_r + \frac{1}{Q}\sum_{\ell=0}^{Q-1}\brk{\bs{\Delta}_r^\ell + \nabla F(\x_r^\ell) + \gamma^{-1}\prt{\1 z_r^{\T} - \1 x_r^{\T}}}.
        \end{aligned}
    \end{equation}
    It follows from \eqref{eq:cit} that 
    \begin{equation}
        \label{eq:ctp1}
        \begin{aligned}
            \bc_{r + 1} &= \bc_r  -\Pi  \y_r.
        \end{aligned}
    \end{equation}
    On one hand, unrolling \eqref{eq:ctp1} leads to 
    \begin{equation}
        \label{eq:Picr}
        \begin{aligned}
            \Pi\bc_{r} &= \Pi\bc_{t_1 + 1} - \sum_{p=t_1 + 1}^{r - 1}\Pi \y_p,\; t_1 + 1 \leq r \leq t_2,
        \end{aligned}
    \end{equation}
    where $\sum_{p=t_1 + 1}^{t_1} \Pi \y_p:= \mathbf{0}$.
    On the other hand, substituting \eqref{eq:yr} into \eqref{eq:ctp1} and noting $\Pi^2 = \Pi$ yields
    \begin{equation}
        \label{eq:Pict1}
        \begin{aligned}
            \Pi\bc_{t_1 + 1} &= \Pi \bc_{t_1} - \Pi\bc_{t_1} - \frac{1}{Q} \sum_{\ell=0}^{Q-1}\Pi \brk{\bs{\Delta}_{t_1}^\ell + \nabla F(\x_{t_1}^\ell) + \gamma^{-1}\prt{\1 z_{t_1}^{\T} - \1 x_{t_1}^{\T}}} \\
            &= -\Pi \bFnor{\1 z_{t_1}^{\T}} - \frac{1}{Q} \sum_{\ell=0}^{Q-1}\Pi \brk{\bs{\Delta}_{t_1}^\ell + \nabla F(\x_{t_1}^\ell) - \nabla F(\1 x_{t_1}^{\T})}.
        \end{aligned}
    \end{equation}

    Combining \eqref{eq:Picr} and \eqref{eq:Pict1} yields \eqref{eq:Picr_Fnor}.
    Substituting \eqref{eq:Picr_Fnor} into \eqref{eq:zt_ztell} yields for any $(t_1+1)\leq r\leq t_2$,
    \begin{equation}
        \label{eq:zt_ztell1}
        \begin{aligned}
            &\z_r^\ell - \1 z_{t_1}^{\T} = - \teta(r-t_1)\1 \brk{\Fnor{z_{t_1}}}^{\T} + \1 e_{t_1:r}^{\T} + \frac{\eta_a\ell}{Q} \sum_{\ell=0}^{Q-1}\Pi \brk{\bs{\Delta}_{t_1}^\ell + \nabla F(\x_{t_1}^\ell) - \nabla F(\1 x_{t_1}^{\T})}  \\
            &\quad  - \eta_a \sum_{s=0}^{\ell-1}\brk{\bs{\Delta}_r^s + \nabla F(\x_r^s) - \nabla F(\1 x_{t_1}^{\T}) + \gamma^{-1}\prt{\1 z_r^{\T} - \1 z_{t_1}^{\T} + \1 x_{t_1}^{\T}  - \1 x_r^{\T}} }\\
            &\quad + \eta_a\ell \sum_{p=t_1 + 1}^{r - 1}\Pi \y_p - \eta_a \ell \1 \brk{\Fnor{z_{t_1}}}^{\T}.
        \end{aligned}
    \end{equation}


Taking squared norms and conditional expectations on both sides of \eqref{eq:zt_ztell1} and invoking Assumptions~\ref{as:abc}, \ref{as:smooth}, and Lemma~\ref{lem:Fnor_Lip} yields 
\begin{equation}
    \label{eq:zr_zrell_norm}
    \begin{aligned}
        &\frac{1}{10}\condE{\norm{\z_r^\ell - \1 z_{t_1}^{\T}}^2}{\cF_{t_1}}\leq \teta^2(r-t_1)^2n\norm{\Fnor{z_{t_1}}}^2 + n\condE{\norm{e_{t_1:r}}^2}{\cF_{t_1}} + \frac{\eta_a^2 \ell^2 n\sigma^2}{Q}\\
        &\quad + \frac{\eta_a^2\ell^2 L^2}{Q(1-\gamma\rho)^2}\sum_{\ell=0}^{Q-1}\condE{\norm{\z_{t_1}^\ell - \1 z_{t_1}^{\T}}^2}{\cF_{t_1}} + \eta_a^2 \ell^2 \condE{\norm{\sum_{p=t_1 + 1}^{r - 1}\Pi\y_p}^2}{\cF_{t_1}} \\
        &\quad + \eta_a^2\ell^2 n\norm{\Fnor{z_{t_1}}}^2 + \eta_a^2\ell n\sigma^2 + \frac{\eta_a^2\ell L^2}{(1-\gamma\rho)^2}\sum_{s=0}^{\ell-1}\condE{\norm{\z_r^s - \1 z_{t_1}^{\T}}^2}{\cF_{t_1}} \\
        &\quad + \frac{\eta_a^2\ell^2 n}{\gamma^2}\condE{\norm{z_r - z_{t_1}}^2}{\cF_{t_1}} + \frac{\eta_a^2\ell^2 n}{\gamma^2(1-\gamma\rho)^2}\condE{\norm{z_r - z_{t_1}}^2}{\cF_{t_1}},\; t_1 + 1 \leq r \leq t_2.
    \end{aligned}
\end{equation}

We now bound the term $\condEi{\normi{\sum_{p=t_1 + 1}^{r - 1}\Pi\y_p}^2}{\cF_{t_1}}$. It follows from \eqref{eq:yitp1} and $\Pi\cdot \1 = \mathbf{0}$ that
\begin{equation}
    \label{eq:Piyr}
    \begin{aligned}
        \Pi\y_p &= \frac{1}{Q}\sum_{\ell=0}^{Q-1}\Pi\brk{\g_{p}^\ell - \g_{p-1}^\ell + \gamma^{-1}\prt{\1 z_{p}^{\T} - \1 z_{p-1}^{\T}} - \gamma^{-1}\prt{  \1 x_{p}^{\T} - \1 x_{p-1}^{\T}}}\\
        &= \frac{1}{Q}\sum_{\ell=0}^{Q-1}\Pi\brk{\g_{p}^\ell - \g_{p-1}^\ell}.
    \end{aligned}
\end{equation}
Summing \eqref{eq:Piyr} from $p = t_1 + 1$ to $r - 1$ yields for any $(t_1 + 1) \leq r \leq t_2$,
\begin{equation}
    \label{eq:Piyr_sum}
    \begin{aligned}
        &\sum_{p=t_1 + 1}^{r-1}\Pi\y_p = \frac{1}{Q}\sum_{\ell=0}^{Q-1}\Pi\brk{\g_{r-1}^\ell - \g_{t_1}^\ell}\\
        &= \frac{1}{Q}\sum_{\ell=0}^{Q-1}\Pi\brk{\bs{\Delta}_{r-1}^\ell + \nabla F(\x_{r-1}^\ell) - \nabla F(\1 x_{t_1}^{\T}) + \nabla F(\1 x_{t_1}^{\T}) - \nabla F(\x_{t_1}^\ell)  - \bs{\Delta}_{t_1}^\ell }.
    \end{aligned}
\end{equation}
Taking the squared norm and conditional expectation on both sides of \eqref{eq:Piyr_sum} and invoking Assumptions~\ref{as:abc} and \ref{as:smooth} yields for any $(t_1 + 1) \leq r \leq t_2$,
\begin{equation}
    \label{eq:Piyr_norm}
    \begin{aligned}
        &\frac{1}{4}\condE{\norm{\sum_{p=t_1+1}^{r-1}\Pi\y_p}^2}{\cF_{t_1}} 
        \leq \frac{2n\sigma^2}{Q} + \frac{L^2}{Q}\sum_{\ell=0}^{Q-1}\condE{\norm{\x_{r-1}^\ell - \1 x_{t_1}^{\T}}^2}{\cF_{t_1}} \\
        &\quad + \frac{L^2}{Q}\sum_{\ell=0}^{Q-1}\condE{\norm{\x_{t_1}^\ell - \1 x_{t_1}^{\T}}^2}{\cF_{t_1}}.
    \end{aligned}
\end{equation}
Substituting \eqref{eq:Piyr_norm} into \eqref{eq:zr_zrell_norm} and invoking Lemma~\ref{lem:prox_wcvx} leads to 

\begin{equation}
    \label{eq:zr_zrell_norm_s2}
    \begin{aligned}
        &\frac{1}{10}\condE{\norm{\z_r^\ell - \1 z_{t_1}^{\T}}^2}{\cF_{t_1}}\leq \brk{\teta^2(r-t_1)^2 + \eta_a^2\ell^2}n\norm{\Fnor{z_{t_1}}}^2 + n\condE{\norm{e_{t_1:r}}^2}{\cF_{t_1}} \\
        &\quad + \frac{9\eta_a^2\ell(\ell + Q) n\sigma^2}{Q} + \frac{5\eta_a^2\ell^2 L^2}{Q(1-\gamma\rho)^2}\sum_{\ell=0}^{Q-1}\condE{\norm{\z_{t_1}^\ell - \1 z_{t_1}^{\T}}^2}{\cF_{t_1}} \\
        &\quad + \frac{\eta_a^2\ell L^2}{(1-\gamma\rho)^2}\sum_{s=0}^{\ell-1}\condE{\norm{\z_r^s - \1 z_{t_1}^{\T}}^2}{\cF_{t_1}} + \frac{4\eta_a^2\ell^2 L^2}{Q(1-\gamma\rho)^2}\sum_{\ell=0}^{Q-1}\condE{\norm{\z_{r-1}^\ell - \1 z_{t_1}^{\T}}^2}{\cF_{t_1}} \\
        &\quad + \frac{2\eta_a^2\ell^2 n}{\gamma^2(1-\gamma\rho)^2}\condE{\norm{z_r - z_{t_1}}^2}{\cF_{t_1}}.
    \end{aligned}
\end{equation}

It follows from \eqref{eq:zt_zt1} that 
\begin{equation}
    \label{eq:zrp1_zt1_norm}
    \begin{aligned}
        \condE{\norm{z_{r} - z_{t_1}}^2}{\cF_{t_1}} &\leq 2\teta^2(r-t_1)^2\norm{\Fnor{z_{t_1}}}^2 + 2\condE{\norm{e_{t_1:r}}^2}{\cF_{t_1}}.
    \end{aligned}
\end{equation}
Combining \eqref{eq:et1t_ub}, \eqref{eq:zr_zrell_norm_s2}, and \eqref{eq:zrp1_zt1_norm} leads to 
\begin{equation}
    \label{eq:zr_zrell_norm_s3}
    \begin{aligned}
        &\frac{1}{10}\condE{\norm{\z_r^\ell - \1 z_{t_1}^{\T}}^2}{\cF_{t_1}}\leq \brk{\teta^2(r-t_1)^2 + \eta_a^2\ell^2 + \frac{4\teta^2\eta_a^2\ell^2(r-t_1)^2}{\gamma^2(1-\gamma\rho)^2} }n\norm{\Fnor{z_{t_1}}}^2\\
        &\quad + \brk{1 + \frac{4\eta_a^2\ell^2 n}{\gamma^2(1-\gamma\rho)^2}}n \condE{\norm{e_{t_1:r}}^2}{\cF_{t_1}} + 10\eta_a^2\ell n\sigma^2  \\
        &\quad + \frac{5\eta_a^2\ell^2 L^2}{Q(1-\gamma\rho)^2}\sum_{\ell=0}^{Q-1}\condE{\norm{\z_{t_1}^\ell - \1 z_{t_1}^{\T}}^2}{\cF_{t_1}} + \frac{\eta_a^2\ell L^2}{(1-\gamma\rho)^2}\sum_{s=0}^{\ell-1}\condE{\norm{\z_r^s - \1 z_{t_1}^{\T}}^2}{\cF_{t_1}} \\
        &\quad + \frac{4\eta_a^2\ell^2 L^2}{Q(1-\gamma\rho)^2}\sum_{\ell=0}^{Q-1}\condE{\norm{\z_{r-1}^\ell - \1 z_{t_1}^{\T}}^2}{\cF_{t_1}}\\
        &\leq \brk{2\teta^2(r-t_1)^2 + \eta_a^2\ell^2 + \frac{32\teta^4 m^3(r-t_1)}{\gamma^2(1-\gamma\rho)^2}}n\norm{\Fnor{z_{t_1}}}^2 + 10\brk{\eta_a^2\ell + \frac{\teta^2(r-t_1)}{nQ}} n\sigma^2 \\
        &\quad + \frac{10\teta^2 L^2(r-t_1)}{Q(1-\gamma\rho)^2} \sum_{r=t_1}^{t_2-1} \sum_{\ell=0}^{Q-1}\condE{\norm{\z_{r}^\ell - \1 z_{t_1}^{\T}}^2}{\cF_{t_1}} + \frac{5\eta_a^2\ell^2 L^2}{Q(1-\gamma\rho)^2}\sum_{\ell=0}^{Q-1}\condE{\norm{\z_{t_1}^\ell - \1 z_{t_1}^{\T}}^2}{\cF_{t_1}} \\
        &\quad + \frac{\eta_a^2\ell L^2}{(1-\gamma\rho)^2}\sum_{s=0}^{\ell-1}\condE{\norm{\z_r^s - \1 z_{t_1}^{\T}}^2}{\cF_{t_1}} + \frac{4\eta_a^2\ell^2 L^2}{Q(1-\gamma\rho)^2}\sum_{\ell=0}^{Q-1}\condE{\norm{\z_{r-1}^\ell - \1 z_{t_1}^{\T}}^2}{\cF_{t_1}},
    \end{aligned}
\end{equation}
where we let $\eta_a\leq (1-\gamma\rho)\gamma/(2Q)$.
Summing over $r=t_1+1$, $t_1 + 2$, $\ldots, t_2$ and $\ell=0,1,\ldots, Q-1$ on both sides of \eqref{eq:zr_zrell_norm_s3} leads to 
\begin{equation}
    \label{eq:zr_zrell_norm_sum}
    \begin{aligned}
        &\tc_2\sum_{r=t_1 + 1}^{t_2}\sum_{\ell=0}^{Q-1}\condE{\norm{\z_r^\ell - \1 z_{t_1}^{\T}}^2}{\cF_{t_1}}\leq \brk{2\teta^2 m^3 Q + \eta_a^2 Q^3 m + \frac{32\teta^4 m^5 Q}{\gamma^2(1-\gamma\rho)^2}}n\norm{\Fnor{z_{t_1}}}^2\\
        &\quad + 10\brk{\frac{\eta_a^2 Q^2m}{2} + \frac{\teta^2 m^2}{n}}n\sigma^2 + \frac{10\teta^2 L^2m^2}{(1-\gamma\rho)^2}\sum_{\ell=0}^{Q-1}\condE{\norm{\z_{t_1}^\ell - \1 z_{t_1}^{\T}}^2}{\cF_{t_1}} \\
        &\quad + \frac{5\eta_a^2 Q^2 L^2 m}{3(1-\gamma\rho)^2}\sum_{\ell=0}^{Q-1}\condE{\norm{\z_{t_1}^\ell - \1 z_{t_1}^{\T}}^2}{\cF_{t_1}} + \frac{4\eta_a^2 Q^2 L^2}{3(1-\gamma\rho)^2}\sum_{\ell=0}^{Q-1}\condE{\norm{\z_{t_1}^\ell - \1 z_{t_1}^{\T}}^2}{\cF_{t_1}},
    \end{aligned}
\end{equation}
where $\tc_2:= \brki{{(1-\gamma\rho)^2}/{10} - 10\teta^2 m^2 L^2 - \eta_a^2 Q^2 L^2 - \prti{4\eta_a^2 Q^2 L^2}/{3}}/{(1-\gamma\rho)^2 }.$
Letting $\teta\leq\prti{1-\gamma\rho}/\prti{20m\sqrt{L^2 + 1/\gamma^2}},\; \eta_a\leq \prti{1-\gamma\rho}/\prti{20QL},$
yields the desired result \eqref{eq:ztell_zt1_ub}. 

We now bound the term $\sum_{\ell=0}^{Q-1}\E\brki{\norm{\z_0^\ell - \1 z_0^{\T}}^2}$ following a similar procedure as in \eqref{eq:zt_ztell}. We have 
\begin{equation}
    \label{eq:z0_z0ell}
    \begin{aligned}
        \z_0^\ell&= \1 z_{0}^{\T}  - \eta_a \sum_{s=0}^{\ell-1}\brk{\bs{\Delta}_0^s + \nabla F(\x_0^s) - \nabla F(\1 x_0^{\T}) + \gamma^{-1}\prt{\1 z_0^{\T}  - \1 x_0^{\T}} } - \eta_a \ell \nabla F(\1 x_0^{\T}).
    \end{aligned}
\end{equation}
Taking squared norms and full expectations on both sides of \eqref{eq:z0_z0ell}, and invoking Assumptions~\ref{as:abc} and \ref{as:smooth}, and Lemma~\ref{lem:prox_wcvx} yields
\begin{equation}
    \label{eq:z0ell_z0_ub0}
    \begin{aligned}
       \frac{1}{4}\E\brk{\norm{\z_0^\ell - \1 z_0^{\T}}^2} &\leq \eta_a^2 \ell n\sigma^2 + \frac{\eta_a^2\ell L^2}{(1-\gamma\rho)^2}\sum_{s=0}^{\ell-1}\E\brk{\norm{\z_0^s - \1 z_0^{\T}}^2} + \frac{\eta_a^2\ell^2 n}{\gamma^2}\norm{\proxp{z_0} - z_0}^2 \\
       &\quad + \eta_a^2 \ell^2\sumn\norm{\nabla f_i(\proxp{z_0})}^2.
    \end{aligned}
\end{equation}
Summing over $\ell=0,1,\ldots, Q-1$ on both sides of \eqref{eq:z0ell_z0_ub0} and letting $\eta_a \leq (1-\gamma\rho)/(4QL)$ yields \eqref{eq:z0ell_z0_ub}.

\subsection{Proof of Lemma \ref{lem:zt2ell_zt2}}
\label{app:zt2ell_zt2}

Following similar derivations as in \eqref{eq:zt_ztell}, we have 
\begin{equation}
    \label{eq:zt2ell_zt2}
    \begin{aligned}
        \z_{t_2}^\ell 
        &= \1 z_{t_2}^{\T} - \eta_a \ell \Pi\brk{\bFnor{\1 z_{t_2}^{\T}} - \bFnor{\1 z_{t_1}^{\T}}} -\frac{\eta_a\ell}{Q}\sum_{\ell=0}^{Q-1} \Pi\brk{\bs\Delta_{t_1}^\ell + \nabla F(\x_{t_1}^\ell) - \nabla F(\1 x_{t_1}^{\T})} \\
        &\quad - \eta_a\ell\sum_{p=t_1 + 1}^{t_2 - 1}\Pi \y_p - \eta_a \ell \1 \brk{\Fnor{z_{t_2}}}^{\T} - \eta_a \sum_{s=0}^{\ell-1}\brk{\bs{\Delta}_{t_2}^s + \nabla F(\x_{t_2}^s) - \nabla F(\1 x_{t_2}^{\T})},
    \end{aligned}
\end{equation}
where we set $r = t_2$ in \eqref{eq:Picr} and substitute \eqref{eq:Pict1}. 
Taking the squared norm and conditional expectation on both sides of \eqref{eq:zt2ell_zt2}, and invoking Assumptions~\ref{as:abc} and \ref{as:smooth}, and Lemmas~\ref{lem:prox_wcvx} and \ref{lem:Fnor_Lip} yields
\begin{equation}
    \label{eq:zt2ell_zt2_norm}
    \begin{aligned}
        &\frac{1}{7}\condE{\norm{\z_{t_2}^\ell - \1 z_{t_2}^{\T}}^2}{\cF_{t_1}} \leq \eta_a^2\ell^2\Ln^2 n \condE{\norm{z_{t_1} - z_{t_2}}^2}{\cF_{t_1}} + \frac{\eta_a^2\ell^2 n\sigma^2}{Q} \\
        &\quad + \frac{\eta_a^2\ell^2 L^2}{Q(1-\gamma\rho)^2}\sum_{\ell=0}^{Q-1}\condE{\norm{\z_{t_1}^\ell - \1 z_{t_1}^{\T}}^2}{\cF_{t_1}} + \eta_a^2\ell^2 \condE{\norm{\sum_{p=t_1 + 1}^{t_2 - 1}\Pi \y_p}^2}{\cF_{t_1}} \\
        &\quad + \eta_a^2\ell^2 n \condE{\norm{\Fnor{z_{t_2}}}^2}{\cF_{t_1}} + \eta_a^2 \ell n\sigma^2 + \frac{\eta_a^2\ell L^2}{(1-\gamma\rho)^2} \sum_{s=0}^{\ell-1}\condE{\norm{\z_{t_2}^s - \1 z_{t_2}^{\T}}^2}{\cF_{t_1}}\\
        &\leq 3\eta_a^2\ell^2\Ln^2 n \condE{\norm{z_{t_1} - z_{t_2}}^2}{\cF_{t_1}} + 2\eta_a^2\ell^2 n \norm{\Fnor{z_{t_1}}}^2 + 2\eta_a^2\ell n\sigma^2 \\
        &\quad + \frac{\eta_a^2\ell^2 L^2}{Q(1-\gamma\rho)^2}\sum_{\ell=0}^{Q-1}\condE{\norm{\z_{t_1}^\ell - \1 z_{t_1}^{\T}}^2}{\cF_{t_1}} + \eta_a^2\ell^2 \condE{\norm{\sum_{p=t_1 + 1}^{t_2 - 1}\Pi \y_p}^2}{\cF_{t_1}} \\
        &\quad + \frac{\eta_a^2\ell L^2}{(1-\gamma\rho)^2} \sum_{s=0}^{\ell-1}\condE{\norm{\z_{t_2}^s - \1 z_{t_2}^{\T}}^2}{\cF_{t_1}}.
    \end{aligned}
\end{equation}
Substituting \eqref{eq:Piyr_norm} (setting $r = t_2$) into \eqref{eq:zt2ell_zt2_norm} and invoking Lemma~\ref{lem:prox_wcvx} leads to
\begin{equation}
    \label{eq:zt2ell_zt2_norm1}
    \begin{aligned}
        &\frac{1}{7}\condE{\norm{\z_{t_2}^\ell - \1 z_{t_2}^{\T}}^2}{\cF_{t_1}} \leq 3\eta_a^2\ell^2\Ln^2 n \condE{\norm{z_{t_1} - z_{t_2}}^2}{\cF_{t_1}} + 2\eta_a^2\ell^2 n \norm{\Fnor{z_{t_1}}}^2 + 6\eta_a^2\ell n\sigma^2 \\
        &\quad + \frac{5\eta_a^2\ell^2 L^2}{Q(1-\gamma\rho)^2}\sum_{\ell=0}^{Q-1}\condE{\norm{\z_{t_1}^\ell - \1 z_{t_1}^{\T}}^2}{\cF_{t_1}} + \frac{4\eta_a^2\ell^2 L^2}{Q(1-\gamma\rho)^2}\sum_{s=0}^{Q-1}\condE{\norm{\z_{t_2-1}^s - \1 z_{t_1}^{\T} }^2}{\cF_{t_1}} \\
        &\quad  + \frac{\eta_a^2\ell L^2}{(1-\gamma\rho)^2} \sum_{s=0}^{\ell-1}\condE{\norm{\z_{t_2}^s - \1 z_{t_2}^{\T}}^2}{\cF_{t_1}}.
    \end{aligned}
\end{equation}
Summing over $\ell=0,1,\ldots, Q-1$ on both sides of \eqref{eq:zt2ell_zt2_norm1} and letting $\eta_a \leq (1-\gamma\rho)/(6QL)$ yields 
\begin{equation}
    \label{eq:zt2ell_zt2_sum}
    \begin{aligned}
        &\sum_{\ell=0}^{Q-1}\condE{\norm{\z_{t_2}^\ell - \1 z_{t_2}^{\T}}^2}{\cF_{t_1}} \leq 8\eta_a^2Q^3\Ln^2 n \condE{\norm{z_{t_1} - z_{t_2}}^2}{\cF_{t_1}} + 6\eta_a^2Q^3 n \norm{\Fnor{z_{t_1}}}^2  \\
        &\quad + 24\eta_a^2Q^2 n\sigma^2 + \frac{40\eta_a^2Q^2 L^2}{3(1-\gamma\rho)^2}\sum_{\ell=0}^{Q-1}\condE{\norm{\z_{t_1}^\ell - \1 z_{t_1}^{\T}}^2}{\cF_{t_1}} \\
        &\quad + \frac{32\eta_a^2Q^2 L^2}{3(1-\gamma\rho)^2}\sum_{\ell=0}^{Q-1}\condE{\norm{\z_{t_2-1}^\ell - \1 z_{t_1}^{\T} }^2}{\cF_{t_1}}\\
        &\leq \frac{32\eta_a^2Q^2 L^2}{3(1-\gamma\rho)^2}\sum_{\ell=0}^{Q-1}\condE{\norm{\z_{t_2-1}^\ell - \1 z_{t_1}^{\T} }^2}{\cF_{t_1}} +  \frac{80\teta^2mL^2\eta_a^2 Q^2\Ln^2}{(1-\gamma\rho)^2}\sum_{r=t_{1}}^{t_2 - 1}\sum_{\ell=0}^{Q-1}\condE{\norm{\z_{r}^\ell - \1 z_{t_1}^{\T}}^2}{\cF_{t_1}}\\
        &\quad + 6\eta_a^2Q^3\prt{1 + \frac{8\teta^2 m^2\Ln^2}{3} + \frac{128\teta^4 m^4\Ln^2}{3\gamma^2(1-\gamma\rho)^2}} n \norm{\Fnor{z_{t_1}}}^2 \\
        &\quad + 24\eta_a^2Q^2 n\sigma^2\prt{1 + \frac{10\teta^2 m^2\Ln^2}{3n}} + \frac{40\eta_a^2Q^2 L^2}{3(1-\gamma\rho)^2}\sum_{\ell=0}^{Q-1}\condE{\norm{\z_{t_1}^\ell - \1 z_{t_1}^{\T}}^2}{\cF_{t_1}},
    \end{aligned}
\end{equation}
where we invoked the relation for $z_{t_1}$ and $z_{t_2}$ in \eqref{eq:zt1t2} and the upper bound for $\condEi{\norm{e_{t_1:t_2}}^2}{\cF_{t_1}}$ in \eqref{eq:et1t_ub}. 
Letting $\teta\leq (1-\gamma\rho)/[20m\sqrt{L^2 + 1/\gamma^2}]$ yields 
\begin{equation}
    \label{eq:zt2ell_zt2_ub0}
        \begin{aligned}
            &\sum_{\ell=0}^{Q-1}\E\brk{\norm{\z_{t_2}^\ell - \1 z_{t_2}^{\T}}^2}
            \leq  \frac{80\teta^2mL^2\eta_a^2 Q^2\Ln^2}{(1-\gamma\rho)^2}\sum_{r=t_{1}}^{t_2 - 1}\sum_{\ell=0}^{Q-1}\E\brk{\norm{\z_{r}^\ell -  \1z_{t_1}^{\T}}^2}\\
            &\quad + 7\eta_a^2Q^3 n \E\brk{\norm{\Fnor{z_{t_1}}}^2} + 25\eta_a^2Q^2 n\sigma^2 + \frac{40\eta_a^2Q^2L^2}{3(1-\gamma\rho)^2}\sum_{\ell=0}^{Q-1}\E\brk{\norm{\z_{t_1}^\ell - \1 z_{t_1}^{\T}}^2}\\
            &\quad + \frac{32\eta_a^2Q^2 L^2}{3(1-\gamma\rho)^2}\sum_{\ell=0}^{Q-1}\E\brk{\norm{\z_{t_2-1}^\ell - \1 z_{t_1}^{\T}}^2}.
        \end{aligned}
\end{equation}

Regarding the last term on the right-hand side of \eqref{eq:zt2ell_zt2_ub0}, we have
    \begin{align}
        \label{eq:sum_align_t2m1}
        \sum_{\ell=0}^{Q-1} \E\brk{\norm{\1 z_{t_1}^{\T} - \z_{t_2-1}^\ell}^2} \leq \sum_{r = t_1}^{t_2-1} \sum_{\ell=0}^{Q-1} \E\brk{\norm{\1 z_{t_1}^{\T} - \z_{r}^\ell}^2}.
    \end{align}
This term has the same form as the first term on the right-hand side of \eqref{eq:zt2ell_zt2_ub0}. We handle it using the following relation:
    \begin{equation}
        \label{eq:sum_align_t1t2}
        \begin{aligned}
            \sum_{r=t_1}^{t_2-1} \sum_{\ell=0}^{Q-1} \E\brk{\norm{\1 z_{t_1}^{\T} - \z_{r}^\ell}^2} &= \sum_{r=t_1 + 1}^{t_2} \sum_{\ell=0}^{Q-1} \E\brk{\norm{\1 z_{t_1}^{\T} - \z_{r}^\ell}^2} + \sum_{\ell=0}^{Q-1} \E\brk{\norm{\1 z_{t_1}^{\T} - \z_{t_1}^\ell}^2} \\
            &\quad - \sum_{\ell=0}^{Q-1} \E\brk{\norm{\1 z_{t_1}^{\T} - \z_{t_2}^\ell}^2}\\
            &\leq \sum_{r=t_1 + 1}^{t_2} \sum_{\ell=0}^{Q-1} \E\brk{\norm{\1 z_{t_1}^{\T} - \z_{r}^\ell}^2} + \sum_{\ell=0}^{Q-1} \E\brk{\norm{\1 z_{t_1}^{\T} - \z_{t_1}^\ell}^2}.
        \end{aligned}
    \end{equation}

Substituting \eqref{eq:sum_align_t2m1} and \eqref{eq:sum_align_t1t2} into \eqref{eq:zt2ell_zt2_ub0} and letting $\teta \leq 1/(20m\Ln)$ yields the desired result~\eqref{eq:zt2ell_zt2_ub}.

\subsection{Proof of Lemma \ref{lem:cL_descent}}
\label{app:cL_descent}

Substituting \eqref{eq:sum_align_t1t2} into \eqref{eq:cH_descent} yields
\begin{equation}
    \label{eq:cH_descent_align}
    \begin{aligned}
        &\E\brk{\cH_{t_2}}\leq \E\brk{\cH_{t_1}} - \frac{\teta m}{2}\prt{\cC_0 - \frac{32\teta^2 m^2}{\gamma^2(1-\gamma\rho)^2}}\E\brk{\norm{\Fnor{z_{t_1}}}^2} + \frac{5\teta \sigma^2}{nQ} \\
        &\quad + \frac{5\teta L^2}{nQ(1-\gamma\rho)^2}\sum_{t=t_1 + 1}^{t_2} \sum_{\ell=0}^{Q-1} \E\brk{\norm{\1 z_{t_1}^{\T} - \z_{t}^\ell}^2} + \frac{5\teta L^2}{nQ(1-\gamma\rho)^2}\sum_{\ell =0}^{Q-1}\E\brk{\norm{\z_{t_1}^{\ell} - \1 z_{t_1}^{\T}}^2}.
    \end{aligned}
\end{equation}
Combining \eqref{eq:zt2ell_zt2_ub} and \eqref{eq:cH_descent_align} and invoking the definition of $\cL_t$ in \eqref{eq:lya_cL} leads to 
\begin{equation}
    \label{eq:cL_t1t2_s1}
    \begin{aligned}
        &\E\brk{\cL_{t_2}}\leq \E\brk{\cH_{t_1}} + \brk{\frac{5}{25} + \frac{14\eta_a^2 Q^2L^2}{(1-\gamma\rho)^2}} \frac{25\teta L^2}{nQ(1-\gamma\rho)^2}\sum_{\ell=0}^{Q-1}\E\brk{\norm{\z_{t_1}^\ell - \1 z_{t_1}^{\T}}^2} \\
        &\quad - \frac{\teta m}{2}\brk{\cC_0 - \frac{32\teta^2 m^2}{\gamma^2(1-\gamma\rho)^2} - \frac{350\eta_a^2 Q^2 L^2}{m(1-\gamma\rho)^2}}\E\brk{\norm{\Fnor{z_{t_1}}}^2} + \frac{5\teta\sigma^2}{nQ} \\
        &\quad + \brk{\frac{1}{5} + \frac{11\eta_a^2 Q^2 L^2}{(1-\gamma\rho)^2}} \frac{25\teta L^2}{nQ(1-\gamma\rho)^2}\sum_{t=t_1 + 1}^{t_2} \sum_{\ell=0}^{Q-1} \E\brk{\norm{\1 z_{t_1}^{\T} - \z_{t}^\ell}^2} + \frac{625\teta \eta_a^2 Q L^2\sigma^2}{(1-\gamma\rho)^2}.
    \end{aligned}
\end{equation}

Letting $\eta_a\leq (1-\gamma\rho)/(20QL)$ gives
\begin{align*}
    \frac{1}{5} + \frac{11\eta_a^2 Q^2 L^2}{(1-\gamma\rho)^2} \leq \frac{1}{5} + \frac{14\eta_a^2 Q^2 L^2}{(1-\gamma\rho)^2}\leq \frac{1}{4}.
\end{align*}
Substituting \eqref{eq:ztell_zt1_ub} into \eqref{eq:cL_t1t2_s1} yields
\begin{equation}
    \label{eq:cL_t1t2_s2}
    \begin{aligned}
        &\E\brk{\cL_{t_2}}\leq \E\brk{\cH_{t_1}} + \brk{\frac{1}{4} + \frac{55\teta^2 m^2 L^2}{2(1-\gamma\rho)^2} + \frac{33\eta_a^2 Q^2 L^2 m}{4(1-\gamma\rho)^2} } \frac{25\teta L^2}{nQ(1-\gamma\rho)^2}\sum_{\ell=0}^{Q-1}\E\brk{\norm{\z_{t_1}^\ell - \1 z_{t_1}^{\T}}^2} \\
        &\quad - \frac{\teta m}{2}\brk{\cC_0 - \frac{32\teta^2 m^2}{\gamma^2(1-\gamma\rho)^2} - \frac{350\eta_a^2 Q^2 L^2}{m(1-\gamma\rho)^2}- \frac{825\teta^2 m^2 L^2}{2(1-\gamma\rho)^2} - \frac{275\eta_a^2 Q^2 L^2}{2(1-\gamma\rho)^2}}\E\brk{\norm{\Fnor{z_{t_1}}}^2}  \\
        &\quad + \frac{5\teta\sigma^2}{nQ} + \frac{625\teta \eta_a^2 Q L^2\sigma^2}{(1-\gamma\rho)^2} + \frac{1375\teta L^2\brk{\eta_a^2 Q^2m + \frac{2\teta^2 m^2}{n}}\sigma^2}{Q(1-\gamma\rho)^2}.
    \end{aligned}
\end{equation}
Finally, letting $\teta\leq (1-\gamma\rho)/(70m \sqrt{L^2 + 1/\gamma^2})$ and $\eta_a\leq (1-\gamma\rho)/(70Q L\sqrt{m})$ yields the desired result~\eqref{eq:cL_descent}.

\section{Proof of Main Results}
\label{app:thm}

This section provides proofs of Theorems~\ref{thm:ncvx} and \ref{thm:PL}. Throughout, we follow the multistep analysis described in the main text and use the subsequence $\{\tk_j\}_{j=0}^R$ introduced in Figure~\ref{fig:tkj}.

\subsection{Proof of Theorem \ref{thm:ncvx}}
\label{app:ncvx}

We start by relating the average $\sum_{t=0}^{T-1}\E\brki{\normi{\Fnor{z_t}}^2}$ to the subsequence terms \\
$\sum_{j=0}^R \E\brki{\normi{\Fnor{z_{\tk_j}}}^2}$ and error $\sum_{j=0}^R\sum_{s=0}^{m-1}\E\brki{\normi{z_{\tk_j+s}-z_{\tk_j}}^2}$.
Letting $\teta\leq 1/(m\Ln)$ leads to 
\begin{equation}
    \label{eq:Fnor_t}
    \begin{aligned}
        &\frac{1}{T}\sum_{t=0}^{T-1}\E\brk{\norm{\Fnor{z_t}}^2} = \frac{1}{T}\sum_{j=0}^{R-1}\sum_{s=0}^{m-1}\E\brk{\norm{\Fnor{z_{\tk_j + s}}}^2} + \sum_{s=0}^{S-1} \E\brk{\norm{\Fnor{z_{\tk_R + s}}}^2} \\
        &\leq \frac{1}{T}\sum_{j=0}^{R}\sum_{s=0}^{m-1}\E\brk{\norm{\Fnor{z_{\tk_j + s}}}^2}\\
        &\leq \frac{2m}{T}\sum_{j=0}^{R}\E\brk{\norm{\Fnor{z_{\tk_j}}}^2} + \frac{2\Ln^2}{T}\sum_{j=0}^{R}\sum_{s=0}^{m-1}\E\brk{\norm{z_{\tk_j + s} - z_{\tk_j}}^2}.
    \end{aligned}
\end{equation}

We now bound $\sum_{j=0}^R\sum_{s=0}^{m-1}\E\brki{\normi{z_{\tk_j + s} - z_{\tk_j}}^2}$ starting from \eqref{eq:zt_zt1} ($t_1 = \tk_j$ and $t = \tk_j + s$).
\begin{equation}
    \label{eq:ztkjs_ztkj}
    \begin{aligned}
        &\E\brk{\norm{z_{\tk_j + s} - z_{\tk_j}}^2} \leq 2\teta^2 s^2 \E\brk{\norm{\Fnor{z_{\tk_j}}}^2} + 2\E\brk{\norm{e_{\tk_j:(\tk_j + s)}}^2}\\
        &\leq \brk{2\teta^2 s^2 + \frac{32\teta^4 m^3s}{\gamma^2(1-\gamma\rho)^2} }\E\brk{\norm{\Fnor{z_{\tk_j}}}^2} + \frac{10\teta^2 s \sigma^2}{nQ} \\
        &\quad + \frac{10\teta^2 L^2 s}{nQ(1-\gamma\rho)^2}\sum_{r=\tk_j}^{\tk_{j+1} - 1}\sum_{\ell=0}^{Q-1}\E\brk{\norm{\z_r^\ell - \1 z_{\tk_j}^{\T}}^2},
    \end{aligned}
\end{equation}
where we invoked \eqref{eq:et1t_ub} by setting $t_1 = \tk_j$ and $t = \tk_{j} + s$. Summing over $s=0,1,\ldots, m-1$ and $j=0,1,\ldots,R$ yields
\begin{equation}
    \label{eq:ztkjs_ztkj_sum}
    \begin{aligned}
        &\sum_{j=0}^R\sum_{s=0}^{m-1}\E\brk{\norm{z_{\tk_j + s} - z_{\tk_j}}^2} \leq \brk{1 + \frac{16\teta^2 m^2}{\gamma^2(1-\gamma\rho)^2} }\teta^2 m^3 \sum_{j=0}^R\E\brk{\norm{\Fnor{z_{\tk_j}}}^2} \\
        &\quad  + \frac{5\teta^2 m^2 (R+1) \sigma^2}{nQ} + \frac{5\teta^2 L^2 m^2}{nQ(1-\gamma\rho)^2}\sum_{j=0}^R\sum_{r=\tk_j}^{\tk_{j+1} - 1}\sum_{\ell=0}^{Q-1}\E\brk{\norm{\z_r^\ell - \1 z_{\tk_j}^{\T}}^2}.
    \end{aligned}
\end{equation}
Substituting \eqref{eq:ztkjs_ztkj_sum} into \eqref{eq:Fnor_t} and letting $\teta\leq (1-\gamma\rho)/(4m\Ln)$ yields
\begin{equation}
    \label{eq:Fnor_t_ub}
    \begin{aligned}
        &\frac{1}{T}\sum_{t=0}^{T-1}\E\brk{\norm{\Fnor{z_t}}^2} \leq \frac{17m}{8T}\sum_{j=0}^{R}\E\brk{\norm{\Fnor{z_{\tk_j}}}^2} + \frac{10\teta^2m^2 \Ln^2 (R+1) \sigma^2}{nQT}\\
        &\quad + \frac{10\teta^2 L^2 m^2\Ln^2}{nQ(1-\gamma\rho)^2T}\sum_{j=0}^R\sum_{r=\tk_j}^{\tk_{j+1} - 1}\sum_{\ell=0}^{Q-1}\E\brk{\norm{\z_r^\ell - \1 z_{\tk_j}^{\T}}^2}.
    \end{aligned}
\end{equation}

We now handle the last term in \eqref{eq:Fnor_t_ub} based on Lemma~\ref{lem:sum_ub}. Setting $t_1 = \tk_j$ and $t_2 = \tk_{j+1}$ in \eqref{eq:ztell_zt1_ub} 
and summing over $j=0,1,\ldots,R$ yields 
\begin{equation}
    \label{eq:sum_ub_ztkj_sum}
    \begin{aligned}
        &\sum_{j=0}^R\sum_{t=\tk_j + 1}^{\tk_{j + 1}}\sum_{\ell=0}^{Q-1}\E\brk{\norm{\z_{t}^\ell - \1 z_{\tk_j}^{\T}}^2}\leq 11\brk{3\teta^2 m^2  + \eta_a^2 Q^2 }nmQ\sum_{j=0}^R\E\brk{\norm{\Fnor{z_{\tk_j}}}^2}\\
        &\quad + 55(R+1)\brk{\eta_a^2 Q^2m + \frac{2\teta^2 m^2}{n}}n\sigma^2 + \frac{110\teta^2 L^2m^2}{(1-\gamma\rho)^2}\sum_{j=1}^R\sum_{\ell=0}^{Q-1}\E\brk{\norm{ \z_{\tk_j}^\ell - \1 z_{\tk_j}^{\T}}^2} \\
        &\quad + \frac{33\eta_a^2 Q^2 L^2 m}{(1-\gamma\rho)^2}\sum_{j=1}^R\sum_{\ell=0}^{Q-1}\E\brk{\norm{\z_{\tk_j}^\ell - \1 z_{\tk_j}^{\T}}^2} + \frac{110\teta^2 L^2m^2}{(1-\gamma\rho)^2}\sum_{\ell=0}^{Q-1}\E\brk{\norm{ \z_{\tk_0}^\ell - \1 z_{\tk_0}^{\T}}^2} \\
        &\quad + \frac{33\eta_a^2 Q^2 L^2 m}{(1-\gamma\rho)^2}\sum_{\ell=0}^{Q-1}\E\brk{\norm{\z_{\tk_0}^\ell - \1 z_{\tk_0}^{\T}}^2}\\
        &\leq 11\brk{3\teta^2 m^2  + \eta_a^2 Q^2 }nmQ\sum_{j=0}^R\E\brk{\norm{\Fnor{z_{\tk_j}}}^2}\\
        &\quad + 55(R+1)\brk{\eta_a^2 Q^2m + \frac{2\teta^2 m^2}{n} + \frac{6\teta^2m^2\eta_a^2 Q^2 L^2}{(1-\gamma\rho)^2} + \frac{99\eta_a^4 Q^4mL^2}{55(1-\gamma\rho)^2}}n\sigma^2 \\
        &\quad + \frac{22\eta_a^2Q^3 n}{\gamma^2(1-\gamma\rho)^2}\prt{10\teta^2 m^2 L^2 + 3\eta_a^2Q^2L^2 m}\norm{\proxp{z_0} - z_0}^2 \\
        &\quad + \frac{22\eta_a^2Q^3 }{(1-\gamma\rho)^2}\prt{10\teta^2 m^2 L^2 + 3\eta_a^2Q^2L^2 m}\sumn\norm{\nabla f_i(\proxp{z_0})}^2\\
        &\quad + \frac{110\teta^2 L^2m^2}{(1-\gamma\rho)^2}\sum_{j=1}^R\sum_{\ell=0}^{Q-1}\E\brk{\norm{ \z_{\tk_j}^\ell - \1 z_{\tk_j}^{\T}}^2} + \frac{33\eta_a^2 Q^2 L^2 m}{(1-\gamma\rho)^2}\sum_{j=1}^R\sum_{\ell=0}^{Q-1}\E\brk{\norm{\z_{\tk_j}^\ell - \1 z_{\tk_j}^{\T}}^2},
    \end{aligned}
\end{equation}
where we invoked \eqref{eq:z0ell_z0_ub} in the last inequality by noting that $\tk_0 = 0$.
Notably, the left-hand side of \eqref{eq:sum_ub_ztkj_sum} can be split as 
\begin{equation}
    \label{eq:sum_ub_ztkj_sum_left}
    \begin{aligned}
        \sum_{j=0}^R\sum_{t=\tk_j + 1}^{\tk_{j + 1}}\sum_{\ell=0}^{Q-1}\E\brk{\norm{\z_{t}^\ell - \1 z_{\tk_j}^{\T}}^2} &= \sum_{j=1}^R\sum_{\ell=0}^{Q-1}\E\brk{\norm{\z_{\tk_j}^\ell - \1 z_{\tk_j}^{\T}}^2}\\
        &\quad + \sum_{j=0}^{R}\sum_{t=\tk_j + 1}^{\tk_{j + 1}}\sum_{\ell=0}^{Q-1}\E\brk{\norm{\z_{t}^\ell - \1 z_{\tk_j}^{\T}}^2}.
    \end{aligned}
\end{equation}
Letting $\teta\leq \prti{1-\gamma\rho}/\prti{24 m L},\; \eta_a\leq \prti{1-\gamma\rho}/\prti{24QL\sqrt{m}}$ yields
\begin{equation}
    \label{eq:sum_ub_ztkj_sum_ub0}
    \begin{aligned}
        &\sum_{j=0}^R\sum_{t=\tk_j + 1}^{\tk_{j + 1}}\sum_{\ell=0}^{Q-1}\E\brk{\norm{\z_{t}^\ell - \1 z_{\tk_j}^{\T}}^2}
        \leq 22\brk{3\teta^2 m^2  + \eta_a^2 Q^2 }nmQ\sum_{j=0}^R\E\brk{\norm{\Fnor{z_{\tk_j}}}^2}\\
        &\quad + 110(R+1)\brk{\eta_a^2 Q^2m + \frac{2\teta^2 m^2}{n} + \frac{6\teta^2m^2\eta_a^2 Q^2 L^2}{(1-\gamma\rho)^2} + \frac{99\eta_a^4 Q^4mL^2}{55(1-\gamma\rho)^2}}n\sigma^2 \\
        &\quad + \frac{44\eta_a^2Q^3 n}{\gamma^2(1-\gamma\rho)^2}\prt{10\teta^2 m^2 L^2 + 3\eta_a^2Q^2L^2 m}\norm{\proxp{z_0} - z_0}^2 \\
        &\quad + \frac{44\eta_a^2Q^3 }{(1-\gamma\rho)^2}\prt{10\teta^2 m^2 L^2 + 3\eta_a^2Q^2L^2 m}\sumn\norm{\nabla f_i(\proxp{z_0})}^2.
    \end{aligned}
\end{equation}
Noting that 
    \begin{align*}
        \sum_{j=0}^R\sum_{t=\tk_j}^{\tk_{j+1}-1} \sum_{\ell=0}^{Q-1} \E\brk{\norm{\1 z_{\tk_j}^{\T} - \z_{t}^\ell}^2} &\leq \sum_{j=0}^R\sum_{t=\tk_j + 1}^{\tk_{j + 1}}\sum_{\ell=0}^{Q-1}\E\brk{\norm{\z_{t}^\ell - \1 z_{\tk_j}^{\T}}^2} + \sum_{\ell=0}^{Q-1}\E\brk{\norm{\z_{0}^\ell - \1 z_{0}^{\T}}^2},
    \end{align*}
we obtain from \eqref{eq:sum_ub_ztkj_sum_ub0} and \eqref{eq:z0ell_z0_ub} that
\begin{equation}
    \label{eq:sum_ub_ztkj_sum_ub}
    \begin{aligned}
        &\sum_{j=0}^R\sum_{t=\tk_j}^{\tk_{j+1}-1} \sum_{\ell=0}^{Q-1} \E\brk{\norm{\1 z_{\tk_j}^{\T} - \z_{t}^\ell}^2} \leq 22\brk{3\teta^2 m^2  + \eta_a^2 Q^2 }nmQ\sum_{j=0}^R\E\brk{\norm{\Fnor{z_{\tk_j}}}^2}\\
        &\quad + 110(R+1)\brk{\eta_a^2 Q^2m + \frac{2\teta^2 m^2}{n} + \frac{6\teta^2m^2\eta_a^2 Q^2 L^2}{(1-\gamma\rho)^2} + \frac{99\eta_a^4 Q^4mL^2}{55(1-\gamma\rho)^2} + \frac{3\eta_a^2 Q^2}{R + 1}}n\sigma^2 \\
        &\quad + \frac{44\eta_a^2Q^3 n}{\gamma^2(1-\gamma\rho)^2}\brk{10\teta^2 m^2 L^2 + 3\eta_a^2Q^2L^2 m + \frac{(1-\gamma\rho)^2}{22}}\norm{\proxp{z_0} - z_0}^2 \\
        &\quad + \frac{44\eta_a^2Q^3 }{(1-\gamma\rho)^2}\brk{10\teta^2 m^2 L^2 + 3\eta_a^2Q^2L^2 m + \frac{(1-\gamma\rho)^2}{22}}\sumn\norm{\nabla f_i(\proxp{z_0})}^2.
    \end{aligned}
\end{equation}
Substituting \eqref{eq:sum_ub_ztkj_sum_ub} into \eqref{eq:Fnor_t_ub}, letting $\teta\leq \min\crki{(1-\gamma\rho)/(20m L), (1-\gamma\rho)/(4m\Ln)}$, and $\eta_a \leq (1-\gamma\rho)/(20 QL)$ yields

\begin{equation}
    \label{eq:Fnor_t_ub1}
    \begin{aligned}
        &\frac{1}{T}\sum_{t=0}^{T-1}\E\brk{\norm{\Fnor{z_t}}^2} \leq \frac{3m}{T}\sum_{j=0}^{R}\E\brk{\norm{\Fnor{z_{\tk_j}}}^2} + \frac{10\teta^2m^2 \Ln^2 (R+1) \sigma^2}{nQT}\\
        &\quad + \frac{1100(R+1)\teta^2 m^2 L^2 \Ln^2}{nQ(1-\gamma\rho)^2 T}\brk{2\eta_a^2 Q^2m + \frac{2\teta^2 m^2}{n} + \frac{3\eta_a^2 Q^2}{R + 1}}n\sigma^2 \\
        &\quad + \frac{440\teta^2 m^2 L^2 \Ln^2\eta_a^2Q^2 }{\gamma^2(1-\gamma\rho)^4T}\norm{\proxp{z_0} - z_0}^2 \\
        &\quad + \frac{440\teta^2 m^2 L^2\Ln^2\eta_a^2Q^2 }{n(1-\gamma\rho)^4T}\sumn\norm{\nabla f_i(\proxp{z_0})}^2.
    \end{aligned}
\end{equation}

It remains to bound $\sum_{j=0}^R\E\brki{\normi{\Fnor{z_{\tk_j}}}^2}$ in \eqref{eq:Fnor_t_ub1}. This is obtained from the descent property of $\cL_t$ established in Lemma~\ref{lem:cL_descent}.
Setting $t_1 = \tk_j$ and $t_2 = \tk_{j+1}$ in \eqref{eq:cL_descent} and summing over $j=0,1,\ldots,R$ yields
\begin{equation}
    \label{eq:cL_tkj1_sum}
    \begin{aligned}
        \sum_{j=0}^R\E\brk{\norm{\Fnor{z_{\tk_j}}}^2}&\leq \frac{9\crk{\E\brk{\cL_{\tk_0}} - \E\brki{\cL_{\tk_{R+1}}}}}{\teta m} + \frac{54 \sigma^2 (R+1)}{m nQ}+ \frac{18000 \eta_a^2 QL^2 \sigma^2(R+1)}{(1-\gamma\rho)^2}.
    \end{aligned}
\end{equation}

Substituting \eqref{eq:cL_tkj1_sum} into \eqref{eq:Fnor_t_ub1}, invoking \eqref{eq:z0ell_z0_ub}, noting that $m(R+1)\leq 2T$, and letting $\teta\leq (1-\gamma\rho)/(70m\Ln)$ and $\eta_a\leq (1-\gamma\rho)/(70Q\sqrt{m(L^2 + 1/\gamma^2)})$ yields the desired result~\eqref{eq:ncvx}.

For $\teta$, $\eta_a$, $m$, and $\gamma$ satisfying \eqref{eq:ncvx_teta}, we have $1/m\leq \sqrt{9(L+\rho)nQ\Delta_\psi/(\sigma^2 T)}$, and
\begin{equation}
    \label{eq:teta_res}
    \begin{aligned}
        &\frac{\Delta_\psi}{\teta T} = 320\sqrt{\frac{\sigma^2(\rho + L)\Delta_\psi}{nQ T}},\; \teta^2\leq \frac{nQ\Delta_\psi}{320^2 (L + \rho) T \sigma^2},\;\eta_a^2\leq \frac{\Delta_\psi}{240^2 (L + \rho) T Q\sigma^2}.
    \end{aligned}
\end{equation}

Substituting \eqref{eq:teta_res} into \eqref{eq:ncvx} yields \eqref{eq:ncvx_order}.

\subsection{Proof of Theorem \ref{thm:PL}}
\label{app:PL}

Since $\Fnor{z_t}\in\partial \psi(\proxp{z})$ and $x_t = \proxp{z_t}$, Assumption~\ref{as:PL} implies \eqref{eq:PL_Fnor_sketch}.
    Therefore,
    \begin{equation}
        \label{eq:PL_cH}
        \frac{2\mu}{1 + \gamma\cC_0\mu}\prt{\cH_t - \psi^*} \leq \norm{\Fnor{z_t}}^2,\;\forall t\geq 0.
    \end{equation}
    Letting $t_1 = \tk_j$ and $t_2 = \tk_{j+1}$ in \eqref{eq:cL_t1t2_s2} and invoking \eqref{eq:PL_cH} yields 
    \begin{equation}
        \label{eq:cL_s2}
        \begin{aligned}
            &\E\brk{\cL_{\tk_{j + 1}}} - \psi^* \leq \brk{1 - \frac{2\teta m\mu}{9(1+\gamma\cC_0 \mu)}}\crk{\E\brk{\cH_{\tk_j}} - \psi^*} + \frac{6\teta \sigma^2 }{nQ} + \frac{2000\teta\eta_a^2 L^2 Qm\sigma^2}{(1-\gamma\rho)^2} \\
            &\quad + \brk{\frac{1}{4} + \frac{55\teta^2 m^2 L^2}{2(1-\gamma\rho)^2} + \frac{33\eta_a^2 Q^2 L^2 m}{4(1-\gamma\rho)^2} } \frac{25\teta L^2}{nQ(1-\gamma\rho)^2}\sum_{\ell=0}^{Q-1}\E\brk{\norm{\z_{\tk_j}^\ell - \1 z_{\tk_j}^{\T}}^2}.
        \end{aligned}
    \end{equation}
    Letting 
    \begin{align*}
        \teta\leq \min\crk{\frac{1-\gamma\rho}{20mL }, \frac{1+\gamma\mu \cC_0}{m\mu}},\; \eta_a\leq \frac{1-\gamma\rho}{20\sqrt{m}QL}
    \end{align*}
    leads to 
    \begin{align*}
        \frac{1}{4} + \frac{55\teta^2 m^2 L^2}{2(1-\gamma\rho)^2} + \frac{33\eta_a^2 Q^2 L^2 m}{4(1-\gamma\rho)^2}\leq 1 - \frac{2\teta m\mu}{9(1 + \gamma\cC_0 \mu)}.
    \end{align*}
    Consequently, we obtain from \eqref{eq:cL_s2} that
    \begin{equation}
        \label{eq:cLtkj1_contr}
        \begin{aligned}
            \E\brk{\cL_{\tk_{j + 1}}} &\leq \brk{1 - \frac{2\teta m\mu}{9(1 + \gamma\cC_0 \mu)}}\E\brk{\cL_{\tk_j}} + \frac{6\teta \sigma^2 }{nQ} + \frac{2000\teta\eta_a^2 L^2 Qm\sigma^2}{(1-\gamma\rho)^2}.
        \end{aligned}
    \end{equation}

    For $\tk_R\leq t\leq \tk_R + S = T$, a similar derivation to \eqref{eq:cLtkj1_contr} yields
    \begin{equation}
        \label{eq:cLtkj1_contr_R}
        \begin{aligned}
            \E\brk{\cL_{\tk_{R} + S}} &\leq \brk{1 - \frac{2\teta S\mu}{9(1 + \gamma\cC_0 \mu)}}\E\brk{\cL_{\tk_R}}+ \frac{6\teta \sigma^2 }{nQ} + \frac{2000\teta\eta_a^2 L^2 Qm\sigma^2}{(1-\gamma\rho)^2}.
        \end{aligned}
    \end{equation}

    We then unroll the recursion \eqref{eq:cLtkj1_contr} across the subsequence $\{\tk_j\}$ together with \eqref{eq:cLtkj1_contr_R} to obtain
    \begin{equation}
        \label{eq:cL_t}
        \begin{aligned}
            &\E\brk{\cL_{T}} \leq  \brk{1 - \frac{2\teta S\mu}{9(1 + \gamma\cC_0 \mu)}}\brk{1 - \frac{2\teta m\mu}{9(1 + \gamma\cC_0 \mu)}}^R\E\brk{\cL_0} + \frac{6\teta \sigma^2}{nQ} + \frac{2000\teta\eta_a^2 L^2 Qm\sigma^2}{(1-\gamma\rho)^2}\\
            &\quad + \frac{27 (1+\gamma\mu \cC_0) \sigma^2}{ \mu mnQ} + \frac{9000(1+\gamma\mu\cC_0) \eta_a^2 Q L^2 \sigma^2}{ \mu (1-\gamma\rho)^2}\\
            &\leq \brk{1 - \frac{2\teta S\mu}{9(1 + \gamma\cC_0 \mu)}}\brk{1 - \frac{2\teta m\mu}{9(1 + \gamma\cC_0 \mu)}}^R\E\brk{\cL_0} + \frac{36\sigma^2}{\mu mnQ} + \frac{12100\eta_a^2 Q L^2 \sigma^2}{ \mu (1-\gamma\rho)^2},
        \end{aligned}
    \end{equation}
    where we let $\teta \leq (1 + \gamma\mu\cC_0)/(10 m\mu)$ and $\gamma\leq 1/(5\mu)$. 

    Finally, using the inequality $ (1-x)^k \leq \exp(-kx),\;\forall x\in[0,1],\ k\geq 0,$
    we obtain \eqref{eq:linear} from \eqref{eq:cL_t}.

\vskip 0.2in
\bibliography{reference_all}

\end{document}